\newtheorem{theorem}{Theorem}[section]
\newtheorem{lemma}{Lemma}[section]
\newtheorem{corollary}{Corollary}[theorem]
\theoremstyle{definition}
\theoremstyle{remark}
\newcommand{\KL}{\textsf{KL}}
\newcommand{\MDL}{\textnormal{MDL}}
\newcommand{\comp}{\mathsf{c}}
\newcommand{\abs}[1]{\left\lvert{#1}\right\rvert}
\newcommand{\removed}[1]{}
\newtheorem*{restatethm}{\theoremname} 
\newcommand{\restate}[2]{%
    \def\theoremname{\textbf{\autoref{#2}}} 
}
\title{Quantifying Overfitting along the Regularization Path for Two-Part-Code MDL in Supervised Classification
}
\author{
  Xiaohan Zhu\\
  The University of Chicago \\
  \texttt{xiaohanz@uchicago.edu} \\
   \And
  Nathan Srebro \\
  Toyota Technological Institute at Chicago\\
  \texttt{nati@ttic.edu} \\
}
\begin{document}
\maketitle

\begin{abstract}
We provide a complete characterization of the entire regularization curve of a modified two-part-code Minimum Description Length (MDL) learning rule for binary classification, based on an arbitrary prior or description language.  \citet{GL} previously established the lack of asymptotic consistency, from an agnostic PAC (frequentist worst case) perspective, of the MDL rule with a penalty parameter of $\lambda=1$, suggesting that it underegularizes. Driven by interest in understanding how benign or catastrophic under-regularization and overfitting might be, we obtain a precise quantitative description of the worst case limiting error as a function of the regularization parameter $\lambda$ and noise level (or approximation error), significantly tightening the analysis of \citeauthor{GL} for $\lambda=1$ and extending it to all other choices of $\lambda$.
\end{abstract}

\keywords{Minimum Description Length \and overfitting \and entire regularization curve}

\section{Introduction}
In this paper, we consider the modified two-part-code Minimum Description Length (MDL) learning rule in supervised binary classification, given by:
\begin{align}
    \MDL_{\lambda}(S) &= \underset{h:\mathcal{X}\rightarrow\{0,1\}}{\text{arg min }} \lambda(-\log \pi(h))+ \log {m \choose mL_S(h)} \notag\\
    &\approx \underset{h:\mathcal{X}\rightarrow\{0,1\}}{\text{arg min }} \lambda(-\log \pi(h)) + mH(L_S(h)),\label{eq:mdl-intro}
\end{align}
where $L_S(h)$ is the (zero-one) training error on a labeled training set $S$ of size $m$, $H(\cdot)$ is the binary entropy and $\pi$ is a chosen prior over predictor $h$ (see \autoref{formal setup} for a complete description).

The case $\lambda=1$\removed{, as suggested by \cite{GL},} can be thought of as the length of a two-part-code description for the labels in the sample, where the encoding is specified by the prior $\pi$.  The first term corresponds to the length of the encoding of the chosen predictor $h$ using an optimal coding for source $\pi$. The second term corresponds to encoding the labels by indicating how they differ from $h$.  This view is also related to viewing $\MDL_1$ as a Maximum A-Posterior predictor, selecting the predictor maximizing the posterior $\Pr\left(h\middle| S\right)$, where $h\sim\pi$ and the labels in the sample are then generated by flipping the output of $h$ with noise probability\footnote{To make this view more precise, we need to instead draw the noise probability at random from a uniform prior. See, e.g., \citet{GL}.} $L_S(h)$.

The MDL rule can also be seen as a form of {\em regularized empirical risk minimization}, where the second term minimizes the empirical risk $L_S(h)$, and this is balanced by the first term which controls {\em complexity}, where very low prior $\pi(h)$ corresponds to high complexity, and the form of complexity control is specified by $\pi$.  

However, as noted by \cite{GL}, this penalization is suboptimal in an agnostic setting, where we would like to compete with some $h^*$ with low $(-\log \pi(h^*))$ without the model assumption $Y|X=Y|h^*(X)$ (See \autoref{wellspecifiedsection} for a discussion of the well-specified case, where this modeling assumption {\em is} made).  \citeauthor{GL}  showed that in this case, we might not have consistency (i.e.~strong learning), in the sense that even as the number of samples $m$ increases, the limiting population error might not be optimal: $\lim_{m\rightarrow \infty} \mathbb{E}[L(\MDL_1(S))] > L(h^*)$, where $L(h)$ is the population (zero-one) error.

Instead, if we would like to compete with an unknown predictor $h^*$ with low complexity $-\!\log\pi(h^*)$ (i.e.~prior $\pi(h^*)$ away from zero), the Structural Risk Minimization (SRM) principal (\citet{vapnik1991principles}, discussed specifically in our setting by \citet{GL}, and see also \citet{SRM} Section 7.3)
suggests a different balance of empirical risk and prior:
\begin{equation}\label{eq:srm}
    \textnormal{SRM}(S) = \arg\min_h L_S(h)+\sqrt{\frac{-\log \pi(h)}{m}} = \arg\min_h \sqrt{m}\sqrt{-\log \pi(h)} + m L_S(h)
\end{equation}
This balance {\em does} ensure consistency, and even with a finite sample guarantee, where with high probability  $L(\textnormal{SRM}(S)) \leq L(h^*) + O\left(\sqrt\frac{-\log \pi (h^*)}{m}\right)$ and so $\limsup_{m\rightarrow\infty}\mathbb{E}\left[L(\textnormal{SRM}(S))\right]\leq L(h^*)$.  The balance between empirical risk and regularization in \eqref{eq:srm} roughly corresponds to a choice of $\lambda_m \propto \sqrt{m}$ in \eqref{eq:mdl-intro}.  Indeed, in \autoref{lambda infty} and \autoref{lambda_infty_coro}, we obtain similar consistency guarantee using  $\MDL_{\lambda_m}$ with $\lambda_m=\sqrt{m}$.

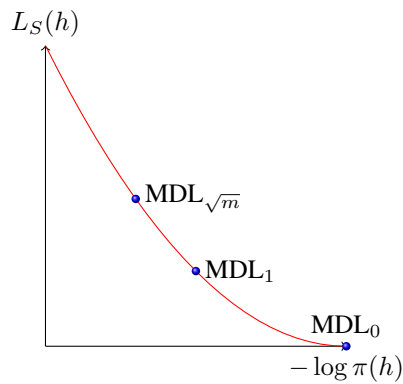
\begin{wrapfigure}{r}{0.5\textwidth}
\centering
        \begin{tikzpicture}[scale=0.8,domain=0.001:5.0]
          \draw[->] (0.001,0) -- (5.0,0) node[below] {$-\log \pi(h)$};
          \draw[->] (0,0.001) -- (0,5.0) node[above] {$L_S(h)$};
       
          \draw[color=red]    plot (\x,{0.2 * (\x-5)^2});
          \draw plot[mark=ball] coordinates{(1.5,2.45)} node[right] {$\MDL_{\sqrt{m}}$};
          \draw plot[mark=ball] coordinates{(2.5,1.25)} node[right] {$\MDL_1$};
          \draw plot[mark=ball] coordinates{(5.0,0.0)} node[above] {$\MDL_0$};
        \end{tikzpicture}
\caption{Pareto Frontier.}
\label{fig:}
\end{wrapfigure}
\vspace{5pt}
We can understand this in terms of the regularization path, depicted in \autoref{fig:}, formed by considering $\MDL_\lambda$ for different tradeoff parameters $\lambda$, as the Pareto-frontier for minimizing the empirical error $L_S(h)$ (the vertical axis) on one hand and $-\log \pi(h)$ (the horizontal axis) on the other hand.  The ``correct'' amount of regularization is $\lambda\approx\sqrt{m}$, while the choice $\lambda=1$ under-regularizes, and thus over-fits, and results in suboptimal population error.  As we decrease $\lambda$ we regularize even less and overfit more, and as $\lambda\rightarrow 0$ we approach the max prior interpolating solution\footnote{We slightly overload notation by denoting this max prior interpolator $\MDL_0$ even though it is only approached as $\lambda\rightarrow 0$.  We also implicitly assume interpolation is possible, otherwise this is the max prior predictor among all those with minimal risk.} $\MDL_0 = \arg\min_{L_S(h)=0} (-\log\pi(h))$.  
The inconsistency result of \citeauthor{GL} can thus be seen as a lower bound on the limiting ``cost of overfitting'' (i.e.~deterioration in limiting error), for a particular amount of under-regularization (see also \autoref{temperfct section}). But their analysis does not provide an upper bound on the cost of overfitting, even for the particular choice $\lambda=1$.  How bad is this overfitting?  We know it is not benign, in the sense that the limiting error is greater than $L(h^*)$, but how bad is it?  Is it catastrophic in the sense that the limiting error can be arbitrarily high?  Or is it tempered \citep{mallinar} in the sense that it can be bounded in terms of $L(h^*)$ ?

And what happens for other choices of $\lambda$, possibly $\lambda_m$ scaling with $m$?  Can we characterize the limiting error and cost of overfitting along the entire regularization path, as a function of $\lambda$ and $L(h^*)$?  Is there still a cost to overfitting when $\lambda>1$?  Up to what point?  And how bad, tempered or catastrophic, can overfitting be when we under-regularize even more, with $\lambda<1$? And what is the dependence on $L(h^*)$?  If overfitting is tempered, the dependence on $L(h^*)$ can be thought of as a `tempering function', telling us how the limiting error is bounded in terms of the noise level $L(h^*)$.

We provide (nearly) complete answers to the above questions. In particular:
\begin{itemize}
    \item We obtain a tight characterization of the worst possible limiting error as $m\rightarrow \infty$, for any $0<\lambda<\infty$ and any value of $L^*=L(h^*)$ (\autoref{cor:finite} and equation \eqref{ell}). We show that for any $1\leq\lambda<\infty$ and any $L^*>0$, we have tempered overfitting, with a precisely characterizable ``tempering functions'' (depicted in \autoref{fig2}). For $0<\lambda<1$, overfitting is tempered only for small enough noise $L^*$, again in a way we precisely characterize (also in \autoref{fig2}). Overfitting gets worse as  $\lambda$ decreases (\autoref{fig3}) and for any $\lambda_m \rightarrow 0$ we can get catastrophic overfitting for any $L^*>0$ (\autoref{lambda0}).  
    \item On the other hand, any $1 \ll \lambda_m \ll \frac{\log m}{m}$ leads to consistency, namely, we have $\underset{m\rightarrow \infty}{\limsup } \mathbb{E}[L(\MDL_{\lambda_m}(S))] \leq L(h^*) $ as long as $\pi(h^*)>0$ (\autoref{lambda_infty_coro}).  But once $\lambda_m = \Omega(m)$, we are over-regularizing and might ``underfit'', again resulting in a catastrophic behaviour where we can have  $\lim_{m\rightarrow \infty} \mathbb{E}[L(\MDL_{\lambda_m}(S))] = \frac{1}{2}$ (\autoref{lambda>>m}).
    \item In all regimes, we provide concrete finite sample upper bounds on how the error approaches the limiting error as $m\rightarrow\infty$ (\autoref{MDL UB} and \autoref{lambda infty}).
    \item For the special case $\lambda=1$ our results tighten the analysis of \citeauthor{GL} from both above and below, showing a higher lower bound than they obtained, as well as a matching upper bound (see \autoref{fig4} and discussion at the end of \autoref{temperfct section}).
\end{itemize}

Our analysis is agnostic and worst-case, both over the source distribution and over the choice of (discrete) prior $\pi$, which we can think of as a complexity measure.  In the past years there has been much study of overfitting with respect to a variety of different complexity measures, including the Euclidean norm for linear predictors \citep[e.g.][]{hastie2022surprises,Bartlett_2020}, RKHS norms \citep[e.g.][]{2019arXiv191101544M,theo,Mei2019TheGE}, other norms such as the $\ell_1$ norm \cite[e.g.][]{Ju,Wang2021TightBF,Koehler2021UniformCO}, norms of weights in neural networks \citep[e.g.][]{KYS,2023freibenign,nirmit}, program length \citep[e.g.][]{MS}, and neural network size \citep[e.g.][]{harel}. These can all be seen as studying the effect of ovefitting for {\em specific} priors $\pi$, sometimes only for the max prior interpolator $\MDL_0$, sometimes also for the entire path \cite[e.g.][]{entireregpathforridge}. Although many are continuous priors, some are discrete (e.g.~program length and neural network size).  Our work can be seen as providing a {\em baseline} highlighting the limits of the cost of overfitting for {\em any} (at least discrete) prior.  This  frames the study of overfitting along the regularization path of specific priors in terms of how they potentially improve over this baseline and reduce the cost of overfitting.  Although our framework and result captures only discrete priors, we believe the behaviour for continuous priors is similar and can also be studied.

\section{Formal Setup}
\label{formal setup}
We consider a supervised binary classification problem where we observe $m$ i.i.d samples $S \sim \mathcal{D}^m$ from a source distribution $\mathcal{D}$ over $(X,Y) \in \mathcal{X}\times \{0,1\}$, where $\mathcal{X}$ is some measurable space and $Y$ is a binary label. A ``predictor'' (aka classifier) is a (measurable) mapping $h:\mathcal{X}\rightarrow\{0,1\}$ and we are interested in its population error $L(h)=L_D(h)= \mathbb{P}_{(x,y) \sim \mathcal{D}}(h(x) \neq y)$ (we frequently omit the distribution $D$ when it is clear from context).  We also denote $L_S(h) = \frac{1}{m} \sum_{i = 1}^m \mathbbm{1}\{h(x_i) \neq y_i\}$ the empirical error (i.e.~training error) on $S$. 
\removed{A learning rule is a mapping $\mathcal{A}: S \rightarrow \{0,1\}^\mathcal{X}$ that takes training sample as input and gives a predictor as output. }

We consider learning rules based on a given ``prior''  $\pi: \{0,1\}^\mathcal{X} \rightarrow [0,1]$ over predictors such that $\sum_h \pi(h) \leq 1$.  We can think of this prior as a discrete distribution over predictors (if $\sum_h \pi(h) \leq 1$ the remaining probability mass can be thought to be absorbed in some alternate predictor), but we never sample from it or assume anything is sampled from it.  We can equivalently\footnote{Every prefix-unambiguous description language specifies a valid prior with $\pi(h) = 2^{-\abs{h}_{\pi}}$ (by Kraft's inequality), while for every valid prior there is a description language with $-\log \pi(h) \leq \abs{h}_{\pi} \leq -\log \pi(h)+1$ \citep[e.g.][Section 5.2--5.3]{TC}. } view the prior $\pi(h)$ as corresponding to a description length $\abs{h}_{\pi}$ of predictors in some prefix-unambiguous description language with $\pi(h) = 2^{-\abs{h}_{\pi}}$.  Maximizing $\pi$ is thus the same as minimizing the description length $\abs{h}_{\pi}$.  Either way, $\pi$ will have finite or countable support (i.e.~the description language can describe finite or countably many predictors), and for any countable class of predictors we can construct a prior assigning positive probability to all predictors in the class. Formally, we denote $\abs{h}_{\pi} = -\log \pi(h)$.

For a given prior $\pi$ (or equivalently, description language) and regularization parameter $\lambda$ we consider the modified Minimum Description Length learning rule:
\begin{align}\label{defn:MDL}
\MDL_{\lambda}(S) = \underset{h \in \mathcal{H}}{\text{arg min }} J_{\lambda}(h,S), \text{ such that } L_S(h)\leq 1/2,
\end{align}
where 
\begin{align}\label{stirling}
    J_{\lambda}(h,S) &= \lambda \abs{h}_{\pi} + \log {m \choose mL_S(h)} = \tilde{J}_{\lambda}(h,S) -\tilde{\Delta}, \text{ with }\notag\\
    \tilde{J}_{\lambda}(h,S) &=    \lambda \abs{h}_{\pi} + mH(L_S(h)) \text{ and } 0 \leq \tilde{\Delta} \leq \log (m+1).
\end{align}
with the equality following from Stirling's approximation.  We can also use the approximate form to define the alternative and very similar rule:
\begin{align}
 \widetilde{\MDL}_{\lambda}(S) = \underset{h}{\text{arg min }} \tilde{J}_{\lambda}(h,S) \text{ such that } L_S(h)\leq 1/2.\notag
\end{align}
All the results in the paper, including both upper and lower bounds, also apply to the $\widetilde{\MDL}_{\lambda}$ learning rule, which has the same limiting behaviour as $\MDL_{\lambda}$.

The standard MDL is then a special case of $\MDL_{\lambda}$ with $\lambda = 1$. We denote it as $\MDL_1$. Notice that when we define $\MDL_{\lambda}$, we require $ L_S(h)\leq 1/2$. This is because otherwise we are not preferring a predictor with very low error $L$ over a predictor with very high error $1-L$, e.g.~differentiating between a predictor $h(x)$ and its negation $1-h(x)$, and cannot possibly ensure $\MDL$ returns a predictor with small (rather than large) error.  If the prior is symmetric, i.e.~$\pi(h)=\pi(1-h)$, we can think of the constraint as specifying we output $1-h$ if $L_S(h)>1/2$.

\paragraph{Notation}

$\text{Ber}(\alpha)$ denotes a Bernoulli random variable with expectation $\alpha$. For a random variable $X$, $H(X)$ is its entropy, and for $\alpha,\beta\in[0,1]$ we also use $H(\alpha)=-\alpha \log \alpha-(1-\alpha)\log (1-\alpha)$ and $\KL(\alpha \Vert \beta)=\alpha\log\frac{\alpha}{\beta} + (1-\alpha)\log\frac{1-\alpha}{1-\beta}$ to denote the entropy and KL-divergence of corresponding Bernoullis. All logarithms are base-$2$ and entropy is measured in bits. We use  $a \oplus b$ to denote the XOR of two bits $a,b\in\{0,1\}$. 

\section{Main Results}
With these definitions, we are ready to state our main result:
For any $0<\lambda<\infty$, we show that the worst-case limiting error is given by the following function $\ell_{\lambda}$ plotted in \autoref{fig2}:
\begin{gather}\label{ell}
\ell_{\lambda}(L^*)=\removed{ Q_{\lambda}^{-1}(H(L^*))=}
\begin{cases}
1 - 2^{-\frac{1}{\lambda}H(L^*)},  & \text{for } 0< \lambda \leq 1 \\
U_{\lambda}^{-1}(H(L^*)), & \text{for } \lambda > 1,
\end{cases} 
\quad \textrm{where: } { U_{\lambda}(q) = \lambda \KL\!\left(\tfrac{1}{1+ \left(\frac{\scriptscriptstyle 1-q}{\scriptscriptstyle q}\right)^{\frac{\scriptscriptstyle \lambda}{\scriptscriptstyle \lambda - 1}}}\middle\Vert q\right) \!+\! H\!\left(\tfrac{1}{1+ \left(\frac{\scriptscriptstyle 1-q}{\scriptscriptstyle q}\right)^{\frac{\scriptscriptstyle \lambda}{\scriptscriptstyle \lambda - 1}}}\right)}.
\end{gather}
\removed{where $Q_{\lambda}(q) = \min_{0 \leq p \leq 0.5}{\lambda}\KL(p \Vert q) + H(p)$, and $U_{\lambda}(q) = \lambda \KL(\frac{1}{1+ (\frac{1-q}{q})^{\frac{\lambda}{\lambda - 1}}}\Vert q) + H(\frac{1}{1+ (\frac{1-q}{q})^{\frac{\lambda}{\lambda - 1}}})$.}

\begin{theorem}[Agnostic Upper Bound]
\label{MDL UB}
(1) For any $0<\lambda\leq 1$,  any source distribution $D$, any predictor $h^*$, any valid prior $\pi$, and any $m$:
\begin{align}
    \underset{S \sim D^m}{\mathbb{E}}[L(\MDL_{\lambda}(S))]
    \leq 1 - 2^{-\frac{1}{\lambda}H(L(h^*))} + O\left(\frac{\abs{h^*}_{\pi}}{m} + \frac{1}{\lambda}\sqrt{\frac{\log^3 (m)}{m}}\right).
\end{align}

(2)
For any $\lambda > 1$,  any source distribution $D$, any predictor $h^*$, any valid prior $\pi$, and any $m$:
\begin{align}
    \underset{S \sim D^m}{\mathbb{E}}[L(\MDL_{\lambda}(S))]
    \leq U_{\lambda}^{-1}(H(L(h^*)))
    + O\left(\frac{1}{(1 - 2L(h^*))^2} \cdot \left(\lambda\left(\frac{\abs{h^*}_{\pi} + \log m}{m}\right) + \sqrt{\frac{\log^3 (m)}{m}}\right)\right).
\end{align}
Where $O(\cdot)$ only hides an absolute constant, that does not depend on $D, \pi$ or anything else.
\end{theorem}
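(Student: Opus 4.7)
The plan is to combine the $\MDL$ optimality condition with a prior-weighted uniform concentration of the empirical risk. Writing $\hat h = \MDL_\lambda(S)$, the defining inequality $J_\lambda(\hat h, S) \leq J_\lambda(h^*, S)$ (valid whenever $L_S(h^*) \leq 1/2$, which happens with high probability once $L^* < 1/2$) combined with the Stirling approximation \eqref{stirling} yields
\begin{equation*}
\lambda \abs{\hat h}_\pi + m H(L_S(\hat h)) \;\leq\; \lambda \abs{h^*}_\pi + m H(L_S(h^*)) + O(\log m).
\end{equation*}
On the other hand, the standard Chernoff bound $\mathbb{E}_S[2^{m \KL(L_S(h) \Vert L(h))}] \leq m+1$ followed by a $\pi$-weighted union bound gives, with probability at least $1 - \delta$, the uniform estimate $m \KL(L_S(h)\Vert L(h)) \leq \abs{h}_\pi + \log(m+1) + \log(1/\delta)$ simultaneously for every $h$, and hence in particular for $\hat h$.

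Substituting the uniform KL bound into the optimality condition as a lower bound on $\lambda\abs{\hat h}_\pi$, writing $p = L_S(\hat h)$ and $q = L(\hat h)$, and dividing by $m$, the key inequality becomes
\begin{equation*}
\lambda \KL(p \Vert q) + H(p) \;\leq\; H(L_S(h^*)) + \tfrac{\lambda \abs{h^*}_\pi}{m} + O\!\left(\tfrac{\lambda \log(m/\delta)}{m}\right).
\end{equation*}
Hoeffding's inequality gives $\abs{L_S(h^*) - L^*} = O(\sqrt{\log(1/\delta)/m})$, which combined with the pointwise estimate $\abs{H(a)-H(b)} \leq \abs{a-b}\log(1/\abs{a-b})$ yields $H(L_S(h^*)) \leq H(L^*) + O(\sqrt{\log^3 m/m})$ after picking $\delta = 1/m$. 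Since the $\MDL$ constraint forces $p \in [0, 1/2]$, we may replace the LHS by $Q_\lambda(q) = \min_{p' \in [0, 1/2]} [\lambda \KL(p'\Vert q) + H(p')]$ and invert: $q \leq Q_\lambda^{-1}(H(L^*) + \epsilon_m)$, with $\epsilon_m$ absorbing all the additive slack above.

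It remains to identify and invert $Q_\lambda$. Writing $\ell(p) = \lambda \KL(p\Vert q) + H(p)$, one computes $\ell''(p) = \tfrac{\lambda-1}{p(1-p)\ln 2}$, so the two regimes of the theorem fall out naturally. For $\lambda \leq 1$, $\ell$ is concave and its minimum on $[0,1/2]$ is attained at an endpoint; a direct crossover calculation, combined with the bound $H(L^*) \leq 1$, shows that $p=0$ is the relevant endpoint throughout the parameter range, giving $Q_\lambda(q) = -\lambda \log(1-q)$ and, after a first-order expansion, the $1 - 2^{-H(L^*)/\lambda}$ bound. For $\lambda > 1$, $\ell$ is convex and the first-order condition $(\lambda-1)\log\tfrac{p}{1-p} = \lambda\log\tfrac{q}{1-q}$ gives the interior optimum $p^* = 1/(1+((1-q)/q)^{\lambda/(\lambda-1)})$, whose substitution recovers $U_\lambda(q)$ from \eqref{ell}. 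The main technical obstacle is the inversion in the $\lambda > 1$ regime near $L^* = 1/2$: since $U_\lambda'(1/2) = 0$ by symmetry of $H$, the inverse $U_\lambda^{-1}$ is singular at $H(L^*) = 1$, and tracking the additive slack $\epsilon_m$ through this singularity produces the stated $(1-2L^*)^{-2}$ factor in the error. Passing from the high-probability statement to an expectation bound is routine by setting $\delta = 1/m$, so that the failure event contributes at most $1/m$ to $\mathbb{E}[L(\hat h)]$.
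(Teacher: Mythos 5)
Your proposal is correct and follows essentially the same route as the paper: the prior-weighted union bound over binomial KL tail bounds (the paper's \eqref{pacybayes1}), the comparison with $h^*$ via MDL optimality plus Stirling, concentration of $H(L_S(h^*))$ (you use Hoeffding plus the entropy modulus of continuity where the paper uses McDiarmid and Jensen, an immaterial difference), minimization over $p\in[0,1/2]$ to obtain $Q_\lambda$ as in \autoref{lemma8}, and the same case split with $p^*=0$ for $\lambda\le 1$ and the interior stationary point giving $U_\lambda$ for $\lambda>1$, followed by inversion via the mean value theorem. The only substantive piece you assert rather than prove is that the slope of $U_\lambda^{-1}$ along the mean-value segment is bounded by $O\left((1-2L^*)^{-2}\right)$ \emph{uniformly over all} $\lambda>1$ (and that the bound is vacuous when the slack is too large), which is exactly the content of the paper's \autoref{lemmaA.1} and requires a short compactness/uniform-convergence argument beyond the observation that $U_\lambda'(1/2)=0$.
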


To establish the exact worst-case limiting error, we provide matching lower bounds, showing that the limiting error can approach $\ell_\lambda(L^*)$, for any $0<\lambda<\infty$ and $L^*$:

\begin{theorem}[Agnostic Lower Bound]
\label{MDL LB}
    For any $0<\lambda<\infty$, any $L^* \in (0,0.5)$ and $L^* \leq L' <\ell_{\lambda}(L^*)$, there exists a prior $\pi$, a hypothesis $h^*$ with $\pi(h^*) \geq 0.1$ and source distribution $D$ with $L_D(h^*) = L^*$ such that $\mathbb{E}_S \left[L_D(\MDL_{\lambda}(S))\right] \rightarrow L'$ as sample size $m \rightarrow \infty$. 
\end{theorem}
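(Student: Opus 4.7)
The plan is to construct a hard instance in which a large family of higher-complexity but misleading hypotheses appears in the prior, each with population error $L$ slightly above $L'$, and to show by the probabilistic method that at least one of them has small enough training error to force $\MDL_\lambda$ to prefer it over $h^*$. First I would fix $L \in (L', \ell_{\lambda}(L^*))$ together with an optimal $p \in [0, 1/2]$ satisfying $\lambda \KL(p \Vert L) + H(p) < H(L^*)$, leaving a strictly positive gap $\epsilon$. Such $(L, p)$ exist by the very definition of $\ell_{\lambda}$ as the level at which $\min_p [\lambda \KL(p \Vert L) + H(p)] = H(L^*)$: the function $f(p) := (1-\lambda) H(p) - \lambda[p \log L + (1-p)\log(1-L)]$ has second derivative $f''(p) = (\lambda - 1)(1/p + 1/(1-p))$, so it is concave for $\lambda < 1$ (with its minimum on $[0, 1/2]$ attained at $p = 0$, giving $-\lambda \log(1 - L)$ and matching the $\ell_{\lambda}$ formula for $\lambda \leq 1$) and strictly convex for $\lambda > 1$ with interior minimizer $p = 1/(1 + ((1-L)/L)^{\lambda/(\lambda-1)})$, matching the $U_{\lambda}$ formula.

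Next I would specify the source and prior. Take $\mathcal{X}$ to be a large finite set with $X$ uniform on it, set $h^*(x) \equiv 0$, and let $Y \sim \text{Ber}(L^*)$ independently of $X$; then $L(h^*) = L^*$ and I assign $\pi(h^*) = 0.1$. For each $k \geq 1$, append $2^k$ auxiliary hypotheses $\{h_{k, v}\}_{v \in \{0,1\}^k}$, each designed so that $\mathbb{P}_X(h_{k, v}(X) = 1) = L_H := (L - L^*)/(1 - 2 L^*)$, which guarantees $L(h_{k, v}) = L$ exactly. Assign $\pi(h_{k, v}) = c \cdot 2^{-k}/k^2$ for a normalizing constant $c$, so that $\abs{h_{k, v}}_{\pi} = k + 2 \log k + O(1)$ and $\sum_h \pi(h) \leq 1$. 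The crux is then the probabilistic step: for $k^*(m) := \lceil m \KL(p \Vert L) + \log m \rceil$, I would show $\min_v L_S(h_{k^*, v}) \leq p + o(1)$ with probability tending to $1$ as $m \to \infty$. This is done by constructing the level-$k$ auxiliaries as i.i.d.\ uniformly random $L_H$-balanced predictors, so that each $L_S(h_{k, v})$ is distributed as $\text{Bin}(m, L)/m$ and is independent across $v$; Chernoff then gives $\mathbb{P}(L_S(h_{k, v}) \leq p) \geq 2^{-m \KL(p \Vert L)}/\text{poly}(m)$, and $2^{k^*}$ independent trials yield at least one success with probability $1 - o(1)$. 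A derandomization over the random construction produces a single deterministic prior with this property.

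Finally I would compare MDL objectives. The winning auxiliary $\hat h = h_{k^*, \hat v}$ has $J_{\lambda}(\hat h, S) \leq \lambda(k^* + 2 \log k^* + O(1)) + m H(p) + O(\log m) = m[\lambda \KL(p \Vert L) + H(p)] + o(m)$, while $J_{\lambda}(h^*, S) \geq m H(L^*) - O(\sqrt{m \log m})$ by concentration of $L_S(h^*)$ around $L^*$. The gap $\lambda \KL(p \Vert L) + H(p) < H(L^*) - \epsilon$ ensures $\hat h$ beats $h^*$ for large $m$, and analogous comparisons rule out levels $k \neq k^*$ because the level-$k$ Pareto-optimal value of $J_{\lambda}/m$ equals $f(p_k)$ for some $p_k$ and $f$ is minimized at $p^*$ corresponding to $k^*$. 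Hence $\MDL_{\lambda}(S)$ almost surely lies in the level-$k^*$ family with population error exactly $L$, giving $\mathbb{E}_S[L(\MDL_{\lambda}(S))] \to L$; letting $L \downarrow L'$ closes the argument. The main obstacle I expect is the derandomization step: fixing a single prior, independent of $m$, that yields the desired concentration of $\min_v L_S(h_{k^*(m), v})$ simultaneously for every large $m$. This should be handled by constructing the levels independently and invoking a Borel--Cantelli-type argument, but it requires care because the same fixed prior must cope with a growing optimal level $k^*(m)$ as $m$ increases.
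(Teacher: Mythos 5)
Your overall strategy is the same as the paper's: plant exponentially many ``bad'' hypotheses of population error equal to the target, at description length roughly $m\,\KL(p\Vert L)$, argue that the minimum of their empirical errors drops to the optimizer $p$ of $\lambda\KL(p\Vert L)+H(p)$, and compare $\tilde J_\lambda$ values against $h^*$ (whose objective concentrates at $mH(L^*)$). The identification of the optimal $p$ ($p=0$ for $\lambda\le 1$, the interior minimizer matching $U_\lambda$ for $\lambda>1$) also matches the paper. However, there is a genuine gap in the instance you construct. You take $\mathcal{X}$ to be a \emph{large finite} set with $X$ uniform and $Y\sim\mathrm{Ber}(L^*)$ independent of $X$. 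On a finite domain there are only finitely many distinct predictors, so after any derandomization your infinitely many $h_{k,v}$ collapse to finitely many functions, and $\min_v L_S(h_{k,v})\to L$ almost surely rather than to $p$; the claim that the $L_S(h_{k,v})$ behave like independent $\mathrm{Bin}(m,L)/m$ variables for $2^{k^*(m)}$ values of $v$ with $k^*(m)=\Theta(m)$ cannot hold, since all of them are evaluated on the \emph{same} sample. Worse, with $Y\perp X$ the minimum empirical error over \emph{all} predictors on a finite domain concentrates at the empirical Bayes error $\approx L^*$, so for $0<\lambda\le 1$ (where you need $p=0$, i.e.\ an interpolating bad hypothesis) the required event has probability tending to $0$, and the construction provably fails: $\MDL_\lambda$ would asymptotically return $h^*$. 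This is not merely the ``derandomization care'' you flag at the end; it is a structural obstruction of the finite domain.

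The fix is exactly what the paper does in \autoref{sec:finite LB proof}: take an infinite product domain $\mathcal{X}=\{0,1\}^\infty$, let each bad hypothesis read its own coordinate, $h_i(x)=x[i]$, and generate $x[i]=y\oplus\mathrm{Ber}(L')$ with independent noise across coordinates and samples. Then the empirical errors $L_S(h_i)$, $i\ge 1$, are \emph{exactly} i.i.d.\ $\mathrm{Bin}(m,L')/m$, and the two-sided bound on the minimum of i.i.d.\ binomials (\autoref{thm12} in \autoref{AppendixC}) plus Borel--Cantelli yields $L_S(h_{\hat i})\to\hat L$ (or an interpolator when $\lambda\le 1$) with a single prior valid for all $m$ simultaneously, which is precisely the step your proposal leaves unresolved. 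Two smaller points: (i) you do not need to ``rule out levels $k\ne k^*$'' --- since every bad hypothesis has population error exactly $L$, it suffices that \emph{some} bad hypothesis beats $h^*$ (you should, however, check the constraint $L_S(h)\le 1/2$ for the winner); (ii) the final step ``letting $L\downarrow L'$'' does not produce the single instance with limit exactly $L'$ that the theorem demands --- instead simply take $L=L'$, since the strict inequality $L'<\ell_\lambda(L^*)$ already supplies the gap $\lambda\KL(p\Vert L')+H(p)<H(L^*)$.
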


Combining \autoref{MDL UB} and \autoref{MDL LB}, we see that $\ell_\lambda(L^*)$ given in \eqref{ell} exactly and tightly characterizes the worst case limiting error: for any $0<\lambda<\infty$, and any $L^* \in (0,0.5)$, 
\[
\underset{\substack{\pi , D,
L(h^*) = L^*\\ \pi(h^*) \geq 0.1}}{\sup} \underset{m \rightarrow \infty}{\lim} \underset{S \sim D^m}{\mathbb{E}}\left[ L_D(\MDL_{\lambda} (S))\right] = \ell_{\lambda}(L^*). 
\]

Furthermore, this convergence is ``uniform'', in the sense that we have a finite-sample guarantee (see \autoref{MDL UB}) with sample complexity (i.e rate of convergence) that depends only on $\pi(h^*)$, $\lambda$ and\footnote{The dependence on $L^*$ only kicks in when $L^*$ is close to $1/2$.  As long as $L^*$ is bounded away from $1/2$, we can ignore this dependence.} $L^*$ but not on $\pi$ and $D$.  Another way to view this is that we get the same guarantee even if we change the order of the limits. This is our main result, and is captured by the following corollary:

\begin{corollary}\label{cor:finite}
For any $0<\lambda<\infty$, and any $L^* \in (0,0.5)$, 
\[
\ell_{\lambda}(L^*) = \underset{\pi, D }{\sup } \underset{m \rightarrow \infty}{\lim} \underset{S \sim D^m}{\mathbb{E}}\left[ L_D(\MDL_{\lambda} (S))\right] \leq \underset{m \rightarrow \infty}{\lim} \underset{\pi, D }{\sup} \underset{S \sim D^m}{\mathbb{E}}\left[ L_D(\MDL_{\lambda} (S))\right] = \ell_{\lambda}(L^*).
\]
and so the inequality is actually an equality.
\end{corollary}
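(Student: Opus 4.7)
\begin{hproof}
The plan is to assemble a sandwich from \autoref{MDL UB} and \autoref{MDL LB} that forces the displayed chain of inequalities to be an identity.  Concretely, I will establish
\[
\ell_\lambda(L^*) \;\leq\; \underset{\pi, D}{\sup} \underset{m \to \infty}{\lim} \underset{S \sim D^m}{\mathbb{E}}[L_D(\MDL_\lambda(S))] \;\leq\; \underset{m \to \infty}{\lim} \underset{\pi, D}{\sup} \underset{S \sim D^m}{\mathbb{E}}[L_D(\MDL_\lambda(S))] \;\leq\; \ell_\lambda(L^*),
\]
where the supremum ranges over $(\pi, D)$ admitting a reference $h^*$ with $L_D(h^*) = L^*$ and $\pi(h^*) \geq 0.1$, matching the display that immediately precedes the corollary.

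For the leftmost inequality I would invoke \autoref{MDL LB} directly: for every $L' < \ell_\lambda(L^*)$ it exhibits a specific $(\pi, D, h^*)$ for which $\mathbb{E}_{S \sim D^m}[L_D(\MDL_\lambda(S))] \to L'$ as $m \to \infty$.  Hence the inner limit at that $(\pi, D)$ equals $L'$, and supping and letting $L' \uparrow \ell_\lambda(L^*)$ gives the bound.  The middle inequality is the generic fact that for each fixed $(\pi, D)$ the limit is at most $\liminf_m \sup_{\pi, D}$ of the same quantity, so taking the supremum on the left preserves the bound; the outer limit on the right will in fact exist once the rightmost inequality is proved, so $\liminf$ can be upgraded to $\lim$.

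The rightmost inequality is the substantive half, and the key point is the uniformity of \autoref{MDL UB}.  That theorem bounds $\mathbb{E}_{S \sim D^m}[L_D(\MDL_\lambda(S))]$ by $\ell_\lambda(L^*)$ plus an additive $O(\cdot)$ term that depends on $\pi$ only through $\abs{h^*}_{\pi} = -\log \pi(h^*)$ and on $D$ only through $L^*$.  On the feasible set, $\abs{h^*}_{\pi} \leq \log 10$ by assumption, and for $L^* \in (0, 1/2)$ fixed the prefactor $1/(1-2L^*)^2$ appearing in the $\lambda > 1$ case is a finite constant.  Hence there exists $\varepsilon_{\lambda, L^*}(m) \to 0$, independent of $\pi$ and $D$, with $\mathbb{E}_{S \sim D^m}[L_D(\MDL_\lambda(S))] \leq \ell_\lambda(L^*) + \varepsilon_{\lambda, L^*}(m)$ uniformly over the feasible set.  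Taking $\sup_{\pi, D}$ and then $m \to \infty$ yields the bound.

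The one thing I would verify carefully, and the only real obstacle, is this uniformity claim.  I would trace through the $O(\cdot)$ notation in \autoref{MDL UB} to confirm that all absorbed constants are absolute and that the only $\pi, D$-dependence enters through $\abs{h^*}_{\pi}$ and $L^*$, both of which are uniformly controlled on the feasible set.  Modulo that bookkeeping, the corollary is a direct consequence of the two preceding theorems.
\end{hproof}
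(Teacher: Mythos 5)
Your proposal is correct and is essentially the paper's own argument: the corollary is obtained exactly by combining the lower-bound constructions of \autoref{MDL LB} (letting $L' \uparrow \ell_\lambda(L^*)$) with the uniformity of the finite-sample guarantee in \autoref{MDL UB}, whose error term depends on $(\pi,D)$ only through $\abs{h^*}_\pi \leq \log 10$ and $L^*$, so it vanishes uniformly over the feasible set as $m\rightarrow\infty$. The bookkeeping you flag checks out, since the hidden constants in \autoref{MDL UB} are absolute.
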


The analysis above allows us to describe the overfitting behaviour of $\MDL_\lambda$ for any {\em fixed} $0<\lambda<\infty$ (i.e.~not varying with $m$).  In the next Section, we study the limiting error function $\ell_\lambda(L^*)$, and see that for fixed $0<\lambda<\infty$, overfitting is never benign, but it is tempered when $\lambda \geq 1$ or $L^*$ is small enough relative to $\lambda$.  

We now turn to characterizing the behaviour when $\lambda_m$ varies with $m$, with either $\lambda_m \rightarrow 0$ or $\lambda_m \rightarrow \infty$.  At $\lambda = 0$ or $\lambda_m \rightarrow 0$, we get catastrophic overfitting with the limiting error 1:
\begin{theorem}\label{lambda0}
    For any $\lambda_m \rightarrow 0$ or $\lambda = 0$, any $L^* \in (0,0.5)$, and $L^* \leq L' < 1$, there exists a prior $\pi$, a hypothesis $h^*$ with $\pi(h^*) \geq 0.1$ and source distribution $D$ with $L_D(h^*) = L^*$ such that $\mathbb{E}_S \left[L_D(\MDL_{\lambda_m}(S))\right] \rightarrow L'$ as sample size $m \rightarrow \infty$.
\end{theorem}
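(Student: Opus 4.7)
The plan is to construct, for each $L^* \in (0, 0.5)$ and each target $L' \in [L^*, 1)$, a single source distribution $D$ and prior $\pi$ (independent of $m$) such that $\MDL_{\lambda_m}$ outputs a predictor with population error converging to $L'$. I would take $\mathcal{X} = [0,1]$ with uniform $D_X$, deterministic labels $Y = \mathbbm{1}_G(X)$ for a fixed measurable set $G \subseteq [0,1]$ of Lebesgue measure $L^*$, and set $h^* \equiv 0$ with $\pi(h^*) = 0.1$ so that $L_D(h^*) = L^*$. The remaining $0.9$ of prior mass will sit on a countable family $\mathcal{F}$ of overfitting predictors, each of which has population error exactly $L'$ (up to a vanishing discretization error).

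The family $\mathcal{F}$ will be organized by levels $n \geq 1$: at level $n$, partition $[0,1]$ into $N_n = \lceil n/(1-L')\rceil$ equal intervals $I_1^{(n)}, \dots, I_{N_n}^{(n)}$ and, for each subset $A \subseteq \{1,\dots,N_n\}$ of size $k_n = \lfloor L' N_n \rfloor$, include the predictor $h_{n,A}(x) = \mathbbm{1}_G(x) \oplus \mathbbm{1}_{T_{n,A}}(x)$ where $T_{n,A} = \bigcup_{i \in A} I_i^{(n)}$. By construction $L_D(h_{n,A}) = k_n/N_n \to L'$. Weighting all level-$n$ members uniformly with total mass $0.9 \cdot 2^{-n}$ gives $|h_{n,A}|_\pi = n + \log\binom{N_n}{k_n} + O(1) = O(n)$, with the hidden constant depending only on $L'$. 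The key combinatorial observation is that at level $n = m$, any sample $S$ of size $m$ hits at most $m$ of the $N_m$ intervals, leaving at least $N_m - m \geq k_m$ of them untouched, so some $A$ with $T_{m,A} \cap S = \emptyset$ always exists; the corresponding $h_{m,A}$ interpolates $S$ perfectly.

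Given this, the cost comparison is routine. The interpolating $h_{m,A}$ achieves $J_{\lambda_m}(h_{m,A}, S) = \lambda_m |h_{m,A}|_\pi + \log\binom{m}{0} = O(\lambda_m m) = o(m)$, whereas Hoeffding concentration of $L_S(h^*)$ around $L^*$ combined with Stirling gives $J_{\lambda_m}(h^*, S) \geq m H(L^*)(1 - o(1))$ with probability $1 - e^{-\Omega(m)}$. Hence $\MDL_{\lambda_m}(S) \in \mathcal{F}$ with overwhelming probability, and since every $h \in \mathcal{F}$ satisfies $L_D(h) \to L'$, we obtain $\mathbb{E}_S[L_D(\MDL_{\lambda_m}(S))] \to L'$. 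Interpolating members of $\mathcal{F}$ automatically satisfy the constraint $L_S \leq 1/2$ even when $L' > 1/2$ (their training error is $0$), and the case $\lambda = 0$ follows from the same construction since $\MDL_0$ is the max-prior interpolator, which almost surely lies in $\mathcal{F}$ rather than at $h^*$.

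The main obstacle is choosing $N_n$ to simultaneously guarantee interpolation---forcing $N_n \gtrsim n/(1-L')$---and control the combinatorial cost $\log\binom{N_n}{k_n} \approx N_n H(L')$ that enters $|h_{n,A}|_\pi$. My choice $N_n = \lceil n/(1-L')\rceil$ sits right at the boundary, keeping $|h_{n,A}|_\pi$ linear in $n$ with a finite $L'$-dependent constant proportional to $1 + H(L')/(1-L')$; this constant diverges as $L' \to 1$, which is precisely why the theorem requires $L' < 1$ and would fail at the endpoint $L' = 1$.
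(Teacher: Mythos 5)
Your construction differs from the paper's (the paper uses conditionally independent noisy bits so that \emph{every} ``bad'' predictor has population error exactly $L'$, and then only needs to show $h^*$ loses the objective comparison), but as written your argument has a genuine gap, and in part of the parameter range an outright counterexample. The problematic step is the inference ``$\MDL_{\lambda_m}(S)\in\mathcal{F}$ with overwhelming probability, and since every $h\in\mathcal{F}$ satisfies $L_D(h)\to L'$, the expected error tends to $L'$.'' Members of $\mathcal{F}$ at a \emph{fixed} low level $n$ have population error $k_n/N_n$, which is a constant that can be bounded away from $L'$ (e.g.\ $L'=0.6$ gives $N_1=3$, $k_1=1$, error $1/3$), and they are the cheapest members of $\mathcal{F}$ under $\pi$. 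Beating $h^*$ only shows the output lies in $\mathcal{F}$; you must additionally rule out that $\MDL_{\lambda_m}$ selects one of these low-level members, i.e.\ you need an argument (concentration of $L_S(h_{n,A})$ near $k_n/N_n>0$ for the finitely many members at levels with $N_n<1/\epsilon$, giving them objective $\Omega(m)$ versus the $o(m)$ of your level-$m$ interpolator) showing that the selected level diverges with $m$. This step is absent.

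Worse, for $L'<1/2$ (which is allowed, since the theorem covers all $L^*\le L'<1$ with $L^*\in(0,0.5)$) your family actually contains the Bayes predictor: at level $1$ you have $N_1=2$ and $k_1=\lfloor 2L'\rfloor=0$, so the only size-$k_1$ subset is $A=\emptyset$ and $h_{1,\emptyset}=\mathbbm{1}_G$, which has population error $0$, interpolates every sample, and carries description length $O(1)$ with prior mass about $0.45$. Both $\MDL_0$ (the max-prior interpolator) and $\MDL_{\lambda_m}$ with $\lambda_m\to 0$ will select it, so the limiting error is $0$, not $L'$; the construction as stated therefore fails on that range. The fix is easy (force $k_n\ge 1$, e.g.\ start the levels at $n_0$ large enough that $\lfloor L'N_{n_0}\rfloor\ge 1$, or use $k_n=\max(1,\lfloor L'N_n\rfloor)$) and, combined with the missing low-level argument above, your route can be completed. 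The paper's construction avoids both issues by design: all competitors $h_i$, $i\ge 1$, are built from label noise of level exactly $L'$, so whichever one wins, the population error is $L'$, and the whole proof reduces to showing $h_0$ is not the minimizer.
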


As $\lambda_m \rightarrow \infty$ with $1 \ll \lambda_m \ll \frac{m}{\log m}$, we get consistency, i.e. ``learning'' behaviour with the following finite sample guarantee:

\begin{theorem}
\label{lambda infty}  
For any predictor $h^*$, source distribution $D$, valid prior $\pi$, and any $\lambda>1$ and m:
\begin{equation}\label{thm3.4bd}
    \underset{S \sim D^m}{\mathbb{E}}[L(\MDL_{\lambda}(S))] 
    \leq L(h^*)+ O\left(\frac{1}{1 - 2L(h^*)} \cdot \left(\frac{1}{\lambda} + \lambda\left(\frac{\abs{h^*}_{\pi} + \log m}{m}\right) + \sqrt{\frac{\log^3 (m)}{m}}\right)
    \right),
\end{equation}
where $O(\cdot)$ only hides an absolute constant, that does not depend on $D, \pi$ or anything else.
\end{theorem}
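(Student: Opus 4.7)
The plan is to reduce \autoref{lambda infty} to \autoref{MDL UB}~(2) by showing that, for $\lambda > 1$, the asymptotic limit $U_\lambda^{-1}(H(L^*))$ is already close to $L^*$: specifically, $U_\lambda^{-1}(H(L^*)) - L^* = O(1/\lambda)$, uniformly in $L^* \in (0,1/2)$. Combining this with the finite-sample bound of \autoref{MDL UB}~(2) then yields the stated inequality, after bookkeeping of the $(1-2L^*)^{-1}$ factors.

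To quantify the bias $U_\lambda^{-1}(H(L^*)) - L^*$, I would set $s = 1/(\lambda - 1)$ so that the exponent $\lambda/(\lambda-1)$ in the formula $p^*(q) = 1/(1 + ((1-q)/q)^{\lambda/(\lambda-1)})$ becomes $1 + s$, and Taylor-expand in $s$ around $s=0$: to leading order $p^*(q) - q = -q(1-q)\ln((1-q)/q)\cdot s + O(s^2)$. Combining this with the Bernoulli-entropy identity $H(q) - H(p) = H'(p)(q-p) - \KL(q \Vert p)$ and the second-order approximation $\KL(p^*\Vert q) = \Theta((p^* - q)^2/(q(1-q)))$ gives
\[
U_\lambda(q) \;=\; \lambda \KL(p^*(q)\Vert q) + H(p^*(q)) \;=\; H(q) - \frac{q(1-q)\ln^2((1-q)/q)}{2(\lambda-1)\ln 2} + O(1/\lambda^2).
\]
Since $H$ is strictly increasing on $[0,1/2]$, inverting at $q = L^*$ yields $U_\lambda^{-1}(H(L^*)) - L^* \approx \tfrac{L^*(1-L^*)\ln((1-L^*)/L^*)}{2(\lambda-1)}$, which is $O(1/\lambda)$ uniformly in $L^* \in (0,1/2)$ and in fact vanishes at both endpoints $0$ and $1/2$.

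Plugging this expansion into \autoref{MDL UB}~(2) yields
\[
\mathbb{E}[L(\MDL_\lambda(S))] \;\leq\; L^* + O(1/\lambda) + O\!\left(\frac{1}{(1-2L^*)^{2}}\Bigl(\lambda\,\tfrac{\abs{h^*}_\pi+\log m}{m} + \sqrt{\tfrac{\log^3 m}{m}}\Bigr)\right).
\]
To match the stated $(1-2L^*)^{-1}$ (rather than $(1-2L^*)^{-2}$) dependence, I would revisit the proof of \autoref{MDL UB}~(2) and observe that one power of $1/H'(L^*) \asymp 1/(1-2L^*)$ there arises from inverting $H$ through the worst-case envelope $U_\lambda^{-1}$; in the over-regularized regime $\lambda > 1$ where $U_\lambda^{-1}(H(L^*)) - L^*$ is already $O(1/\lambda)$, this inversion is tame and only a single power of $1/(1-2L^*)$ is lost, rather than two.

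The main obstacle is making the Taylor expansion rigorous as a uniform finite-$\lambda$ bound in $L^*$. The coefficient $L^*(1-L^*)\ln^2((1-L^*)/L^*)$ behaves very differently at the two endpoints ($\asymp L^*\ln^2(1/L^*)$ near $0$, and $\asymp (1-2L^*)^2$ near $1/2$), so a careful case analysis is needed to control the $O(1/\lambda^2)$ remainder and to absorb the $(1-2L^*)^{-2}$ factor inherited from \autoref{MDL UB}~(2) down to the claimed $(1-2L^*)^{-1}$.
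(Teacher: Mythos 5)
Your central quantitative claim is sound and in fact mirrors the paper's key lemma: the paper's \autoref{lemma9} shows $H(q) < U_{\lambda}(q) + \lambda/(\lambda-1)^2$, which is exactly the uniform $O(1/\lambda)$ comparison between $U_\lambda$ and $H$ that your Taylor expansion in $s=1/(\lambda-1)$ is driving at (and one can indeed convert it into $U_\lambda^{-1}(H(L^*))-L^* = O(\lambda/(\lambda-1)^2)$ uniformly in $L^*$, since $q(1-q)\log\tfrac{1-q}{q}$ is bounded on $[0,\tfrac12]$). The genuine gap is in the final assembly. Using \autoref{MDL UB}(2) as a black box can only ever give the prefactor $1/(1-2L^*)^2$ on the finite-sample terms, and your proposed repair --- ``revisit the proof of \autoref{MDL UB}(2) and observe that only a single power is lost when $\lambda>1$ is large'' --- is an assertion, not an argument. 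The $(1-2L^*)^{-2}$ in \autoref{lemmaA.1} is not an artifact: it is forced by uniformity over all $\lambda>1$, because as $\lambda\to 1^+$ one has $U_\lambda^{-1}(\xi)\to 1-2^{-\xi}$, and at $\xi$ near $\tfrac{1+H(L^*)}{2}$ the derivative $H'\!\left(1-2^{-\xi}\right)$ is of order $(1-2L^*)^2$. To get a single power you would need a $\lambda$-dependent lower bound on $U_\lambda'(U_\lambda^{-1}(\xi))$ in the large-$\lambda$ regime (roughly, that $U_\lambda^{-1}$ behaves like $H^{-1}$ up to an $O(1/\lambda)$ shift), i.e.\ a new quantitative lemma; nothing in \autoref{MDL UB}(2) or its proof supplies it, and your sketch does not either. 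So as written the argument proves the theorem only with $(1-2L^*)^{-2}$ in place of the stated $(1-2L^*)^{-1}$.

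The paper avoids this entirely by never inverting $U_\lambda$ for this theorem: it combines \autoref{lemma8} with \autoref{lemma9} to get $H(L(\MDL_\lambda(S))) \le H(L(h^*)) + \Delta$ with $\Delta$ containing the additive $\lambda/(\lambda-1)^2$ term, and then inverts $H$ itself. Convexity of $H^{-1}$ bounds $(H^{-1})'(\xi)$ for every $\xi\in(H(L^*),1)$ by the secant slope $\frac{1/2-L^*}{1-H(L^*)} \le \frac{\ln 2}{1-2L^*}$, which is exactly one power of $(1-2L^*)^{-1}$, with a short vacuity argument when $H(L^*)+\Delta>1$. If you replace your reduction to \autoref{MDL UB}(2) by this route --- use your $U_\lambda$-vs-$H$ comparison additively inside the generalization bound and invert $H$ rather than $U_\lambda$ --- your expansion becomes unnecessary in its sharp form, the endpoint-uniformity issues you flag (behaviour of the remainder near $L^*\to 0$ and $L^*\to \tfrac12$) disappear, and the stated prefactor follows.
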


As with the finite sample guarantee of \autoref{MDL UB}, the factor $\frac{1}{1 - 2L(h^*)}$ is bounded as long as $L(h^*)$ is bounded away from $0.5$, and we can get consistency for any $L(h^*)<0.5$.

The optimal setting for $\lambda_m$ in \autoref{lambda infty} is $\lambda_m=\sqrt{\frac{m}{\abs{h^*}_\pi+\log m}}$, and with any $\lambda_m \propto \sqrt{m}$ we get consistency with rate $\propto \tilde{O}(1/\sqrt{m})$.  More broadly, to get consistency, we need $\lambda_m\rightarrow\infty$  to ensure that $\frac{1}{\lambda}$ vanishes, but also not too fast such that the term $\lambda\left(\frac{\abs{h^*}_{\pi} + \log m}{m}\right)$ also vanishes. This gives the following corollary:

\begin{corollary}\label{lambda_infty_coro}
    For $1 \ll \lambda_m \ll \frac{m}{\log m}$, and any $h^*$ with $\pi(h^*)>0$ and $L(h^*)<0.5$, we have $\underset{m \rightarrow \infty}{\lim} \underset{\pi, D }{\sup} \underset{S \sim D^m}{\mathbb{E}}[L(\MDL_{\lambda_m})] \leq L(h^*)$.
\end{corollary}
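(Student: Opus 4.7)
The proof is a direct application of \autoref{lambda infty}. Fix any $h^*$, $\pi$, and $D$ satisfying $\pi(h^*)>0$ and $L(h^*)<1/2$, and observe that both $\abs{h^*}_\pi=-\log\pi(h^*)$ and $1/(1-2L(h^*))$ are finite constants independent of $m$. Substituting $\lambda=\lambda_m$ into the bound \eqref{thm3.4bd} of \autoref{lambda infty} yields
\begin{equation*}
    \underset{S \sim D^m}{\mathbb{E}}[L(\MDL_{\lambda_m}(S))] \leq L(h^*) + O\left(\frac{1}{1 - 2L(h^*)} \cdot \left(\frac{1}{\lambda_m} + \lambda_m \cdot \frac{\abs{h^*}_\pi + \log m}{m} + \sqrt{\frac{\log^3 m}{m}}\right)\right),
\end{equation*}
so it suffices to show that each of the three summands inside the parentheses vanishes as $m\to\infty$.

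I would verify this term by term: $1/\lambda_m \to 0$ by the assumption $\lambda_m \gg 1$; the middle term is asymptotically equivalent to $\lambda_m \log m/m$ (since $\abs{h^*}_\pi$ is a fixed constant that does not depend on $m$), and this vanishes precisely because $\lambda_m \ll m/\log m$; finally $\sqrt{\log^3 m/m} \to 0$ trivially. Taking $\limsup_{m\to\infty}$ of both sides then gives $\limsup_m \mathbb{E}[L(\MDL_{\lambda_m}(S))] \leq L(h^*)$. Because the error terms depend on $(\pi,D)$ only through the fixed quantities $\abs{h^*}_\pi$ and $L(h^*)$, the same bound holds uniformly over any admissible triple (with $h^*$ held fixed), which is exactly the form in which the supremum appears in the statement.

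There is no substantive obstacle here: the work has already been done inside \autoref{lambda infty}, and what remains is a routine asymptotic verification. The only point worth highlighting is the role of the scaling window $1 \ll \lambda_m \ll m/\log m$, which is dictated by a two-sided balance. The lower condition $\lambda_m \gg 1$ is needed to kill the underfitting term $1/\lambda_m$ (arising from the slack in preferring $h^*$ over lower-complexity competitors), while the upper condition $\lambda_m \ll m/\log m$ is needed to kill the complexity term $\lambda_m \log m / m$ (arising from the cost of encoding the training-error-based adjustment against $h^*$). Equating these two terms gives the optimal choice $\lambda_m \propto \sqrt{m/\log m}$, and substituting it back recovers the $\tilde{O}(1/\sqrt{m})$ rate characteristic of the SRM balance in \eqref{eq:srm}.
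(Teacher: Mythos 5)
Your proposal is correct and follows essentially the same route as the paper: the paper's own proof of this corollary simply plugs $\lambda_m$ into the finite-sample bound \eqref{thm3.4bd} of \autoref{lambda infty} and observes that every term inside the big-$O$ vanishes when $1 \ll \lambda_m \ll m/\log m$, which is exactly your term-by-term verification. Your closing remark on balancing $1/\lambda_m$ against $\lambda_m \log m/m$ to get $\lambda_m \propto \sqrt{m/\log m}$ likewise matches the paper's discussion of the optimal choice $\lambda_m=\sqrt{m/(\abs{h^*}_\pi+\log m)}$ following \autoref{lambda infty}.
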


However, when $\lambda_m = \Omega(m)$, $\MDL_{\lambda_m}$ over-regularizes and leads to catastrophic behavior again:
\begin{theorem}\label{lambda>>m}
    For any $\lambda_m = \Omega(m)$ with $\liminf \frac{\lambda_m}{m}> 10$, any $L^* \in [0,0.5)$, and any $L^* \leq L' < 0.5$, there exists a prior $\pi$, a hypothesis $h^*$ with $\pi(h^*) \geq 0.1$ and source distribution $D$ with $L_D(h^*) = L^*$ such that $\mathbb{E}_S \left[L_D(\MDL_{\lambda_m}(S))\right] \rightarrow L'$ as sample size $m \rightarrow \infty$.
\end{theorem}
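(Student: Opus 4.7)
\begin{hproof}
The key intuition is that when $\lambda_m$ is large enough to overwhelm the empirical-fit term (which is bounded by $\log\binom{m}{mL_S(h)} \leq m + O(\log m)$), the MDL rule is essentially forced to return whichever hypothesis has the smallest description length (i.e., highest prior) subject to the constraint $L_S(h) \leq 1/2$, irrespective of empirical fit. My plan is to construct a prior supported on just two hypotheses, where the one with shorter description is a ``bad'' but admissible predictor of population error $L'$.

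\textbf{Construction.} Take $\mathcal{X} = \{0,1\}$; with $p = (L' - L^*)/(1 - 2L^*) \in [0, 1/2)$, sample $X \sim \text{Ber}(p)$ and $Z \sim \text{Ber}(L^*)$ independently, and set $Y = X \oplus Z$. Let $h^*(x) = x$, so $L_D(h^*) = L^*$, and let $h_0(x) \equiv 0$, so $L_D(h_0) = p(1 - L^*) + (1 - p)L^* = L'$. Let $\pi$ be supported on $\{h_0, h^*\}$ with $\pi(h_0) = 0.9$ and $\pi(h^*) = 0.1$; in particular $\pi(h^*) \geq 0.1$ as required, and $|h^*|_\pi - |h_0|_\pi = \log 9$.

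\textbf{Comparing the two objectives.} Pick $\delta > 0$ with $L' + \delta < 1/2$. By Hoeffding, with probability at least $1 - 2e^{-2m\delta^2}$ both $L_S(h_0) \leq L' + \delta < 1/2$ and $L_S(h^*) \leq L^* + \delta < 1/2$, so both hypotheses satisfy the MDL admissibility constraint. Using $0 \leq \log\binom{m}{k} \leq m + O(\log m)$, on this event
\[
J_{\lambda_m}(h^*, S) - J_{\lambda_m}(h_0, S) = \lambda_m \log 9 + \log\tbinom{m}{mL_S(h^*)} - \log\tbinom{m}{mL_S(h_0)} \geq \lambda_m \log 9 - m - O(\log m).
\]
Since $\liminf_m \lambda_m / m > 10 > 1/\log 9$, this quantity is $\Omega(m)$ and strictly positive for all sufficiently large $m$, so $\MDL_{\lambda_m}(S) = h_0$ on the same high-probability event.

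\textbf{Conclusion.} On this event of probability $1 - o(1)$ the population error equals exactly $L'$; on the complement, of exponentially small probability, the error is at most $1$. Therefore $\mathbb{E}_S[L_D(\MDL_{\lambda_m}(S))] \to L'$ as $m \to \infty$. The only technical ingredients are the standard concentration of $L_S$ and the trivial upper bound on the log-binomial; the constant $10$ stated in the hypothesis is comfortably larger than the $1/\log 9 \approx 0.315$ actually required by this two-hypothesis construction, so there is no real obstacle beyond routine bookkeeping.
\end{hproof}
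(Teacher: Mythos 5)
Your proposal is correct and follows essentially the same route as the paper's proof: a two-hypothesis prior with the ``bad'' predictor of error $L'$ given prior $0.9$ and $h^*$ given prior $0.1$, where the prior gap $\lambda_m\log 9 \geq 10m\log 9$ dominates the data-fit term, which is at most $m+O(\log m)$ bits, so $\MDL_{\lambda_m}$ selects the bad (but constraint-feasible, since $L'<1/2$) predictor with probability $1-o(1)$. The only cosmetic differences are your explicit construction of $D$ on $\mathcal{X}=\{0,1\}$ versus the paper reusing its bit-sequence distribution, and Hoeffding concentration in place of the paper's almost-sure (Borel--Cantelli) argument.
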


This gives an (almost) complete picture of the worst case limiting error of $\MDL_{\lambda_m}$, both when $\lambda_m$ is fixed\footnote{Relying on the finite-sample guarantees in \autoref{MDL UB}, it is also possible to analyze the case where $\lambda_m$ varies with $m$ but has a finite positive limit.} as well as when $\lambda_m$ increases or decreases with $m$:
\begin{description}
    \item[$\mathbf{\lambda_m\rightarrow 0}$:] In this case we have catastrophic over-fitting for any $0<L^*<1/2$, with worst case limiting error :
    \begin{equation}
 \underset{\pi, D }{\sup } \underset{m \rightarrow \infty}{\lim} \underset{S \sim D^m}{\mathbb{E}}\left[ L_D(\MDL_{\lambda_m} (S))\right] = \underset{m \rightarrow \infty}{\lim} \underset{\pi, D }{\sup} \underset{S \sim D^m}{\mathbb{E}}\left[ L_D(\MDL_{\lambda_m} (S))\right] = 1
    \end{equation}
 
    \item[$\mathbf{0<\lambda<\infty}$:] In this case the limiting error is governed by $\ell_\lambda(L^*)$, and discussed further in the next Section.
    
\item[$\mathbf{\lambda_m\rightarrow\infty}$ but $\mathbf{\lambda_m = o\left(\frac{m}{\log(m)}\right)}$]: In this case we have consistency (i.e.~strong learning) and for any $0\leq L^* < 1/2$: 
   \begin{equation}
 \underset{\pi, D }{\sup } \underset{m \rightarrow \infty}{\lim} \underset{S \sim D^m}{\mathbb{E}}\left[ L_D(\MDL_{\lambda_m} (S))\right] = \underset{m \rightarrow \infty}{\lim} \underset{\pi, D }{\sup} \underset{S \sim D^m}{\mathbb{E}}\left[ L_D(\MDL_{\lambda_m} (S))\right] = L^*
    \end{equation}
    
    \item[$\mathbf{\lambda_m = \Omega(m)}$:] In this case we are catastrophically underfitting and for any $0 \leq L^* < 1/2$:
   \begin{equation}
 \underset{\pi, D }{\sup } \underset{m \rightarrow \infty}{\lim} \underset{S \sim D^m}{\mathbb{E}}\left[ L_D(\MDL_{\lambda_m} (S))\right] = \underset{m \rightarrow \infty}{\lim} \underset{\pi, D }{\sup} \underset{S \sim D^m}{\mathbb{E}}\left[ L_D(\MDL_{\lambda_m} (S))\right] = 1/2
    \end{equation}    
 \end{description}
This is an almost complete description, with a gap between $m/\log m$ and $10m$, which is discussed further in \autoref{summary section}.  

\paragraph{Relationship  with \cite{MS}}  Our work was inspired by that of \citet{MS}, who studied (in our language) the interpolating $\MDL_0$ learning rule for a ``Kolmogorov'' prior $\pi$, where $\abs{h}_\pi$ is the minimum program length\footnote{This is almost equivalent to a prior over programs, where characters are generated uniformly at random, until a valid program, in a prefix unambiguous programming language, is reached \citep[e.g.][Appendix A]{buzaglouniform}.} for $h$.  They demonstrated that with the Kolmogorov prior, the tempering behaviour at $\lambda=0$ is given by a tempering function equal to our $\ell_1(L^*)$.  That is, the specific Kolmogorov prior behaves better than the worst case prior for $\lambda=0$ (since the worst case behavior at $\lambda=0$ is always catastrophic).  Our setting and questions are thus very different from theirs (we consider a worst case prior, while their analysis was very specific to the Kolmogorov prior, and we consider any $\lambda$ while they only considered $\lambda=0$), but our core upper bound analysis was inspired by theirs and builds on a non-realizable generalization of the same information-theoretic generalization guarantee. 

\begin{figure}[h]
    \centering
    \begin{minipage}{0.55\textwidth}
        \centering
        \includegraphics[width=\linewidth]{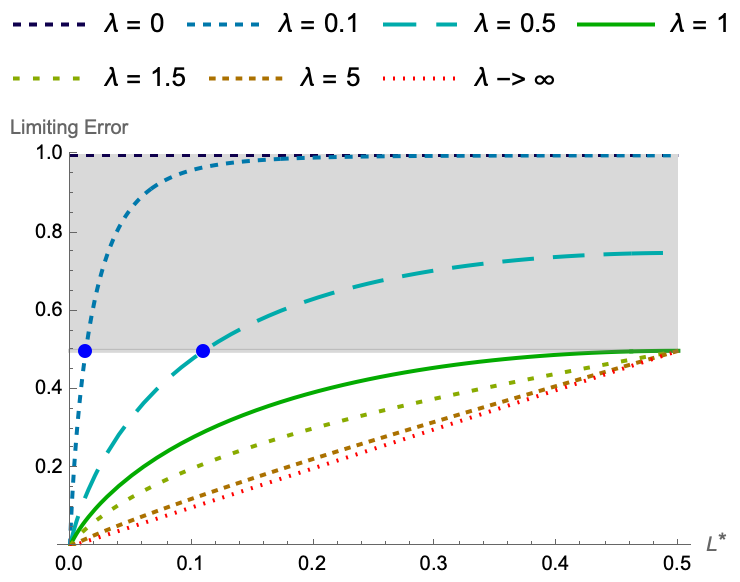}

    \end{minipage}%
    \hfill
    \begin{minipage}{0.45\textwidth}
        \small
        \caption{\small 
  Agnostic worst-case limiting error $\ell_\lambda(L^*)$ (see \autoref{cor:finite} and equation \eqref{ell}) as a  function of the noise level $L^*$, for different $\lambda$.  For each noise level $L^*=L(h^*)$, the curve indicates the best possible guarantee on the limiting error.  As $\lambda\rightarrow\infty$ the tempering curve approaches the diagonal $\ell(L^*)=L^*$, indicating consistency.  For $\lambda<\infty$, the curve is strictly above the diagonal, i.e.~$\ell(L^*)>L^*$ (for $0<L^*<0.5$), and we do not have consistency.  For $\lambda \geq 1$, the curve is always below $0.5$ (the unshaded bottom half of the figure), indicating that for any noise level $L^*<0.5$ overfitting is ``tempered'' in that the limiting error is better than chance.  But for $\lambda<1$, this is only the case for $L^*<L_\textrm{critical}=H^{-1}(\lambda)$, and this critical point is indicated by the blue dots on the curves for $\lambda=0.1,0.5$.  For $\lambda=0$ the worst case limiting error is always 1.}
   \label{fig2}
    \end{minipage}

\end{figure}

\section{The Tempering Function $\ell_{\lambda}(L^*)$}\label{temperfct section}

In the previous Section we obtained an exact characterization of the worst-case limiting error $\ell_\lambda(L^*)$ as a function of the noise level (or error assumption $L(h^*)=L^*$ on the reference predictor $h^*$ with which we are competing), and tradeoff parameter $\lambda$.  This explicit function is plotted in \autoref{fig2} for several values of $\lambda$.

We can see that for $\lambda \geq 1$, the limiting error $\ell_{\lambda}$ is a continuous 1:1 function from $[0,\frac{1}{2}]$ to $[0,\frac{1}{2}]$,  i.e. for any $L^*<\frac{1}{2}$ we have $\ell_{\lambda}(L^*)<\frac{1}{2}$. Hence, the guaranteed overfitting still gives us ``weak learning'' whenever $L^*<0.5$ (i.e.~the reference is better than chance) in the sense that $\underset{m \rightarrow \infty}{\lim}  \underset{\pi, D }{\sup}\underset{S \sim D^m} {\mathbb{E}} \left[ L_D(\MDL_{\lambda} (S))\right] < \frac{1}{2}$, which is at least better than random guessing. However, studying the behaviour about $L^*=0$, we can calculate that the derivative with respect to $L^*$ (the slope of the depicted curve) explodes as $L^*\rightarrow 0$ (i.e.~$\ell'_\lambda(L^*)\rightarrow\infty$), for any $\lambda<\infty$.  This means that although overfitting is ``tempered'' in the sense that we can ensure error better than random guessing, there is no $C_\lambda$ such that $\ell_\lambda(L^*)\leq C_\lambda L^*$, i.e.~the ratio between the limiting error and reference error is unbounded.

On the other hand, for $\lambda<1$, although $\ell_\lambda(L^*)$ is still continuous and 1:1 w.r.t.~$L^*$, and
we still have $\ell_{\lambda}(L^*) \rightarrow 0$ as $L^* \rightarrow 0$, we get tempered overfitting (limiting error better than chance) only if $L^*$ is small enough, specifically lower than some finite critical $L_\textnormal{critical} = H^{-1}(\lambda)$.  If $L^*>L_\textnormal{critical}$, $\MDL_{\lambda}$ is useless since its limiting error can be as bad, or even worse, than random guessing.  The two blue points in \autoref{fig2} indicate this critical point where $\ell_{0.1}$ and $\ell_{0.5}$ hit $\frac{1}{2}$.

As $\lambda_m \rightarrow \infty$, the cost of overfitting vanishes and the tempering function approaches the ``consistent'' $\ell_{\infty}(L^*) = L^*$.  This matches \autoref{lambda infty}, which ensures consistency once $\lambda_m \rightarrow \infty$ (but not too fast, least we start {\em over}regularizing and {\em under}fitting---this effect cannot be seen in the Figures and through $\ell_\lambda$, which only indicates {\em over}fitting behavior for finite $\lambda$).

We can also see the increasing cost of overfitting as $\lambda$ decreases in \autoref{fig3}, which depicts the limiting error $\ell_\lambda(L^*)$ as a function of $\lambda$ for a particular noise level $L^*=0.1$. As long as $\lambda>H(L^*)$, the limiting error is lower than half, we have weak learning and overfitting is ``benign''.  Though we will only have consistency as $\lambda\rightarrow\infty$ and the limiting error curve asymptotes to the noise (or reference error) level $L^*$.  But below the critical $\lambda=H(L^*)$, overfitting is catastrophic and the limiting error is not guaranteed to be better than $0.5$.

\begin{figure}[h]
    \centering
    \begin{minipage}{0.48\textwidth}
        \centering
        \includegraphics[width=\linewidth]{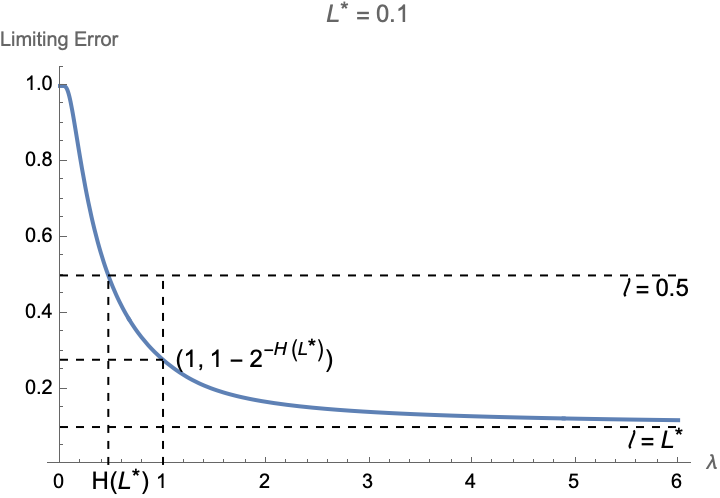}  
        \small \caption{\small Agnostic worst-case limiting error $\ell_\lambda(L^*)$ of $\MDL_{\lambda}$ as a function of $\lambda$, at a fixed noise level $L^* = 0.1$.
        The error curve is a continuous function of $\lambda$ for $0\leq\lambda<\infty$. } \label{fig3}
    \end{minipage}
    \hfill
    \begin{minipage}{0.48\textwidth}
        \centering
        \includegraphics[width=\linewidth]{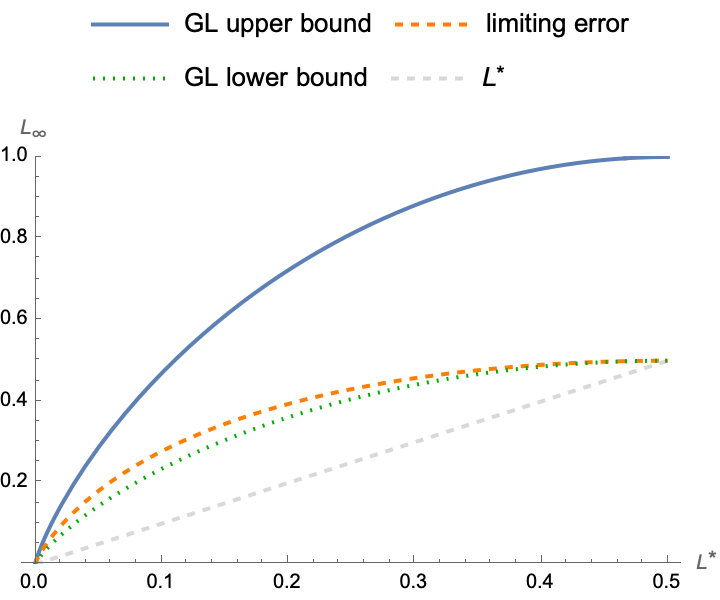}  
        \small  \caption{\small Comparison to \citet{GL},  for the case $\lambda=1$. Their lower bound for the limiting error of $\MDL_1$ is in green. Our matching lower and upper bounds are in red. Also shown in blue is their upper bound for the related Bayes predictor (they do not  provide an upper bound for $\MDL_1$).}
  \label{fig4}
    \end{minipage}
\end{figure}

\paragraph{Comparison with \citet{GL}}
\citeauthor{GL} showed that for $\lambda=1$, the worst case limiting error of $\MDL_1$ is lower bounded by $H(L^*)/2$, which is the green line plotted in \autoref{fig4}, thus worse than $L^*$ for $0<L^*<0.5$.  In our terminology, they showed that $\ell_1(L^*)>H(L^*)/2>L^*$ for $0<L^*<0.5$.  They could not provide an upper bound, and left open how bad the limiting error for $\MDL_1$ could be. Instead they showed an upper bound of $H(L^*)$, depicted in blue in \autoref{fig4}, only for the related but stronger Bayes predictor.  Specializing to $\lambda=1$, we provide a tighter lower bound $\ell_1(L^*) = 1 - 2^{-H(L^*)}>H(L^*)/2$ (for $0<L^*<0.5)$, which is the red line in the figure.  We also provide, for the first time, an upper bound on the limiting error of $\MDL_1$ (as opposed to the Bayes predictor), thus establishing the exact worst case limiting error (the red curve in \autoref{fig4}).  Furthermore, our upper bound is backed up by a finite sample guarantee. Thus, even specializing to the case $\lambda=1$, we significantly improve on the analysis of \citeauthor{GL}.

\section{Generalization Guarantees and Proof of Upper Bounds}

In this Section, we describe our proof technique and provide proof sketches for our upper bound \autoref{MDL UB} and \autoref{lambda infty}.  Recall that these Theorems provide finite sample guarantees on the error $\MDL_\lambda$, which imply upper bounds on the limiting error (\autoref{cor:finite} and \autoref{lambda_infty_coro}). Complete proof details can be found in \autoref{AppendixA}. 

Our upper bounds are based on the following core generalization guarantee:
\begin{lemma}\label{lemma8}
For some constant $C$, any $0 <\lambda < \infty$, with probability $1- \delta$ over $S\sim\mathcal{D}^m$, for any predictor $h^*$:
\begin{gather}
    Q_{\lambda}(L(\MDL_{\lambda}(S))) \leq H(L(h^*)) + {\lambda}\frac{\log(\frac{m+1}{\delta/2})}{m}+{\lambda}\frac{\abs{h^*}_{\pi}}{m} + C\sqrt{\frac{2(\log m)^2 \cdot \log \frac{1}{\delta/2}}{m}} \notag\\
\textrm{where:}\quad\quad Q_{\lambda}(q) = \min_{0 \leq p \leq 0.5}{\lambda}\KL(p \Vert q) + H(p) \label{eq:Q}
\end{gather}
\end{lemma}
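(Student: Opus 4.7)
The plan is to combine three ingredients: a prior-weighted Occam / PAC-Bayes bound controlling $m\KL(L_S(h)\Vert L(h))$ uniformly in $\abs{h}_\pi$, the defining MDL optimality $J_\lambda(\MDL_\lambda(S),S)\leq J_\lambda(h^*,S)$, and a Hoeffding-plus-continuity transport from $H(L_S(h^*))$ to $H(L(h^*))$. The key structural observation is that, by its definition as a minimum over $p\in[0,1/2]$, $Q_\lambda(q)\leq\lambda\KL(p\Vert q)+H(p)$ for any feasible $p$; evaluating at $p=L_S(\hat h)$, with $\hat h:=\MDL_\lambda(S)$ (legitimate because $L_S(\hat h)\leq 1/2$ by the constraint built into $\MDL_\lambda$), reduces the task to upper bounding $\lambda m\KL(L_S(\hat h)\Vert L(\hat h))+mH(L_S(\hat h))$.

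For the Occam step, the method-of-types bound gives $\Pr[m\KL(L_S(h)\Vert L(h))\geq t]\leq (m+1)2^{-t}$ for each fixed $h$. Setting $t_h=\abs{h}_\pi+\log\tfrac{m+1}{\delta/2}$ makes the failure probability at most $\pi(h)\cdot\delta/2$, and a $\pi$-weighted union bound (using $\sum_h\pi(h)\leq 1$) gives, with probability $\geq 1-\delta/2$, that $m\KL(L_S(h)\Vert L(h))\leq\abs{h}_\pi+\log\tfrac{m+1}{\delta/2}$ simultaneously for every $h$ in the support of $\pi$. Combining this with the MDL optimality $J_\lambda(\hat h,S)\leq J_\lambda(h^*,S)$ and the Stirling gap from \eqref{stirling} yields $\lambda\abs{\hat h}_\pi+mH(L_S(\hat h))\leq\lambda\abs{h^*}_\pi+mH(L_S(h^*))+\log(m+1)$. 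Adding $\lambda$ times the Occam inequality at $h=\hat h$ cancels the $\lambda\abs{\hat h}_\pi$ terms and produces
\begin{equation*}
\lambda m\KL(L_S(\hat h)\Vert L(\hat h))+mH(L_S(\hat h))\leq \lambda\abs{h^*}_\pi+\lambda\log\tfrac{m+1}{\delta/2}+mH(L_S(h^*))+\log(m+1),
\end{equation*}
whose left-hand side is at least $mQ_\lambda(L(\hat h))$ by the key observation.

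To finish, I would replace $H(L_S(h^*))$ by $H(L(h^*))$: Hoeffding gives $|L_S(h^*)-L(h^*)|\leq\epsilon:=\sqrt{\log(2/\delta)/(2m)}$ with probability $\geq 1-\delta/2$, and the standard continuity estimate $|H(a)-H(b)|=O(|a-b|\log(1/|a-b|))$ (tight when $a$ or $b$ is near $0$ or $1$) then yields $|H(L_S(h^*))-H(L(h^*))|=O(\sqrt{(\log m)^2\log(1/(\delta/2))/m})$, using $\log(1/\epsilon)=O(\log m)$. A union bound over the two $\delta/2$-failure events, with the lower-order $\log(m+1)/m$ Stirling term absorbed into $C$, completes the proof. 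The main subtlety I anticipate is conceptual rather than technical: one must recognize upfront that the exact quantity to add to the MDL optimality inequality is $\lambda m\KL(L_S(\hat h)\Vert L(\hat h))$, so that the left-hand side becomes a valid upper bound on $mQ_\lambda(L(\hat h))$ via the $p=L_S(\hat h)$ plug-in. The non-Lipschitzness of $H$ at the endpoints is the only real technical point, and is precisely what produces the $\log m$ factor inside the final square root.
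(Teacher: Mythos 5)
Your proposal is correct and follows essentially the same route as the paper's proof: the $\pi$-weighted union bound over the per-predictor binomial-tail/KL concentration, the $\MDL_\lambda$ optimality comparison to $h^*$ with the Stirling gap absorbed as a $\log(m+1)$ term, and the plug-in of $p=L_S(\MDL_\lambda(S))$ (feasible since $L_S(\MDL_\lambda(S))\leq 1/2$) into the minimum defining $Q_\lambda$. The only cosmetic difference is the final step, where the paper controls $H(L_S(h^*))-H(L(h^*))$ via McDiarmid's bounded-differences inequality plus Jensen, while you use Hoeffding on $L_S(h^*)$ plus the modulus of continuity of $H$; both give the same $O\bigl(\sqrt{(\log m)^2\log(1/\delta)/m}\bigr)$ term.
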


\begin{proof}
We start from a concentration guarantee, expressed as a bound on the KL-divergence between empirical and population errors\removed{, which holds for all predictors $h$ with a dependence on $\pi(h)$}.  This is a special case of the PAC-Bayes bound \citep[][Equation (4)]{mcallester2003simplified}, and is obtained directly by taking a union bound over a binomial tail bound\footnote{More specifically, by applying the binomial tail bound of \autoref{thm12} in \autoref{AppendixC} to each predictor $h$ in the support of $\pi$, with per-predictor failure probability $\delta_h=\pi(h)\delta/2$, and taking a union bound over all $h$.}:
\begin{equation}\label{pacybayes1}
\Pr_{S\sim\mathcal{D}^m}\left[ \; \forall_h \; \KL\left(L_S(h) \middle\Vert  L(h) \right) \leq \frac{\abs{h}_{\pi} + \log(\frac{m+1}{\delta/2})}{m} \; \right]  \geq 1-\delta/2.
\end{equation}
Focusing on $h=\MDL_\lambda(S)$, multiplying both sides of the inequality in \eqref{pacybayes1} by $\lambda$, and adding $H(L_S(\MDL_{\lambda}(S)))$ to both sides, we have that that with probability $\geq 1-\delta/2$,
\begin{align}
{\lambda}\KL(L_S(\MDL_{\lambda}(S)) \Vert& L(\MDL_{\lambda}(S))) + H(L_S(\MDL_{\lambda}(S))) \label{eqn8lhs}\\
&\leq H(L_S(\MDL_{\lambda}(S))) +{\lambda}\frac{\abs{\MDL_{\lambda}(S)}_{\pi}}{m} + {\lambda}\frac{\log(\frac{m+1}{\delta/2})}{m}\\
&\leq H(L_S(h^*)) + {\lambda}\frac{\abs{h^*}_{\pi}}{m} + {\lambda}\frac{\log(\frac{m+1}{\delta/2})}{m} + C\frac{\log m}{m} \label{ineqn10}\\
\intertext{and with probability $\geq 1-\delta$:}
&\leq H(L(h^*))+{\lambda}\frac{\abs{h^*}_{\pi}}{m} + {\lambda}\frac{\log(\frac{m+1}{\delta/2})}{m} + C'\sqrt{\frac{2(\log m)^2 \cdot \log \frac{1}{\delta/2}}{m}},\label{ineqn11}
\end{align}
for some constants $C, C'$. The second inequality \eqref{ineqn10} follows from the definition of $\MDL_{\lambda}$, and the last term $C\frac{\log m}{m}$ of \eqref{ineqn10} is the difference between the two objectives $J_{\lambda}$ and $\tilde{J}_{\lambda}$. In the third inequality \eqref{ineqn11} we bound the difference (with another failure probability of $\delta/2$) between the entropy of the empirical and population loss of the fixed predictor $h^*$ using McDiarmid's inequality.

We want to use this to get an upper bound on the population error $L(\MDL_\lambda(S))$.  The problem is that the left-hand-side \eqref{eqn8lhs} also depends on the empirical error $L_S(\MDL_\lambda(S))$, which we do not know and can't easily bound, except that by definition $L_S(\MDL_\lambda(S))\leq1/2$.  Instead, we'll replace this empirical error with $p=L_S(\MDL_\lambda(S))$ and minimize \eqref{eqn8lhs} w.r.t~$p$, as in $Q_{\lambda}(L(\MDL_{\lambda}(S))) = \min_{0 \leq p \leq 0.5} \lambda\KL(p \Vert L(\MDL_{\lambda}(S))) + H(p)$.  From the definition of this $Q_{\lambda}(q)$, we therefore have that $Q_{\lambda}(L(\MDL_{\lambda}(S)))$ is upper bounded by \eqref{eqn8lhs}, from which the Lemma follows.
\end{proof}

From \autoref{lemma8}, we can already see that as $m \rightarrow \infty$, $L(\MDL_{\lambda}(S)) \rightarrow Q^{-1}_{\lambda}(H(L(h^*)))=\ell_{\lambda}(L(h^*))$.  What is left is to simplify $Q^{-1}_{\lambda}(H(L(h^*)))$, and in order to obtain finite sample guarantees, also analyze applying $Q^{-1}_{\lambda}$ to the right-hand-side in \autoref{lemma8}.

\begin{proof} {\bf of \autoref{MDL UB} part (1), $0<\lambda\leq 1$:}

For $0 < \lambda \leq 1$ and $0 \leq q \leq 1/2$,  ${\lambda}\KL(p \Vert q) + H(p)$ is monotonically increasing in $p$, and thus the optimum is at $p^* = 0$. So in this case, $Q_{\lambda}(q) = - \lambda \log(1-q)$, and by \autoref{lemma8}, the limiting error is $Q_{\lambda}^{-1}(H(L(h^*))) = 1 - 2^{-\frac{1}{\lambda}H(L(h^*))}$. To get the finite sample guarantee, we use the inequality $ 1- 2^{-\alpha-A} \leq 1- 2^{-\alpha} + A$ (for $A,\alpha>0$) \citep[adapted from Lemma A.4 in][]{MS}.
\end{proof}
\vspace{-10pt}
\begin{proof} {\bf of \autoref{MDL UB} part (2), $1<\lambda$:}

When $1 < \lambda < \infty$, by taking the derivative of $\lambda \cdot \KL(p \Vert q) + H(p)$ w.r.t. $p$ and setting it to zero, we recover the minimizer $p^* = \frac{1}{1+ (\frac{1-q}{q})^{\frac{\lambda}{\lambda - 1}}}$. Plugging it in we have 
$Q_{\lambda}(q) = \lambda \cdot \KL\left(\frac{1}{1+ (\frac{1-q}{q})^{\frac{\lambda}{\lambda - 1}}}\middle\Vert q\right) + H\left(\frac{1}{1+ (\frac{1-q}{q})^{\frac{\lambda}{\lambda - 1}}}\right)=U_\lambda(q)$, and the limiting error is $U_\lambda^{-1}(H(L(h^*)))$.
To get the finite sample guarantee, we apply $U^{-1}_{\lambda}$ to both sides of \autoref{lemma8}, and then the mean value theorem on the right hand side.  When applying the mean value theorem, we bound the derivative of $U_{\lambda}^{-1}$ uniformly in terms of $L(h^*)$, which introduces the pre-factor of $1/(1-2L(h^*))$.  See details in \Cref{AppendixA.1}.
\end{proof}
\vspace{-10pt}
\begin{proof} {\bf of \autoref{lambda infty}, $1\ll\lambda$:}

As $\lambda\rightarrow\infty$, the first term inside the definition of $Q_\lambda$ (equation \eqref{eq:Q}) dominates, the minimizer is $p^*=q$, and we have $Q_\lambda(q)\rightarrow H(q)$.  We would therefore like to apply $H^{-1}$ to both sides of \autoref{lemma8} to obtain a bound on $L(\MDL(S))-L(h^*)$.  To do so for finite $\lambda$, we prove the following Lemma in Appendix \ref{AppendixA.2}, which quantifies how close $U_{\lambda}(q)$ is to the entropy function $H(q)$:
\begin{lemma}\label{lemma9} For any $\lambda >1$ and any $0 \leq q \leq \frac{1}{2}$, $H(q) < U_{\lambda}(q)+ \lambda/(\lambda-1)^2$.
\end{lemma}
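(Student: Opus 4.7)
The plan is to reduce the lemma to a one-variable calculus inequality via a closed-form expression for $U_\lambda(q)$. Plugging the explicit minimizer $p^* = q^r/(q^r+(1-q)^r)$ with $r = \lambda/(\lambda-1)$ into $U_\lambda(q) = \lambda\KL(p^*\Vert q)+H(p^*)$, the two cross-entropy terms $p^*\log q + (1-p^*)\log(1-q)$ appear with coefficients $\lambda(r-1)$ and $-r$, which cancel exactly because $\lambda(r-1) = r$. This yields the R\'enyi-type identity $U_\lambda(q) = -(\lambda-1)\log(q^r + (1-q)^r)$, and hence
\[
H(q) - U_\lambda(q) \;=\; H(q) + (\lambda-1)\log\bigl(q^r+(1-q)^r\bigr).
\]

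Set $\epsilon = 1/(\lambda-1) = r-1$ and $\psi(\epsilon) = \log(q^{1+\epsilon}+(1-q)^{1+\epsilon}) + \epsilon H(q)$, so that $H(q)-U_\lambda(q) = \psi(\epsilon)/\epsilon$ and the target bound $\lambda/(\lambda-1)^2 = \epsilon(1+\epsilon)$ becomes $\psi(\epsilon) \leq \epsilon^2(1+\epsilon)$. Direct differentiation gives $\psi(0) = \log 1 = 0$ and $\psi'(0) = (q\ln q+(1-q)\ln(1-q))/\ln 2 + H(q) = -H(q)+H(q) = 0$. By Taylor's theorem with integral remainder,
\[
\psi(\epsilon) \;=\; \int_0^\epsilon (\epsilon-s)\,\psi''(s)\,ds \;\leq\; \frac{\epsilon^2}{2}\,\sup_{s\geq 0}\psi''(s),
\]
so it suffices to bound $\psi''$ by a universal constant no larger than $2$.

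A short computation identifies $\psi''(\epsilon)$ with $\mathrm{Var}_{\widetilde{P}_\epsilon}(\ln V)/\ln 2$, where $\widetilde{P}_\epsilon$ is the exponentially tilted distribution $\widetilde{P}_\epsilon(q) \propto q^{1+\epsilon}$ on $\{q, 1-q\}$. Substituting $\tau := (q/(1-q))^{1+\epsilon} \in (0,1]$ then collapses everything to
\[
\psi''(\epsilon) \;=\; \frac{1}{(1+\epsilon)^2 \ln 2}\cdot\frac{\tau(\ln\tau)^2}{(1+\tau)^2},
\]
and the elementary one-variable fact $\tau(\ln\tau)^2 \leq 4/e^2$ for $\tau \in (0,1]$ (attained at $\tau = e^{-2}$) gives $\psi''(\epsilon) \leq 4/(e^2\ln 2) < 1$ uniformly in $q$ and $\epsilon$. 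Combining yields $\psi(\epsilon) \leq \epsilon^2/2 \leq \epsilon^2(1+\epsilon)$, which is the lemma.

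The main obstacle is the initial algebraic computation: before discovering the cancellation $\lambda(r-1) = r$ the expression for $U_\lambda$ is opaque, but once the R\'enyi-type form is in hand the rest is a routine second-derivative argument. The only remaining subtlety is that $\ln^2(q/(1-q))$ appearing inside $\psi''$ diverges as $q\to 0$, but this is exactly compensated by the decay of $\widetilde{P}_\epsilon(q)(1-\widetilde{P}_\epsilon(q))$; the substitution to $\tau$ absorbs this cancellation into a single uniformly bounded one-variable function.
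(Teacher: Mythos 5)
Your proposal is correct, and it takes a genuinely different route from the paper. I checked the key algebraic step: with $r=\lambda/(\lambda-1)$ and $p^*=q^r/(q^r+(1-q)^r)$, the cross-entropy terms do cancel (since $\lambda(r-1)=r$), giving the closed form $U_\lambda(q)=-(\lambda-1)\log\left(q^r+(1-q)^r\right)$, i.e.\ $U_\lambda$ is exactly the R\'enyi entropy of order $r$; the subsequent reduction to $\psi(\epsilon)\leq\epsilon^2(1+\epsilon)$ with $\epsilon=1/(\lambda-1)$, the verification $\psi(0)=\psi'(0)=0$, the identification of $\psi''$ as a tilted variance, and the bound $\tau(\ln\tau)^2\leq 4/e^2$ are all sound, and strictness and the boundary cases $q\in\{0,\tfrac12\}$ cause no trouble. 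The paper argues differently and more elementarily: it keeps $U_\lambda(q)=\lambda\KL(p^*\Vert q)+H(p^*)$, discards the nonnegative $\KL$ term, and controls $H(q)-H(p^*)$ by two applications of the mean value theorem through $\phi(q)=\log\frac{q}{1-q}$ (first to bound $\delta=q-p^*$ by $-\frac{q(1-q)}{\lambda-1}\phi(q)$, then to bound $H(q)-H(p^*)$), finishing with $q(1-q)\phi^2(q)<1$. What your route buys is the clean R\'enyi identity and a quantitatively sharper estimate, $H(q)-U_\lambda(q)\leq\frac{2}{e^2\ln 2}\cdot\frac{1}{\lambda-1}$, which implies the stated bound since it is below $\frac{1}{\lambda-1}\leq\frac{\lambda}{(\lambda-1)^2}$; what the paper's route buys is brevity and avoidance of the cumulant-generating-function/variance machinery. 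Either argument suffices for the way the lemma is used (only the $\Theta(1/\lambda)$ decay matters for \autoref{lambda infty}).
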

Combining \autoref{lemma8} and \autoref{lemma9}, we have that with probability $\geq 1 - \delta$, 
\begin{equation}\label{thm3.4H}
    H(L(\MDL_{\lambda}(S))) \leq H(L(h^*)) + C\sqrt{\frac{2(\log m)^2 \cdot \log \frac{1}{\delta/2}}{m}}
    + {\lambda}\frac{\log(\frac{m+1}{\delta/2})}{m}+{\lambda}\frac{\abs{h^*}_{\pi}}{m} + \frac{\lambda}{(\lambda-1)^2}.
\end{equation}
We apply $H^{-1}$ to both sides of \eqref{thm3.4H}, use the mean value theorem, and bound the derivative of $H^{-1}$ by $\frac{\frac{1}{2} - L(h^*)}{1 - H(L(h^*))} \leq \frac{\ln 2}{1 - 2L(h^*)}$, yielding the desired result. See details in Appendix \ref{AppendixA.2}.\end{proof}

\removed{\begin{proof} {\bf of Corollary \ref{lambda_infty_coro},  Consistency when $1 \ll \lambda_m \ll m/\log m$.}

Since $1 \ll \lambda_m \ll m/\log m$ , as $m \rightarrow \infty$, all the terms inside the big-O notation on the right hand side of equation \eqref{thm3.4bd} in Theorem \ref{lambda infty} vanish, yielding the consistency result.
\end{proof}}

\section{Lower Bound Constructions and Proof Sketch}
In this Section, we describe  constructive lower bound proofs on the limiting error.  We show explicit constructions for $0 < \lambda < \infty$ ( \autoref{MDL LB}), for $\lambda_m \rightarrow 0$ or $\lambda = 0$ (\autoref{lambda0}), and for $\lambda_m = \Omega(m)$ with $\liminf \frac{\lambda_m}{m}> 10$ (\autoref{lambda>>m}).  In each regime, we construct specific hard learning problems, priors, and hypothesis classes such that the expected error of $\MDL_{\lambda}$ converges to the lower bound error asymptotically. Complete details and proofs can be found in \autoref{AppendixB}.

\subsection{Lower Bound for $0 < \lambda < \infty$ (proof of \autoref{MDL LB})}\label{sec:finite LB proof}

For any $0<\lambda<\infty$, any $0<L^*<0.5$, and any $L^* \leq L' < \ell_\lambda(L^*)$, we will construct a source distribution (hard learning problem) $D$ and a prior $\pi$, and show a hypothesis $h^*$ with $\pi(h^*) \geq 0.1$ and $L_D(h^*) = L^*$, such that $\mathbb{E}_S \left[L_D(\MDL_{\lambda}(S))\right] \rightarrow L'$ as the sample size increases ($m \rightarrow \infty$).  

Specifically, we will construct a distribution $D$ over infinite binary sequences $x = x[0] x[1]... \in \mathcal{X} = \{0,1\}^\infty$ and binary labels $y\in\{\pm1\}$, and a prior over hypothesis $h_i(x)=x[i]$ with\footnote{This is a simple and explicit ``universal'' prior, in the sense that $\abs{h_i}_\pi=\log i+O(\log\log i)$, and it ensures $\pi(h_0)=0.1$ (we treat $0\cdot \log^2 0 = 0$).} $\pi(h_i)=1/(i \cdot \log^2 i + 10)$, where each hypothesis is based on one bit of the input (this just allows us to directly specify the joint distribution over the behavior of the hypothesis by specifying the distribution of $x$).  In our constructions $h_0(x)=x[0]$ will always be the ``good'' predictor, $h^*=h_0$, with low population error $L_D(h_0)=\Pr[x[0]\neq y]=L^*$, while all $h_i$, $i\geq 1$, will be ``bad'', with $L_D(h_i)=L'>L^*$.  We will ensure that as $m\rightarrow\infty$, $\MDL_\lambda$ will select one of these ``bad'' predictors, i.e.~$\Pr_{S\sim D^m}\left[\MDL_\lambda(S)=h_0\right]\xrightarrow{m\rightarrow\infty} 0$ and $L(\MDL_\lambda(S))\xrightarrow{p} L'$.

Given $L^*,L'$, we consider a source distribution $D$ where $y = \text{Ber}(\frac{1}{2})$, and each bit $x[i]$ is independent conditioned on $y$, with $x[0] = y \oplus \text{Ber}(L^*)$, while $x[i] = y \oplus \text{Ber}(L')$. This ensures $L_D(h_0)=L^*$ while $L_D(h_i)=L'$ for $i\geq 1$.

We will analyze the $\MDL$ objective $J_\lambda(h,S)$, or rather its approximation $\tilde{J}_\lambda(h,S)=\lambda \abs{h}_\pi + m H(L_S(h))$ (as in equation \eqref{stirling}). We will argue that (with probability approaching one), $J_\lambda(h,S)$ is minimized not on $h_0$, and hence $\MDL_\lambda(S)=h_i$ for $i\geq 1$ and so $L(\MDL_\lambda(S))=L'$. For the ``good'' predictor $h_0$ we have that $L_S(h_0)\xrightarrow{p} L(h_0)=L^*$, and hence $\tilde{J}_\lambda(h_0, S) \xrightarrow{p} mH(L^*) + \lambda \log 10 $. For an explicit function $k(m)$, we will show that, with probability approaching one, there exists $1\leq i\leq k(m)$ with $\tilde{J}_\lambda(h_i,S) < \tilde{J}_\lambda(h_0,S)- \Omega(m) < \tilde{J}_\lambda(h_0,S)- \omega(\log m)$, ensuring $h_0$ does not minimize $J_\lambda(h,S)$ (the $\omega(\log m)$ gap ensures that the difference between $J$ and $\tilde{J}$ is insignificant compared to the gap).

\begin{enumerate}
\item $\lambda \leq 1$: Take $k(m) = \frac{2\sqrt{m}}{(1-L')^m}$, then (with probability approaching one), there exists some ``bad'' classifier $h_{\hat{i}}$ with $1\leq \hat{i} \leq k(m)$ such that $L_S(h_{\hat{i}}) = 0$, and so
\begin{align}
 \tilde{J}_{\lambda}(h_{\hat{i}},S) &=\lambda \cdot (\log \hat{i} + O(\log\log \hat{i})) + m H(0) 
 \leq \lambda \log k(m) + O(\log\log k(m)) + 0\\
 &=\lambda (1+\tfrac{1}{2}\log m - m \log(1-L')) + O(\log\log k(m)) \\
 &\leq  - \lambda m \log(1-L') +  O(\log m) 
 < m H(L^*) - \Omega(m) = \tilde{J}_\lambda(h_0,S) - \Omega(m)
\end{align}
where in the final inequality we used $L' < 1 - 2^{-H(L^*)/{\lambda}}$, and the asymptotic notation is w.r.t.~$m\rightarrow\infty$.

\item $\lambda>1$: Take $k(m) = 2^{m\KL(\hat{L}\Vert L')}$ where $\hat{L} = \frac{1}{1+ (\frac{1-L'}{L'})^\frac{\lambda}{\lambda - 1}}$. Let $h_{\hat{i}}$ be the empirical error minimizer among the first $k(m)$ bad predictors, i.e.~such that $L_S(h_{\hat{i}}) = \min_{i=1\ldots k(m)} L_S(h_i)$.  This is the minimum of $k(m)$ independent (scaled) binomials $\text{Bin}(m,L')$, and so concentrates (see \autoref{thm12} in \autoref{AppendixC}) s.t.~$\KL( L_S(h_{\hat{i}}) \Vert L') \xrightarrow{p} \frac{\log k(m)}{m}=\KL(\hat{L}\Vert L')$, and hence $L_S(h_{\hat{i}}) \xrightarrow{p} \hat{L}$ and 
\begin{align}
   \tilde{J}_{\lambda}(h_{\hat{i}},S) &\xrightarrow{p} \lambda \cdot (\log \hat{i} + O(\log\log \hat{i})) + m H(L_S(h_{\hat{i}})) \\
& \leq \lambda \log k(m) + O(\log\log k(m)) + m H(\hat{L}) + o( m)\label{eq:Jbad0} \\
 &= m\left(\lambda \KL(\hat{L}\Vert L')+H(\hat{L})\right) + o(m)
 = m U_\lambda(L') + o(m) \label{eq:Jbad1}\\
 &< m U_\lambda(U^{-1}_\lambda(H(L^*))) - \Omega(m) = m H(L^*)-\Omega(m)=\tilde{J}_\lambda(h_0,S) - \Omega(m)\label{eq:Jbad2}
\end{align}
where in \eqref{eq:Jbad1} we plugged in $k(m)$ and used the definition of $U_\lambda$ from equation \eqref{ell}, and in \eqref{eq:Jbad2} we relied on  $L' < U_{\lambda}^{-1} ( H(L^*) )$. See further explanations in \Cref{AppendixB.1}. \qed
\end{enumerate}

\subsection{Lower Bound for $\lambda_m \rightarrow 0$ or $\lambda = 0$ (proof of \autoref{lambda0})}
We now turn to $\lambda_m \rightarrow 0$ or $\lambda = 0$, and show that for any $0<L^* <0.5$ and $L^* \leq L' < 1$, the source distribution described in \autoref{sec:finite LB proof}, and with the same prior, such that $L(\MDL_{\lambda}(S)) \xrightarrow{p} L'$ as $m \rightarrow \infty$ despite $L(h_0)=L^*$ and $\pi(h_0)=0.1$.

If $\lambda = 0$, then $\MDL_{\lambda}$ simply minimizes $L_S(h)$.  There exists a.s.~some $\hat{i}$ with $L_S(h_{\hat{i}})=0$, but on the other hand $L_S(h_0)\xrightarrow{p}L^*>0$.  Hence, with probability approaching one, $\MDL_0(S)\neq h_0$ and so $L(\MDL_0(S))=L'$.

If $\lambda_m \rightarrow 0$ as $m \rightarrow \infty$, let $\hat{i}$ denote the smallest index $\hat{i}\geq 1$ such that $L_S(h_{\hat{i}}) = 0$. We already saw that $\hat{i} \leq\frac{m+1}{(1-L')^m}$ with probability approaching one. We therefore have that with probability approaching one, $\tilde{J}_{\lambda_m}(h_{\hat{i}},S) = \lambda_m \abs{h_{\hat{i}}}+mH(0)\leq\lambda_m \log\frac{m+1}{(1-L')^m}=o(m)$, where in the last step we used $\lambda_m\rightarrow 0$.  On the other hand, $\tilde{J}_{\lambda_m}(h_0,S)\xrightarrow{p} m H(L^*)=\Omega(m)$.  See details in \Cref{AppendixB.2}. \qed

\subsection{Lower Bound for $\lambda_m = \Omega(m)$ with $\liminf \frac{\lambda_m}{m}> 10$ (proof of \autoref{lambda>>m})}
We now turn to $\lambda_m = \Omega(m)$ with $\liminf \frac{\lambda_m}{m}> 10$, and show that for any $0 \leq L^* <0.5$ and $L^* \leq L' < 0.5$, the source distribution described in \autoref{sec:finite LB proof} with only two predictors $\{h_0, h_1\}$, $L(h_0)=L^*,L(h_1)=L'$, and with the prior $\pi(h_0)=0.1$ and $\pi(h_1)= 0.9$, such that $L(\MDL_{\lambda}(S))\xrightarrow{p} L'$ as $m \rightarrow \infty$. 

Since $L_S(h_0)\xrightarrow{p}L^*$ and $L_S(h_1)\xrightarrow{p}L'$, we have that 
\begin{align}
        \tilde{J}_{\lambda_m}(h_1, S)&=\lambda_m\log\frac{10}{9} + mH(L_S(h_1))
        \xrightarrow{p}\lambda_m\log\frac{10}{9} + mH(L')\\
        &<\lambda_m\log\frac{10}{9}+ m + mH(L^*)+\Omega(m)\notag\\
        &<\lambda_m\log10 + mH(L^*)+\Omega(m) = \tilde{J}_{\lambda_m}(h_0, S)+\Omega(m)
\end{align}
where in the final inequality we used $\liminf \frac{\lambda_m}{m}> 10$. See details in \Cref{AppendixB.3}.

\section{Contrast with Well-Specified Case}\label{wellspecifiedsection}
It is interesting to contrast the agnostic setting studied above to a well-specified setting, where the noise is a result of random label noise. Formally, a source distribution $D$ is well specified if $Y|X = Y|h^*(X)$
(that is, $Y\perp X|h^*(X)$), which means that  
$Y|X = h^*(X) \oplus \text{Ber}(L^*)$ for some Bayes optimal predictor $h^*$ and independent Bernoulli noise.  Note that this condition is not satisfied in any of the hard problem constructions of our lower bound proofs. In other words, all the source distributions in our lower bound proofs are mis-specified. 
In fact, in the well-specified case, as already noted by \citet{GL}, $\lambda=1$ leads to asymptotic consistency, following the classical analysis of MDL \citep{BC}. However, as is well understood in the MDL literature \cite[e.g.][]{TongZ}, this consistency does not enjoy a uniform rate or finite sample guarantee. In our language, it provides an upper bound on the left-side expression in \autoref{cor:finite}, where we take the limit $m\rightarrow\infty$ separately for each prior $\pi$ and source $D$, but not the right-side expression where we first take the limit $m\rightarrow\infty$.  
For the right-side expression in \autoref{cor:finite}, even in the well-specified case and with $\lambda=1$, we can obtain an upper bound of $2L^*(1-L^*)>L^*$\footnote{This uniform upper bound can be obtained from the weak convergence result from \citet{TongZ} by choosing a particular test function $f(x,y) = \mathbb{P}_{y'|x \sim p_{h^*, L^*}}(y \neq y'|x)$.}, and also show that this ``uniform'' limiting error is strictly larger than $L^*$. In this sense, we have tempered behavior, with a better tempering function $2L^*(1-L^*)<\ell_1(L^*)$ than the agnostic case we focus on in this paper. It would be interesting to understand this problem further: what is best uniform rate with $\lambda=1$?  Is this tempering function tight? What is the uniform and non-uniform limiting error when $\lambda>1$, and with $\lambda<1$ ?  Is there a discontinuity at $\lambda=1$?

\section{Summary and Discussion}\label{summary section}
In this paper, we provided a tight analysis, with matching upper bounds and worst-case lower bounds, on the limiting error of $\MDL_{\lambda}$, for any $0<\lambda<\infty$. This improves both the lower and upper bounds over \citet{GL} for the special case $\lambda=1$, and generalizes to any $\lambda$. 

We also characterize the behavior as $\lambda\rightarrow0$ and $\lambda\rightarrow\infty$, with a gap between $\lambda = \Theta(m/\log(m))$ and $\lambda = \Theta(m)$. This $\log$-factor comes from the $\log$ factor in the Binomial tail bound (see \autoref{AppendixC}), which also appears in all PAC-Bayes bounds and in many SRM-type bounds based on $\log \pi$.  We do not know if this log-factor can be avoided, and it could be interesting to characterize the fine grained behavior at this transition.

Our analysis does not assume any structure on the prior, and so can be thought of as the ``baseline'' or absolute worst case overfitting behavior. For many specific priors, and perhaps for special classes of source distributions, we know that even with $\lambda=0$ one can get tempered, or even benign behavior.  This work can serve as a basis for understanding overfitting for specific priors.

\paragraph{Acknowledgments} This work was initiated by an anonymous reviewer who directed us to \citet{GL}. We would like to thank Naren Manoj for helpful discussions and Mesrob I. Ohannessian for working with us on a crisp formulation of \autoref{AppendixC}. This work was done as part of the NSF-Simons Collaboration on the Mathematics of Deep Learning and the NSF TRIPOD Institute on Data Economics Algorithms and Learning.

\bibliographystyle{plainnat}
\bibliography{Reference}

\newpage
\appendix

\section{Generalization Guarantees and Proof of Upper Bounds}\label{AppendixA}
In this section, we provide proofs for \autoref{MDL UB} and \autoref{lambda infty}. We first prove an important lemma, \autoref{lemma8}, which serves as the basis of the proof of both theorems.

\restate{Lemma}{lemma8} 
\begin{restatethm}
For some constant $C$, any $0 <\lambda < \infty$, with probability $1- \delta$ over $S\sim\mathcal{D}^m$, for any predictor $h^*$:
\begin{gather}
    Q_{\lambda}(L(\MDL_{\lambda}(S))) \leq H(L(h^*)) + {\lambda}\frac{\log(\frac{m+1}{\delta/2})}{m}+{\lambda}\frac{\abs{h^*}_{\pi}}{m} + C\sqrt{\frac{2(\log m)^2 \cdot \log \frac{1}{\delta/2}}{m}} \notag\\
\textrm{where:}\quad\quad Q_{\lambda}(q) = \min_{0 \leq p \leq 0.5}{\lambda}\KL(p \Vert q) + H(p) \label{eq:Q}
\end{gather}
\end{restatethm}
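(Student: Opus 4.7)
The plan is to combine three ingredients: a uniform binomial-tail bound across the support of $\pi$, the definitional optimality of $\MDL_\lambda$, and a concentration-of-entropy step for the single fixed hypothesis $h^*$. First I would invoke \autoref{thm12} separately at every $h$ in the support of $\pi$ with failure probability $\delta_h = \pi(h)\cdot\delta/2$. Since $\sum_h \pi(h) \leq 1$, a union bound yields, with probability at least $1-\delta/2$, the simultaneous inequality $\KL(L_S(h)\,\Vert\, L(h)) \leq (\abs{h}_\pi + \log((m+1)/(\delta/2)))/m$ for all $h$, which is precisely the starting inequality \eqref{pacybayes1}.

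Second, I would instantiate this inequality at $h = \MDL_\lambda(S)$, multiply both sides by $\lambda$, and add $H(L_S(\MDL_\lambda(S)))$ to each side, so that the right-hand side becomes $\tilde J_\lambda(\MDL_\lambda(S),S)/m + \lambda\log((m+1)/(\delta/2))/m$. Using definitional optimality of $\MDL_\lambda$ in the exact objective $J_\lambda$, together with the Stirling bound $0 \leq \tilde\Delta \leq \log(m+1)$ from \eqref{stirling}, I can swap $\MDL_\lambda(S)$ for $h^*$ on the right at an additive cost of $O(\log(m)/m)$, arriving at the upper bound $H(L_S(h^*)) + \lambda\abs{h^*}_\pi/m + \lambda\log((m+1)/(\delta/2))/m + O(\log(m)/m)$.

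Third, and this is the main technical hurdle, I still have an $L_S(h^*)$ inside the entropy on the right that must be replaced by $L(h^*)$. The obstacle is that $H$ is not globally Lipschitz ($|H'(p)| = |\log((1-p)/p)|$ diverges at the endpoints), so a naive McDiarmid on $H\circ L_S$ fails. I would exploit that $L_S(h^*)$ takes values on the lattice $\{0,1/m,\dots,1\}$: the one-sample bounded-difference constant of $p \mapsto H(p)$ between adjacent lattice points is $O(\log(m)/m)$, and McDiarmid then yields an upper deviation of order $\log(m)\sqrt{\log(1/\delta)/m}$. Combined with Jensen's inequality ($\mathbb{E}[H(L_S(h^*))] \leq H(L(h^*))$ by concavity of $H$), this produces the last term $C\sqrt{2(\log m)^2\log(1/(\delta/2))/m}$, with failure probability $\delta/2$.

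Finally, I would union-bound the two failure events for total probability $\delta$ and read off the left-hand side of what remains: since $L_S(\MDL_\lambda(S)) \in [0,1/2]$ by the constraint built into the definition of $\MDL_\lambda$, the quantity $\lambda\KL(L_S(\MDL_\lambda(S))\,\Vert\, L(\MDL_\lambda(S))) + H(L_S(\MDL_\lambda(S)))$ is at least $\min_{0\leq p \leq 1/2}\lambda\KL(p\,\Vert\,L(\MDL_\lambda(S))) + H(p) = Q_\lambda(L(\MDL_\lambda(S)))$, which gives exactly the inequality claimed in the lemma.
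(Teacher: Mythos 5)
Your proposal is correct and follows essentially the same route as the paper's proof: the union bound over the binomial tail bound with per-hypothesis failure probability $\pi(h)\delta/2$, the swap to $h^*$ via optimality of $\MDL_\lambda$ plus the Stirling gap $\tilde\Delta \leq \log(m+1)$, the McDiarmid-with-bounded-differences-$O(\log(m)/m)$ argument together with Jensen's inequality to replace $H(L_S(h^*))$ by $H(L(h^*))$, and the final minimization over $p\in[0,1/2]$ to obtain $Q_\lambda$. No gaps.
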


\begin{proof}
We start from a concentration guarantee, expressed as a bound on the KL-divergence between empirical and population errors\removed{, which holds for all predictors $h$ with a dependence on $\pi(h)$}.  This is a special case of the PAC-Bayes bound \citep[][Equation (4)]{mcallester2003simplified}, and is obtained directly by taking a union bound over a binomial tail bound\footnote{More specifically, by applying the binomial tail bound of \autoref{thm12} in \autoref{AppendixC} to each predictor $h$ in the support of $\pi$, with per-predictor failure probability $\delta_h=\pi(h)\delta/2$, and taking a union bound over all $h$.}:
\begin{equation}\label{pacybayes2}
\Pr_{S\sim\mathcal{D}^m}\left[ \; \forall_h \; \KL\left(L_S(h) \middle\Vert  L(h) \right) \leq \frac{\abs{h}_{\pi} + \log(\frac{m+1}{\delta/2})}{m} \; \right]  \geq 1-\delta/2.
\end{equation}
Focusing on $h=\MDL_\lambda(S)$, multiplying both sides of the inequality in \eqref{pacybayes2} by $\lambda$, and adding $H(L_S(\MDL_{\lambda}(S)))$ to both sides, we have that that with probability $\geq 1-\delta/2$,
\begin{align}
{\lambda}\KL(L_S(\MDL_{\lambda}(S)) \Vert& L(\MDL_{\lambda}(S))) + H(L_S(\MDL_{\lambda}(S))) \label{eqn21lhs}\\
&\leq H(L_S(\MDL_{\lambda}(S))) +{\lambda}\frac{\abs{\MDL_{\lambda}(S)}_{\pi}}{m} + {\lambda}\frac{\log(\frac{m+1}{\delta/2})}{m}\\
&\leq H(L_S(h^*)) + {\lambda}\frac{\abs{h^*}_{\pi}}{m} + {\lambda}\frac{\log(\frac{m+1}{\delta/2})}{m} + C\frac{\log m}{m} \label{ineqn23}
\end{align}
for some constants $C, C'$. The second inequality \eqref{ineqn23} follows from the definition of $\MDL_{\lambda}$, and the last term $C\frac{\log m}{m}$ of \eqref{ineqn23} is the difference between the MDL objective $J_{\lambda}$ and its approximate form $\tilde{J}_{\lambda}$ as defined in \eqref{stirling}. 

Note that $H(L_S(h^*))$ concentrates to its expectation. Observe that even though $H$ is not Lipschitz, it's still the case that $|H(p+q)-H(p)| \leq H(q) \leq 2q\log(1/q)$ for $q<\frac{1}{2}$, and changing a single sample in $S$ can only change $L_S(h^*)$ by at most $1/m$, and so $H(L_S(h^*))$ by at most $2\log(m)/m$. In this way, for any $h$, the function  $S \rightarrow H(L_S(h^*))$ satisfies the bounded difference property with differences $c_i = 2\log(m)/m$. Therefore, by McDiarmid's inequality, $H(L_S(h^*))$ concentrates:
\begin{align}\label{mcdiarmid}
    \mathbb{E}\left[H(L_S(h^*))\right] > H(L_S(h^*))
    - \sqrt{\frac{2(\log m)^2 \cdot \log \frac{1}{\delta/2}}{m}}, 
\end{align}
with probability $\geq 1-\delta/2$.

Therefore, combining the two high probability events \eqref{ineqn23} and \eqref{mcdiarmid} using the union bound, we get with probability $\geq 1- \delta$,
\begin{align}
    {\lambda}\KL(L_S(\MDL_{\lambda}(S)) &\Vert L(\MDL_{\lambda}(S))) + H(L_S(\MDL_{\lambda}(S)))\label{lemma5.1highprob}\\
    &\leq \mathbb{E}\left[H(L_S(h^*))\right] +{\lambda}\frac{\abs{h^*}_{\pi}}{m} + {\lambda}\frac{\log(\frac{m+1}{\delta/2})}{m} + C\frac{\log m}{m} + \sqrt{\frac{2(\log m)^2 \cdot \log \frac{1}{\delta/2}}{m}}\\
    &\leq H(L(h^*))+{\lambda}\frac{\abs{h^*}_{\pi}}{m} + {\lambda}\frac{\log(\frac{m+1}{\delta/2})}{m} + C'\sqrt{\frac{2(\log m)^2 \cdot \log \frac{1}{\delta/2}}{m}},
\end{align}
for some constant $C'$. In the second inequality, we use Jenson's inequality $\mathbb{E}[H(L_S(h^*))] \leq H(\mathbb{E}[L_S(h^*)]) = H(L(h^*))$.

We want to use this to get an upper bound on the population error $L(\MDL_\lambda(S))$.  The problem is that the left-hand-side \eqref{lemma5.1highprob} also depends on the empirical error $L_S(\MDL_\lambda(S))$, which we do not know and can't easily bound, except that by definition $L_S(\MDL_\lambda(S))\leq1/2$.  Instead, we'll replace this empirical error with $p=L_S(\MDL_\lambda(S))$ and minimize \eqref{lemma5.1highprob} w.r.t~$p$, as in $Q_{\lambda}(q) = \min_{0 \leq p \leq 0.5} \lambda\KL(p \Vert q) + H(p)$.  From the definition of this $Q_{\lambda}(q)$, we therefore have that $Q_{\lambda}(L(\MDL_{\lambda}(S)))$ is upper bounded by \eqref{lemma5.1highprob}, from which the Lemma follows.
\end{proof}
From \autoref{lemma8}, we can already see that as $m \rightarrow \infty$, $L(\MDL_{\lambda}(S)) \rightarrow Q^{-1}_{\lambda}(H(L(h^*)))=\ell_{\lambda}(L(h^*))$.  The proof of \autoref{MDL UB} and \autoref{lambda infty} then reduces to simplifying $Q^{-1}_{\lambda}(H(L(h^*)))$, and also applying $Q^{-1}_{\lambda}$ to the right-hand-side in \autoref{lemma8}. To analyze $Q_{\lambda}$, we need to optimize over $p \in [0,0.5]$. It turns out the minimum point $p^*$ is different depending on the value/scaling of $\lambda$. 

\subsection{Proof of \autoref{MDL UB} ($0 < \lambda < \infty$)}\label{AppendixA.1}
Consider the function $\ell_{\lambda}$:
\[
\ell_{\lambda}(L^*)= Q_{\lambda}^{-1}(H(L^*))=
\begin{cases}
1 - 2^{-\frac{1}{\lambda}H(L^*)},  & \text{for } 0< \lambda \leq 1 \\
U_{\lambda}^{-1}(H(L^*)), & \text{for } \lambda > 1,
\end{cases}
\]
where $Q_{\lambda}(q) = \min_{0 \leq p \leq 0.5}{\lambda}\KL(p \Vert q) + H(p)$, and $U_{\lambda}(q) = \lambda \KL(\frac{1}{1+ (\frac{1-q}{q})^{\frac{\lambda}{\lambda - 1}}}\Vert q) + H(\frac{1}{1+ (\frac{1-q}{q})^{\frac{\lambda}{\lambda - 1}}})$.\\
\restate{Theorem}{MDL UB} 
\begin{restatethm} [Agnostic Upper Bound]
(1) For any $0<\lambda\leq 1$,  any source distribution $D$, any predictor $h^*$, any valid prior $\pi$, and any $m$:
\begin{align}
    \underset{S \sim D^m}{\mathbb{E}}[L(\MDL_{\lambda}(S))]
    \leq 1 - 2^{-\frac{1}{\lambda}H(L(h^*))} + O\left(\frac{\abs{h^*}_{\pi}}{m} + \frac{1}{\lambda}\sqrt{\frac{\log^3 (m)}{m}}\right).
\end{align}
(2)
For any $\lambda > 1$,  any source distribution $D$, any predictor $h^*$, any valid prior $\pi$, and any $m$:
\begin{align}
    \underset{S \sim D^m}{\mathbb{E}}[L(\MDL_{\lambda}(S))]
    \leq &U_{\lambda}^{-1}(H(L(h^*)))
    + O\left(\frac{1}{(1 - 2L(h^*))^2} \cdot \left(\lambda\left(\frac{\abs{h^*}_{\pi} + \log m}{m}\right) + \sqrt{\frac{\log^3 (m)}{m}}\right)\right).
\end{align}
Where $O(\cdot)$ only hides an absolute constant, that does not depend on $D, \pi$ or anything else.
\end{restatethm}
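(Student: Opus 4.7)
The plan is to invoke \autoref{lemma8} as a black box and reduce each part of the theorem to three steps: (a) evaluate the envelope function $Q_\lambda(q) = \min_{p \in [0,1/2]}[\lambda\KL(p\Vert q) + H(p)]$ in closed form; (b) invert $Q_\lambda$ quantitatively using a mean-value-type argument; and (c) convert the resulting high-probability inequality to an expectation bound by taking $\delta = 1/m$, which costs only an additive $1/m$ because the zero-one loss lies in $[0,1]$.

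For part (1), $0 < \lambda \leq 1$, a direct derivative computation gives $\partial_p[\lambda\KL(p\Vert q) + H(p)] = (\lambda-1)\log\frac{p}{1-p} - \lambda\log\frac{q}{1-q}$. For $p,q \in (0,1/2]$ both logarithms are non-positive while both coefficients $(\lambda-1)$ and $-\lambda$ are non-positive, so each term is $\geq 0$ and the inner objective is non-decreasing in $p$ over $[0,1/2]$. Hence $p^* = 0$, $Q_\lambda(q) = -\lambda\log(1-q)$, and $Q_\lambda^{-1}(y) = 1 - 2^{-y/\lambda}$. Applying $Q_\lambda^{-1}$ to \autoref{lemma8} gives $L(\MDL_\lambda(S)) \leq 1 - 2^{-(H(L(h^*)) + E)/\lambda}$ with high probability, where $E$ collects the error terms of \autoref{lemma8}. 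I would then split the exponent using the elementary inequality $1 - 2^{-\alpha - A} \leq 1 - 2^{-\alpha} + A$ for $\alpha,A \geq 0$ (which follows from $2^{-A} \geq 1 - A\ln 2 \geq 1 - A$, so $2^{-\alpha}(1 - 2^{-A}) \leq A$). Applying this with $A = E/\lambda$ explains the $1/\lambda$ prefactor on the $\sqrt{\log^3 m / m}$ term in the stated bound.

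For part (2), $\lambda > 1$, solving $\partial_p f(p) = 0$ explicitly yields the interior critical point $p^*(q) = 1/\bigl(1 + ((1-q)/q)^{\lambda/(\lambda-1)}\bigr)$, which lies in $(0,1/2]$ for $q \in (0,1/2]$ (since $(1-q)/q \geq 1$ and the exponent exceeds $1$); a second-derivative check confirms it is a minimum, giving $Q_\lambda = U_\lambda$. Applying $U_\lambda^{-1}$ to \autoref{lemma8} and invoking the mean value theorem yields
\[
L(\MDL_\lambda(S)) \leq U_\lambda^{-1}(H(L(h^*))) + E \cdot \sup_{y \in [H(L^*), H(L^*)+E]} (U_\lambda^{-1})'(y).
\]
By the envelope theorem, $U_\lambda'(q) = \lambda(q - p^*(q))/(q(1-q))$, and I would lower bound this uniformly on the relevant interval, then translate to an upper bound on $(U_\lambda^{-1})'$. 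Combined with the $\lambda$ factors already present in the error terms of \autoref{lemma8}, this produces the stated $\lambda/(1-2L(h^*))^2$ dependence.

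The main obstacle is the derivative bound in part (2): the factor $q - p^*(q)$ vanishes as $q \to 1/2$ because $p^*(1/2) = 1/2$, so $U_\lambda'$ degenerates there and $(U_\lambda^{-1})'$ blows up as $L^* \to 1/2$. Extracting the precise $1/(1-2L(h^*))^2$ dependence requires a careful Taylor-type expansion of $p^*(q)$ near $q = 1/2$, together with a verification that the perturbation $E = O(\sqrt{\log^3 m/m} + \lambda(\abs{h^*}_\pi + \log m)/m)$ is small enough that the entire interval $[H(L^*), H(L^*)+E]$ stays within the regime where the lower bound on $U_\lambda'$ applies. All remaining steps—the explicit form of $p^*$, the envelope formula for $U_\lambda'$, and the passage from high probability to expectation via $\delta = 1/m$—are routine.
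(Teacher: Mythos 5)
Your overall route is the same as the paper's: invoke \autoref{lemma8}, compute the minimizer of $\lambda\KL(p\Vert q)+H(p)$, invert, and pass from high probability to expectation with $\delta$ polynomial in $1/m$. Part (1) of your argument is complete and matches the paper's proof essentially verbatim (monotonicity giving $p^*=0$, $Q_\lambda(q)=-\lambda\log(1-q)$, the inequality $1-2^{-\alpha-A}\leq 1-2^{-\alpha}+A$, which is exactly the paper's Lemma A.4 citation), and your envelope formula $U_\lambda'(q)\propto\lambda\,(q-p^*(q))/(q(1-q))$ agrees with the paper's expression up to the $1/\ln 2$ constant.

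However, in Part (2) the step you defer as ``the main obstacle'' is not a routine finishing touch: it is the actual content of the paper's proof (its Lemma \ref{lemmaA.1}). You need a lower bound on $U_\lambda'(U_\lambda^{-1}(\xi))$ of order $(1-2L^*)^2$ that is \emph{uniform over all $\lambda>1$}, because the $O(\cdot)$ in the theorem hides an absolute constant. This uniformity is not automatic from the envelope formula: as $\lambda\to\infty$, $p^*(q)\to q$ and $U_\lambda'\to H'$, which vanishes at $q=1/2$, so the degeneracy you flag at $q\to 1/2$ interacts with the supremum over $\lambda$. The paper handles this with a genuinely nontrivial two-case argument: for $L^*\leq 0.45$ a compactness/uniform-convergence argument over $(\xi,\lambda)\in[0,0.997]\times[1,\lambda_0]$ combined with the pointwise limit $U_\lambda'\to H'$, and for $L^*\geq 0.45$ a comparison $U_\lambda'(q)>H'(q)$ valid for $q>0.217$ together with the Taylor bound $H'\bigl(1-2^{-(1+H(L^*))/2}\bigr)\gtrsim(\tfrac12-L^*)^2$. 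Your sketch names a plausible ingredient (Taylor expansion of $p^*$ near $q=1/2$) but does not carry out any of this, so the $1/(1-2L^*)^2$ dependence is asserted rather than proved. A second, smaller gap: the theorem is claimed ``for any $m$,'' but your plan requires the perturbation $E$ to be small enough that $H(L^*)+E$ stays in the range of $U_\lambda$ and in the regime where your derivative bound applies; for small $m$ this fails. The paper closes this by showing that when $\Delta\geq\tfrac12(1-H(L^*))$ the stated bound is vacuously true (since $1-H(L^*)\gtrsim(1-2L^*)^2$ forces the error term past $1/2$); your proposal asks only for a ``verification'' that $E$ is small, which cannot hold uniformly in $m$, so this vacuousness case analysis is missing.
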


\begin{proof}
For $0 < \lambda \leq 1$ and $0 \leq q \leq 1/2$, it is easy to check that the derivative of ${\lambda}\KL(p \Vert q) + H(p)$ w.r.t. $p$ is non-negative, which means it is monotonically increasing, and thus the optimum is at $p^* = 0$. So in this case, $Q_{\lambda}(q) = - \lambda \log(1-q)$.\\
Plugging $Q_{\lambda}(q) = - \lambda \log(1-q)$ into the \autoref{lemma8}, we have for some constant $C$, with probability $\geq 1 - \delta$, 
\begin{align}
    -\lambda \log(1-L(\MDL_{\lambda}(S))) \leq H(L(h^*)) + C\sqrt{\frac{2(\log m)^2 \cdot \log \frac{1}{\delta/2}}{m}} + {\lambda}\frac{\log(\frac{m+1}{\delta/2})}{m}+{\lambda}\frac{\abs{h^*}_{\pi}}{m}.  
\end{align}
Hence, with probability $\geq 1 - \delta$, 
\begin{align}
   L(\MDL_{\lambda}(S)) &\leq 1 - 2^{-\frac{H(L(h^*))}{\lambda} - \left(\frac{C}{\lambda}\sqrt{\frac{2(\log m)^2 \cdot \log \frac{1}{\delta/2}}{m}} + \frac{\log(\frac{m+1}{\delta/2})}{m}\right) - \left(\frac{\abs{h^*}_{\pi}}{m}\right)}\\
   &\leq 1-2^{-\frac{H(L(h^*))}{\lambda}}+ \left(\frac{C}{\lambda}\sqrt{\frac{2(\log m)^2 \cdot \log \frac{1}{\delta/2}}{m}} + \frac{\log(\frac{m+1}{\delta/2})}{m}\right) +\frac{\abs{h^*}_{\pi}}{m},\label{lambda<1:ineq}
\end{align}
where in \eqref{lambda<1:ineq}, we use the inequality 
\begin{align}\label{eqn9}
\text{For any } \alpha, A \geq 0, 1- 2^{-\alpha-A} \leq 1- 2^{-\alpha} + A,
\end{align}
which is adapted from Lemma A.4 in \citet{MS}.

Since the risk is bounded, the high probability bound implies the bound on expected risk:
\begin{equation}
    \mathbb{E}L(\MDL_{\lambda}(S)) \leq 1-2^{-\frac{H(L(h^*))}{\lambda}}
    + \left(\frac{C}{\lambda}\sqrt{\frac{2(\log m)^2 \cdot \log \frac{1}{\delta/2}}{m}} + \frac{\log(\frac{m+1}{\delta/2})}{m}\right)
    +\frac{\abs{h^*}_{\pi}}{m} + \delta.
\end{equation}
Take $\delta=\frac{1}{\sqrt{m}}$, given $0 < \lambda \leq 1$, this gives us
\begin{equation}
\mathbb{E}[L(\MDL_{\lambda}(S))] \leq  1 - 2^{-\frac{1}{\lambda}H(L(h^*))} + O\left(\frac{\abs{h^*}_{\pi}}{m} + \frac{1}{\lambda}\sqrt{\frac{\log^3 (m)}{m}}\right).
\end{equation} 
This concludes the proof for $0 < \lambda \leq 1$.

On the other hand, when $1 < \lambda < \infty$, the minimum point $p^*$ is not always at zero. Taking the derivative of $\lambda \cdot \KL(p \Vert q) + H(p)$ w.r.t. $p$, and setting it to zero, we get $p^* = \frac{1}{1+ (\frac{1-q}{q})^{\frac{\lambda}{\lambda - 1}}}$. So in this case, 
\begin{align}
    Q_{\lambda}(q) &=  \min_{0 \leq p \leq 0.5}{\lambda}\KL(p \Vert q) + H(p) \\
    &= \lambda \cdot \KL\left(\frac{1}{1+ (\frac{1-q}{q})^{\frac{\lambda}{\lambda - 1}}}\middle\Vert q\right) + H\left(\frac{1}{1+ (\frac{1-q}{q})^{\frac{\lambda}{\lambda - 1}}}\right) = U_{\lambda}(q).\notag
\end{align}

Plugging $Q_{\lambda}(q) = U_{\lambda}(q)$ into \autoref{lemma8}, we have for some constant $C$, with probability $\geq 1 - \delta$,  
\begin{equation}\label{lambda>1hpb}
    U_{\lambda}(L(\MDL_{\lambda}(S))) \leq H(L(h^*)) + C\sqrt{\frac{2(\log m)^2 \cdot \log \frac{1}{\delta/2}}{m}}
    + {\lambda}\frac{\log(\frac{m+1}{\delta/2})}{m}+{\lambda}\frac{\abs{h^*}_{\pi}}{m}.
\end{equation}
Taking $\delta = \frac{1}{\sqrt{m}}$, we have with probability $\geq 1 - \frac{1}{\sqrt{m}}$,  for some constant $C', C''$,
\begin{equation}\label{eqnUlambda}
    U_{\lambda}(L(\MDL_{\lambda}(S))) \leq H(L(h^*)) + C'\frac{(\log m)^\frac{3}{2}}{\sqrt{m}}
    + {\lambda}C''\frac{\log m}{m}+\lambda \frac{\abs{h^*}_{\pi}}{m}.
\end{equation}
    
    Let $\Delta = C'\frac{(\log m)^\frac{3}{2}}{\sqrt{m}}
    + {\lambda}C''\frac{\log m}{m}+{\lambda}\frac{\abs{h^*}_{\pi}}{m}$. 
    If $\Delta < \frac{1}{2}(1 - H(L(h^*)))$ and so the right hand side of \eqref{eqnUlambda} $H(L(h^*)) + \Delta< \frac{1+H(L(h^*))}{2} < 1$, it is then well-defined to apply the inverse function $U_{\lambda}^{-1}$ on both sides of \eqref{eqnUlambda} to yield that with probability $\geq 1 - \frac{1}{\sqrt{m}}$, 
\begin{align}
    L(\MDL_{\lambda}(S)) &\leq U_{\lambda}^{-1}\left(H(L(h^*)) + \Delta \right).    
\end{align}
Since the risk is bounded, the high probability bound implies the bound on expected risk:
\begin{equation}\label{eqn_15}
    \mathbb{E}L(\MDL_{\lambda}(S)) \leq U_{\lambda}^{-1}\left(H(L(h^*)) + \Delta \right) + \frac{1}{\sqrt{m}}.
\end{equation}
By the mean value theorem
    \begin{align}
        U_{\lambda}^{-1}( H(L(h^*)) + \Delta ) &=U_{\lambda}^{-1}( H(L(h^*)) ) + (U_{\lambda}^{-1})'(\xi)\Delta \\
        &= U_{\lambda}^{-1}( H(L(h^*)) ) + \frac{1}{U_{\lambda}'(U_{\lambda}^{-1}(\xi))}\Delta, \label{eqn__20}
    \end{align}
for some $\xi \in (H(L(h^*)), H(L(h^*)) + \Delta)$.

Since $H(L(h^*)) + \Delta < \frac{1+H(L(h^*))}{2} < 1$,  $\xi$ lies strictly inside a sub-interval of $(0,1)$ and bounded away from $0$ and $1$. The following lemma shows that then $\frac{1}{U_{\lambda}'(U_{\lambda}^{-1}(\xi))}$ is uniformly (over all $\lambda >1$) upper bounded by some function depending on $L(h^*)$. 

\begin{lemma}\label{lemmaA.1} For some positive constant $c >0$, any $\lambda > 1$, any $L^* \in (0, 0.5)$, and any $\xi \in \left(H(L^*), \frac{1+H(L^*)}{2}\right)$:
\begin{gather}
    \frac{1}{U_{\lambda}'(U_{\lambda}^{-1}(\xi))} \leq \frac{1}{\min\left(c, H'\left(1 - 2^{-\frac{H(L^*) + 1}{2}}\right)\right)} = O\left(\frac{1}{(L^* - \frac{1}{2})^2}\right)
\end{gather}
\end{lemma}
\begin{proof}{\bf of \autoref{lemmaA.1}:}
    It is equivalent to proving $\forall \lambda > 1$, $\forall L^* \in (0, 0.5)$, $\forall \xi \in (H(L^*), \frac{1+H(L^*)}{2})$, we have $U_{\lambda}'(U_{\lambda}^{-1}(\xi)) \geq \min\left(c, H'\left(1 - 2^{-\frac{H(L^*) + 1}{2}}\right)\right)$. To prove the statement, we split into two cases: $L^* \leq 0.45$, and $L^* > 0.45$, and show that the derivative $U_{\lambda}'(U_{\lambda}^{-1}(\xi))$ is uniformly lower bounded in each case.

    Case (1): When $L^* < 0.45$, then $\xi \in \left(H(L^*), \frac{1+H(L^*)}{2}\right) < \frac{1 + H(L^*)}{2} < 0.997$. We will show the derivatives $U_{\lambda}'(U_{\lambda}^{-1}(\xi))$ for all $\lambda >1$ and $\xi <0.997$ stay away from 0. Indeed, we can find a positive constant $c > 0$ such that for any $\lambda> 1$, and $\xi < 0.997$, $U_{\lambda}'(U_{\lambda}^{-1}(\xi)) \geq c$:
    
    By Envelope Theorem, we can find the derivative of $U_{\lambda}$ to be
    \begin{align}\label{U_deriv}
        U_{\lambda}'(q) = \frac{\lambda}{\ln 2}\left[\frac{1 - p^*(q)}{1 - q} - \frac{p^*(q)}{q}\right],
    \end{align}
    where $p^*(q) = \frac{1}{1 + (\frac{1 - q}{q})^{\frac{\lambda}{\lambda - 1}}}$ is the minimizer of $U_{\lambda}$ for $q \in (0, 0.5)$. Observe that $p^*(q) < q$ for $q \in (0, 0.5)$. By Taylor expansion of $p^*$, we have $p^* \rightarrow q$, $U_{\lambda} \rightarrow H$ and $U'_{\lambda} \rightarrow H'$ pointwise, as $\lambda \rightarrow \infty$.
    For each $\xi <0.997$, $U_{\lambda}^{-1}(\xi)$ and $H^{-1}(\xi)$ stays within $(0, 0.5)$, so we have $U_{\lambda}'(U_{\lambda}^{-1}(\xi)) \rightarrow H'(H^{-1}(\xi))$ due to the monotonicity and continuity of $U_{\lambda}$ and $H$. Because the domain $[0, 0.997]$ for $\xi$ is compact and $U'_{\lambda}(U_{\lambda}^{-1}(\xi))$ and $H'(H^{-1}(\xi))$ are both continuous in $\xi$, we can conclude uniform convergence such that
    for any fixed $\epsilon >0$, we can find a $\lambda_0$ such that for all $\lambda > \lambda_0$ and all $\xi \leq 0.997$, $\left|U_{\lambda}'(U_{\lambda}^{-1}(\xi)) - H'(H^{-1}(\xi))\right| < \epsilon$. Taking $\epsilon = \frac{1}{2}\min_{\xi \leq 0.997}H'(H^{-1}(\xi))$ yields that $U_{\lambda}'(U_{\lambda}^{-1}(\xi)) > \frac{1}{2}\min_{\xi \leq 0.997}H'(H^{-1}(\xi)) = 0.093$ for all $\lambda > \lambda_0$ and all $\xi \leq 0.997$. 

    On the other hand, because the function $(\xi, \lambda) \mapsto U_{\lambda}'(U_{\lambda}^{-1}(\xi))$ is continuous over the compact domain $[0, 0.997] \times [1, \lambda_0]$ (where we define $U_1(q) = -\log(1-q)$), by extreme value theorem and that $U_{\lambda}'(U_{\lambda}^{-1}(\xi))>0$ over this domain, $U_{\lambda}'(U_{\lambda}^{-1}(\xi))$ achieves a strictly positive minimum on this set and denote this minimum as $c_0$.

    Let $c = \min (0.093, c_0)$, which is thus the uniform positive lower bound we found for $U_{\lambda}'(U_{\lambda}^{-1}(\xi))$, for all $\lambda > 1$ and $\xi < 0.997$. 

    Case (2): 
    When $L^* \geq 0.45$, for any $\xi \in (H(L^*), \frac{1+H(L^*)}{2})$ and any $\lambda > 1$, we have $U_{\lambda}^{-1}(\xi) \geq U_{\lambda}^{-1}(H(L^*)) \geq H^{-1}(H(L^*)) = L^* \geq 0.45$ due to monotonicity of $U_{\lambda}$. Note that for $0.217 < q < 0.5$, we have $(1-q)\ln \frac{1 - q}{q} < 1$, and thus $U'_{\lambda}(q)>H'(q)$  by Taylor expansion of $p^*$. Therefore, $\forall \lambda > 1$, $\forall L^* \geq 0.45$, $\forall \xi \in (H(L^*), \frac{1+H(L^*)}{2})$, we have
    \begin{equation}
        U'_{\lambda}\left(U_{\lambda}^{-1}(\xi)\right) > H'\left(U_{\lambda}^{-1}(\xi)\right) > H'\left(U_1^{-1}(\xi)\right)>H'\left(U_1^{-1}\left(\frac{1+H(L^*)}{2}\right)\right)= H'\left(1 - 2^{-\frac{1+H(L^*)}{2}}\right).
    \end{equation}
    Combining case (1) and (2) yields that $\forall \lambda > 1$, $\forall L^* \in (0,0.5)$, $\forall \xi \in (H(L^*), \frac{1+H(L^*)}{2})$, $U'_{\lambda}\left(U_{\lambda}^{-1}(\xi)\right) \geq \min\left(c, H'\left(1 - 2^{-\frac{1+H(L^*)}{2}}\right)\right)$. This proves the first half of \autoref{lemmaA.1}. By Taylor expansion, we have $H'\left(1 - 2^{-\frac{1+H(L^*)}{2}}\right) > \frac{2}{\ln 2}(L^* - \frac{1}{2})^2$. This yields that $\frac{1}{U_{\lambda}'(U_{\lambda}^{-1}(\xi))} \leq \frac{1}{\min\left(c, H'\left(1 - 2^{-\frac{1+H(L^*)}{2}}\right)\right)} = O\left(\frac{1}{(L^* - \frac{1}{2})^2}\right)$. 


    This completes the proof of \autoref{lemmaA.1}.
\end{proof}
Combining \autoref{lemmaA.1}, \eqref{eqn_15}, and \eqref{eqn__20}, for $m \gtrsim \max\{\frac{2\lambda}{1 - H(L(h^*))}\log^2(\frac{2\lambda}{1 - H(L(h^*))}), \frac{4}{(1 - H(L(h^*)))^2}\log^3(\frac{4}{(1 - H(L(h^*)))^2})\}$ such that $\Delta < \frac{1}{2}(1 - H(L(h^*)))$, we have 
\begin{align}
    \mathbb{E}L(\MDL_{\lambda}(S)) &\leq U_{\lambda}^{-1}\bigg(H(L(h^*)) + \Delta \bigg) + \frac{1}{\sqrt{m}}\\
    &= U_{\lambda}^{-1}( H(L(h^*)) ) + \frac{1}{U_{\lambda}'(U_{\lambda}^{-1}(\xi))}\Delta + \frac{1}{\sqrt{m}}\\
    &\leq U_{\lambda}^{-1}( H(L(h^*)) ) + \frac{1}{\min\left(c, \frac{2}{\ln 2}(L(h^*) - \frac{1}{2})^2\right)}\Delta + \frac{1}{\sqrt{m}} \label{eqn3.1rhs}\\
    &= U_{\lambda}^{-1}( H(L(h^*)) ) + O\left(\frac{1}{(1 - 2L(h^*))^2} \cdot \left(\lambda\left(\frac{\abs{h^*}_{\pi} + \log m}{m}\right) + \sqrt{\frac{\log^3 (m)}{m}}\right)\right).\label{eq58}
\end{align}

On the other hand, if $m$ is small such that $\Delta \geq \frac{1}{2}(1 - H(L(h^*)))$, then by Taylor expansion, $1 - H(L(h^*)) > \frac{2}{\ln 2}(L(h^*) - \frac{1}{2})^2$. But then the right hand side of \eqref{eqn3.1rhs} $ \geq \frac{1}{\frac{2}{\ln 2}(L(h^*) - \frac{1}{2})^2}\Delta \geq \frac{1}{\frac{2}{\ln 2}(L(h^*) - \frac{1}{2})^2}\cdot \frac{1}{2}(1 - H(L(h^*))) > \frac{1}{\frac{2}{\ln 2}(L(h^*) - \frac{1}{2})^2}\cdot \frac{1}{2} \cdot \frac{2}{\ln 2}(L(h^*) - \frac{1}{2})^2 = \frac{1}{2}$. As a result, the bound is vacuously true. 

Therefore, the bound \eqref{eq58} holds for any $m$. This completes the proof of \autoref{MDL UB}.
\end{proof}
Next, we prove the finite sample guarantee of \autoref{lambda infty}, and then use it to derive the consistency result when $\lambda \rightarrow \infty$ presented in \autoref{lambda_infty_coro}. 

\subsection{Proof of \autoref{lambda infty} and \autoref{lambda_infty_coro}}\label{AppendixA.2}

\restate{Theorem}{lambda infty} 
\begin{restatethm} 
For any predictor $h^*$, source distribution $D$, valid prior $\pi$, and any $\lambda > 1$ and $m$:
\begin{equation}\label{thm 3.4 bd}
    \underset{S \sim D^m}{\mathbb{E}}[L(\MDL_{\lambda}(S))] 
    \leq L(h^*)+ O\left(\frac{1}{1 - 2L(h^*)} \cdot \left(\frac{1}{\lambda} + \lambda\left(\frac{\abs{h^*}_{\pi} + \log m}{m}\right) + \sqrt{\frac{\log^3 (m)}{m}}\right)
    \right),
\end{equation}
where $O(\cdot)$ only hides an absolute constant, that does not depend on $D, \pi$ or anything else.
\end{restatethm}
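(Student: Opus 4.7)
The plan is to begin from the core generalization guarantee (Lemma \ref{lemma8}): with probability $\geq 1-\delta$,
\[
Q_\lambda\!\bigl(L(\MDL_\lambda(S))\bigr) \leq H(L(h^*)) + \lambda\frac{\log(\tfrac{m+1}{\delta/2})}{m} + \lambda\frac{\abs{h^*}_\pi}{m} + C\sqrt{\tfrac{2(\log m)^2\log(2/\delta)}{m}}.
\]
For $\lambda>1$, the derivation already used in the proof of Theorem \ref{MDL UB} part~(2) identifies the minimizer of $\lambda\KL(p\Vert q)+H(p)$ on $[0,1/2]$ as $p^*(q)=1/(1+((1-q)/q)^{\lambda/(\lambda-1)})$, giving $Q_\lambda(q) = U_\lambda(q)$. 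So the left-hand side above becomes $U_\lambda(L(\MDL_\lambda(S)))$.

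The key new ingredient is Lemma \ref{lemma9}, which quantifies how close $U_\lambda$ is to $H$: $H(q) \leq U_\lambda(q) + \lambda/(\lambda-1)^2$ for all $q\in[0,1/2]$ and $\lambda>1$. Substituting into the previous display yields
\[
H\!\bigl(L(\MDL_\lambda(S))\bigr) \leq H(L(h^*)) + \frac{\lambda}{(\lambda-1)^2} + (\text{error terms of Lemma \ref{lemma8}}),
\]
where $\lambda/(\lambda-1)^2 = O(1/\lambda)$ for $\lambda$ bounded away from $1$. I would then apply the branch of $H^{-1}$ mapping $[0,1]$ to $[0,1/2]$ to both sides, and use the mean value theorem to write
\[
H^{-1}(H(L(h^*))+\Delta) = L(h^*) + \frac{1}{H'(H^{-1}(\xi))}\cdot\Delta
\]
for some $\xi\in(H(L(h^*)),H(L(h^*))+\Delta)$, where $\Delta$ collects all the error terms. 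Since $H'$ is positive and decreasing on $[0,1/2]$, I can uniformly bound $1/H'(H^{-1}(\xi))$ by its value at the rightmost point, and a direct Taylor expansion of $H$ around $1/2$ shows this is at most $\ln 2/(1-2L(h^*))$ whenever $\Delta \leq \tfrac{1}{2}(1-H(L(h^*)))$; in the opposite regime the claimed bound is vacuous (it exceeds $1/2$) and holds trivially.

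Finally, I would set $\delta = 1/\sqrt{m}$ to convert the high-probability bound into an expectation bound, using that the risk lies in $[0,1]$; this contributes an additional $O(1/\sqrt{m})$ absorbed into $\sqrt{\log^3 m/m}$. Multiplying the collected error terms $1/\lambda + \lambda(\abs{h^*}_\pi+\log m)/m + \sqrt{\log^3 m/m}$ by the pre-factor $\ln 2/(1-2L(h^*))$ produces exactly the bound \eqref{thm 3.4 bd}. The main obstacle I anticipate is Lemma \ref{lemma9} itself: the quantitative closeness $H(q)-U_\lambda(q)=O(1/(\lambda-1)^2\cdot\lambda)$ does not follow from pointwise convergence alone, and establishing it uniformly in $q\in[0,1/2]$ requires a careful Taylor expansion of $p^*(q)$ around $q$ (the minimizer approaches $q$ at rate $1/(\lambda-1)$) and of $\lambda\KL(p^*\Vert q)+H(p^*)$ around that point, tracking how the first-order term cancels and the second-order term produces the $\lambda/(\lambda-1)^2$ rate. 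Once the quantitative gap between $U_\lambda$ and $H$ is established, the rest of the argument is essentially a routine application of the mean value theorem and of the concentration bounds already used for Lemma \ref{lemma8}.
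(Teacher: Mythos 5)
Your proposal follows essentially the same route as the paper's proof: \autoref{lemma8} with $Q_\lambda=U_\lambda$ for $\lambda>1$, the gap bound $H(q)<U_\lambda(q)+\lambda/(\lambda-1)^2$ of \autoref{lemma9} (which the paper establishes via the logit identity $\log\frac{p^*}{1-p^*}=\frac{\lambda}{\lambda-1}\log\frac{q}{1-q}$ and two mean-value-theorem steps, exactly the mechanism you sketch), then applying $H^{-1}$, the mean value theorem, a uniform $O\!\left(\frac{1}{1-2L(h^*)}\right)$ bound on $(H^{-1})'$, vacuity in the complementary regime, and $\delta=1/\sqrt{m}$ to pass to expectation. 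The only cosmetic differences are that the paper bounds $(H^{-1})'(\xi)$ by the chord slope $\frac{\frac12-L(h^*)}{1-H(L(h^*))}$ using convexity of $H^{-1}$ rather than by its value at the right endpoint of the interval, and that your caveat about $\lambda$ near $1$ (where $\lambda/(\lambda-1)^2$ is not $O(1/\lambda)$) is harmless because there $1/\lambda\geq 1/2$, so the stated bound holds trivially once the hidden constant is taken large enough.
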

\begin{proof}
we first prove \autoref{lemma9} which quantifies how close the binary entropy function $H(q)$ is to the function $U_{\lambda}(q) = \lambda \cdot \KL(\frac{1}{1+ (\frac{1-q}{q})^{\frac{\lambda}{\lambda - 1}}}\Vert q) + H(\frac{1}{1+ (\frac{1-q}{q})^{\frac{\lambda}{\lambda - 1}}})$ for $\lambda > 1$.

\restate{Lemma}{lemma9} 
\begin{restatethm}
For any $\lambda >1$ and any $0 \leq q \leq \frac{1}{2}$, $H(q) < U_{\lambda}(q)+ \lambda/(\lambda-1)^2$.
\end{restatethm}
\begin{proof}{\bf of \autoref{lemma9}:} 
    Letting $p^* = \frac{1}{1+ (\frac{1-q}{q})^{\frac{\lambda}{\lambda - 1}}}$, and $U_{\lambda}(q) = \lambda \cdot \KL(p^*\Vert q) + H(p^*)$ and
    \begin{equation}\label{phipstar}
        \log \frac{p^*}{1-p^*} = \frac{\lambda}{\lambda - 1}\log \frac{q}{1-q}.
    \end{equation}
Note that for $\lambda > 1$, we have $0 < p^* < q \leq \frac{1}{2}$. Denote $\delta = q - p^* > 0$, and denote the function $\phi (q) = \log \frac{q}{1-q}$. Then the relationship between $p^*$ and $q$ in \eqref{phipstar} can be rewritten as 
\begin{equation}\label{phipstarnew}
     \phi(p^*) = \frac{\lambda}{\lambda - 1}\phi(q).    
\end{equation}
Note that $\phi (q) < 0$ for $q \in [0, \frac{1}{2}]$, and its first-order derivative $\phi' (q) = \frac{1}{q(1-q)} > 0$ is positive and monotonically decreasing on $[0, \frac{1}{2}]$. Hence, by the mean value theorem and monotonicity of the derivative of $\phi$, we have
\begin{equation}\label{delta_mvt}
\begin{split}
  \phi(q) - \phi(p^*) &= \phi'(\xi_0)\cdot\delta \text{ , for some } \xi_0 \in (p^*, q)\\
  &\geq \phi'(q)\cdot\delta
  = \frac{1}{q(1-q)}\cdot\delta.
\end{split}
\end{equation}
Plugging \eqref{phipstarnew} into \eqref{delta_mvt}, we get an upper bound for $\delta$ in terms of $q$ such that 
\begin{equation}\label{deltaub}
    \delta \leq -\frac{q(1-q)}{\lambda - 1}\phi(q).
\end{equation}
Note that $H'(q) = \log \frac{1 - q}{q} \geq 0$ is positive and monotonically decreasing on $[0, \frac{1}{2}]$, so by mean value theorem,
\begin{equation}\label{H_mvt}
    \begin{split}
        H(q) - H(p^*) &= H'(\xi_1)\cdot\delta\text{ , for some } \xi_1 \in (p^*, q)\\
        &\leq H'(p^*)\cdot\delta
        = \log \frac{1 - p^*}{p^*}\cdot\delta
        = \frac{\lambda}{\lambda - 1}\log \frac{1-q}{q} \cdot \delta,
    \end{split}
\end{equation}
where the last equality follows from \eqref{phipstarnew}.

By plugging the upper bound \eqref{deltaub} for $\delta$ into \eqref{H_mvt}, we get an upper bound for $H(q) - H(p^*)$, and thus also an upper bound for $H(q) - U_{\lambda}(q)$:
\[
\begin{split}
    H(q) - U_{\lambda}(q) &= H(q) - H(p^*) - \lambda \KL(p^*\Vert q)
    \leq H(q) - H(p^*)\\
    &\leq \frac{\lambda}{\lambda - 1}\log \frac{1-q}{q} \cdot \left(-\frac{q(1-q)}{\lambda - 1}\phi(q)\right)
    = \frac{\lambda}{(\lambda - 1)^2}q(1-q)\phi^2(q).
\end{split}
\]
Using the fact that $q(1-q)\phi^2(q) < 1$ for $q \in [0, \frac{1}{2}]$, we prove the desired result $H(q) - U_{\lambda}(q) < \frac{\lambda}{(\lambda - 1)^2}$. This completes the proof of \autoref{lemma9}. 
\end{proof}
Combining \autoref{lemma9} with inequality \eqref{lambda>1hpb}, we have with probability $\geq 1 - \delta$, 
\begin{equation}
    H(L(\MDL_{\lambda}(S))) \leq H(L(h^*)) + C\sqrt{\frac{2(\log m)^2 \cdot \log \frac{1}{\delta/2}}{m}}
    + {\lambda}\frac{\log(\frac{m+1}{\delta/2})}{m}+{\lambda}\frac{\abs{h^*}_{\pi}}{m} + \frac{\lambda}{(\lambda-1)^2}.
\end{equation}
Taking $\delta = \frac{1}{\sqrt{m}}$ yields that with probability $\geq 1 - \frac{1}{\sqrt{m}}$,  for some constant $C', C''$,
\begin{equation}\label{eqn30}
    H(L(\MDL_{\lambda}(S))) \leq H(L(h^*)) + C'\frac{(\log m)^\frac{3}{2}}{\sqrt{m}}
    + {\lambda}C''\frac{\log m}{m}+{\lambda}\frac{\abs{h^*}_{\pi}}{m} + \frac{\lambda}{(\lambda-1)^2}.
\end{equation}
Let $\Delta = C'\frac{(\log m)^\frac{3}{2}}{\sqrt{m}}
    + {\lambda}C''\frac{\log m}{m}+\lambda \frac{\abs{h^*}_{\pi}}{m} + \frac{\lambda}{(\lambda-1)^2}$. If the right hand side of \eqref{eqn30} $H(L(h^*)) + \Delta \leq 1$, then it is well-defined to take the inverse function $H^{-1}$ on both sides of \eqref{eqn30} to yield that with probability $\geq 1 - \frac{1}{\sqrt{m}}$, 
\begin{align}\label{eqn33}
    L(\MDL_{\lambda}(S)) &\leq H^{-1}\left(H(L(h^*))+ \Delta \right).
\end{align}
By the mean value theorem, we have 
\begin{equation}\label{eqnmvt34}
\begin{split}
    H^{-1}( H(L(h^*)) + \Delta ) &=H^{-1}( H(L(h^*)) ) + (H^{-1})'(\xi)\Delta \\
    &= L(h^*) + (H^{-1})'(\xi)\Delta,
\end{split}   
\end{equation}
for some $\xi \in (H(L(h^*)), H(L(h^*)) + \Delta)$.

Because the entropy function $H$ is concave, the inverse function $H^{-1}$ is convex on $(0, \frac{1}{2})$. By the convexity of $H^{-1}$, the derivative $(H^{-1})'(\xi)$ is always upper bounded by the slope of the line interpolating the two points $(H(L(h^*)), L(h^*))$ and $(1, \frac{1}{2})$, i.e.
\begin{equation}\label{eqn36}
    (H^{-1})'(\xi) \leq \frac{\frac{1}{2} - L(h^*)}{1 - H(L(h^*))}.
\end{equation}

Combining \eqref{eqn33}, \eqref{eqnmvt34}, and \eqref{eqn36}, we get with probability $\geq 1 - \frac{1}{\sqrt{m}}$,
\begin{equation}\label{thm3.4hpb}
    L(\MDL_{\lambda}(S)) \leq L(h^*) + \frac{\frac{1}{2} - L(h^*)}{1 - H(L(h^*))}\Delta.
\end{equation}

Note that although we assume $H(L(h^*)) + \Delta \leq 1$ and take $H^{-1}$ inverse function to get \eqref{thm3.4hpb}, when $H(L(h^*)) + \Delta > 1$, the bound \eqref{thm3.4hpb} is vacuously true. Indeed, if $H(L(h^*)) + \Delta > 1$, then $\Delta > 1 - H(L(h^*))$, and thus 
$\frac{\frac{1}{2} - L(h^*)}{1 - H(L(h^*))}\Delta > \frac{1}{2} - L(h^*)$ and the right hand side of \eqref{thm3.4hpb} $> \frac{1}{2}$. In this case, \eqref{thm3.4hpb} vacuously holds. Hence, \eqref{thm3.4hpb} holds for any $L(h^*)$ and $\Delta$.

By Taylor expansion of $H(L(h^*))$ around $\frac{1}{2}$, we get $H(L(h^*)) = H(\frac{1}{2}) + H'(\frac{1}{2})(L(h^*) - \frac{1}{2}) + \frac{H''(\frac{1}{2})}{2}(L(h^*) - \frac{1}{2})^2 + \frac{H'''(\xi)}{6}(L(h^*) - \frac{1}{2})^3$ for some $\xi \in (L(h^*), \frac{1}{2})$. Since $H(\frac{1}{2}) = H'(\frac{1}{2}) = 0$, $ H''(\frac{1}{2}) = -\frac{4}{\ln 2}$, and $H'''(\xi) > 0, \forall \xi < \frac{1}{2}$, this gives us $1 - H(L(h^*)) \geq \frac{2}{\ln}(L(h^*) - \frac{1}{2})^2$. Hence, $\frac{\frac{1}{2} - L(h^*)}{1 - H(L(h^*))} \leq \frac{\ln 2}{1 - 2L(h^*)}$.

Since the risk is bounded, the high probability bound \eqref{thm3.4hpb} implies the bound on expected risk:
\begin{align}
    \underset{S \sim \mathcal{D}^m}{\mathbb{E}}[L(\MDL_{\lambda}(S))] 
    &\leq L(h^*)+ \frac{\frac{1}{2} - L(h^*)}{1 - H(L(h^*))}\Delta  + \frac{1}{\sqrt{m}}\\
    &\leq L(h^*)+ O\left(\frac{1}{1 - 2L(h^*)} \cdot  \left(\frac{1}{\lambda} + \lambda\left(\frac{\abs{h^*}_{\pi} + \log m}{m}\right) + \sqrt{\frac{\log^3 (m)}{m}}\right)
    \right).
\end{align}
This completes the proof of \autoref{lambda infty}.
\end{proof}
We can then use \autoref{lambda infty} to derive the consistency result as shown in the \autoref{lambda_infty_coro}:
\restate{Corollary}{lambda_infty_coro} 
\begin{restatethm}
For $1 \ll \lambda_m \ll m/\log m$ and any $h^*$ with $\pi(h^*)>0$, $\underset{m \rightarrow \infty}{\lim} \underset{\pi, D }{\sup} \underset{S \sim D^m}{\mathbb{E}}[L(\MDL_{\lambda_m})] \leq L(h^*) = L^*$.
\end{restatethm}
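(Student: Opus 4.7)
The plan is to deduce the corollary as a direct consequence of the finite-sample bound in Theorem \ref{lambda infty}. The key observation is that the corollary makes a statement about $\sup_{\pi,D}$, but the finite-sample bound already has the crucial feature that the only dependence on the prior $\pi$ and source $D$ is through $\abs{h^*}_\pi$ and $L(h^*)$, both of which are held fixed by the quantifier on the left-hand side (we fix $h^*$ with $\pi(h^*)>0$, so $\abs{h^*}_\pi=-\log\pi(h^*)$ is a finite constant not depending on $m$).

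First I would fix $h^*$ and invoke Theorem \ref{lambda infty} with $\lambda=\lambda_m$, obtaining
\[
\underset{S \sim D^m}{\mathbb{E}}[L(\MDL_{\lambda_m}(S))]
\leq L(h^*)+ O\!\left(\tfrac{1}{1 - 2L(h^*)} \cdot \left(\tfrac{1}{\lambda_m} + \lambda_m\tfrac{\abs{h^*}_{\pi} + \log m}{m} + \sqrt{\tfrac{\log^3 (m)}{m}}\right)\right).
\]
Since $h^*$ is fixed (with $L(h^*)<1/2$ assumed implicitly, as required for the statement to be non-vacuous), the prefactor $\frac{1}{1-2L(h^*)}$ is a finite constant. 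I would then take $\sup_{\pi,D}$ on both sides: the right-hand side depends on $(\pi,D)$ only through the fixed quantities $\abs{h^*}_\pi$ and $L(h^*)$, so the supremum is identical to the expression above.

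Next I would let $m\to\infty$ and check that each of the three terms inside the big-$O$ vanishes: (i) $1/\lambda_m\to 0$ because $\lambda_m\to\infty$; (ii) $\lambda_m(\abs{h^*}_\pi+\log m)/m\to 0$ because $\lambda_m \ll m/\log m$ and $\abs{h^*}_\pi$ is a constant, so $\lambda_m\abs{h^*}_\pi/m \leq \abs{h^*}_\pi\cdot \lambda_m/m \to 0$ and $\lambda_m \log m/m\to 0$ by assumption; (iii) $\sqrt{\log^3(m)/m}\to 0$ unconditionally. Therefore $\limsup_{m\to\infty}\sup_{\pi,D}\mathbb{E}_S[L(\MDL_{\lambda_m}(S))] \leq L(h^*)$, which is exactly the claim.

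There is no real obstacle here: the work has already been done in Theorem \ref{lambda infty}, and the corollary is essentially a book-keeping exercise verifying that the chosen growth rate of $\lambda_m$ sits precisely in the sweet spot where both the under-regularization term $1/\lambda_m$ and the over-regularization term $\lambda_m\log m/m$ vanish simultaneously. The only subtle point worth flagging explicitly is that the bound is uniform over $(\pi,D)$ once $h^*$ is fixed, which is what allows the $\sup$ and $\lim$ to be taken in this order without any extra argument.
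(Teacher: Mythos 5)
Your proposal is correct and follows essentially the same route as the paper: the paper's proof of this corollary is precisely to invoke \autoref{lambda infty} with $\lambda=\lambda_m$ and observe that under $1\ll\lambda_m\ll m/\log m$ every term inside the big-$O$ vanishes. Your additional remarks on uniformity over $(\pi,D)$ and the implicit requirement $L(h^*)<1/2$ are accurate but not a departure from the paper's argument.
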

\begin{proof}
    Since $1 \ll \lambda_m \ll m/\log m$ , as $m \rightarrow \infty$, all the terms inside the big-O notation of the right hand side of \ref{thm 3.4 bd} in \autoref{lambda infty} vanish, yielding the consistency result.
\end{proof}
\section{Lower Bound Constructions and Proofs}\label{AppendixB}
In this Section, we provide the detailed lower bound proofs for \autoref{MDL LB}, \autoref{lambda0} and \autoref{lambda>>m}.
\subsection{Lower Bound for $0 < \lambda < \infty$ (proof of \autoref{MDL LB})}\label{AppendixB.1}
\restate{Theorem}{MDL LB} 
\begin{restatethm}[Agnostic Lower Bound]
    For any $0<\lambda<\infty$, any $L^* \in (0,0.5)$ and $L^* \leq L' <\ell_{\lambda}(L^*)$, there exists a prior $\pi$, a hypothesis $h^*$ with $\pi(h^*) \geq 0.1$ and source distribution $D$ with $L_D(h^*) = L^*$ such that $\mathbb{E}_S \left[L_D(\MDL_{\lambda}(S))\right] \rightarrow L'$ as sample size $m \rightarrow \infty$. 
\end{restatethm}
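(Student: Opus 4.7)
The plan is to construct, for each $\lambda \in (0,\infty)$ and each $L^* \leq L' < \ell_\lambda(L^*)$, a concrete hard instance that forces the limiting error of $\MDL_\lambda$ to tend to $L'$. I would take $\mathcal{X} = \{0,1\}^\infty$ with the single-bit predictors $h_i(x) = x[i]$ and the universal-style prior $\pi(h_i) = 1/(i \log^2 i + 10)$ (under the convention $0 \log^2 0 = 0$), so that $\pi(h_0) = 0.1$ and $|h_i|_\pi = \log i + O(\log\log i)$. The source distribution draws $y \sim \text{Ber}(1/2)$ and, conditionally on $y$, sets $x[0] = y \oplus \text{Ber}(L^*)$ and independently $x[i] = y \oplus \text{Ber}(L')$ for each $i \geq 1$, yielding $L(h_0) = L^*$ and $L(h_i) = L'$ for $i \geq 1$. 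Since the loss is bounded, it will be enough to show $\Pr_{S \sim D^m}[\MDL_\lambda(S) = h_0] \to 0$, which automatically promotes convergence in probability of $L(\MDL_\lambda(S))$ to $L'$ into convergence in expectation, yielding the advertised limit.

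The core strategy is to exhibit, for a carefully chosen horizon $k(m)$, a bad predictor $h_{\hat{i}}$ with $1 \leq \hat{i} \leq k(m)$ whose approximate objective $\tilde{J}_\lambda(h,S) = \lambda |h|_\pi + m H(L_S(h))$ beats $\tilde{J}_\lambda(h_0,S) \xrightarrow{p} m H(L^*) + \lambda \log 10$ by an $\Omega(m)$ margin. Since $|\tilde{J}_\lambda - J_\lambda| = O(\log m)$ by Stirling, the same conclusion transfers to the true objective $J_\lambda$ and forces $\MDL_\lambda(S) \neq h_0$ with probability going to one. For $0 < \lambda \leq 1$ I would take $k(m) = 2\sqrt{m}/(1-L')^m$; a Chernoff bound on the i.i.d.~events $\{L_S(h_i) = 0\}$, each of probability $(1-L')^m$, shows some $h_{\hat{i}}$ with $i \leq k(m)$ interpolates w.h.p., giving $\tilde{J}_\lambda(h_{\hat{i}},S) \leq \lambda \log k(m) + O(\log\log k(m)) = -\lambda m \log(1-L') + O(\log m)$, which is below $m H(L^*)$ by $\Omega(m)$ exactly because $L' < 1 - 2^{-H(L^*)/\lambda}$.

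For $\lambda > 1$ interpolation no longer pays (the penalty dominates), and the right operating point is $\hat{L} = 1/(1 + ((1-L')/L')^{\lambda/(\lambda-1)})$, the argmin defining $U_\lambda(L')$. I would take $k(m) = 2^{m \KL(\hat{L} \Vert L')}$ and let $h_{\hat{i}}$ be the empirical-risk minimizer over the first $k(m)$ bad predictors. A concentration argument for the minimum of $k(m)$ independent $\tfrac{1}{m}\text{Bin}(m, L')$ variables (via the two-sided binomial tail bound of \autoref{AppendixC}) yields $L_S(h_{\hat{i}}) \xrightarrow{p} \hat{L}$, so that $\tilde{J}_\lambda(h_{\hat{i}}, S) \xrightarrow{p} m(\lambda \KL(\hat{L} \Vert L') + H(\hat{L})) = m U_\lambda(L')$, and this is strictly below $m H(L^*) = m U_\lambda(U_\lambda^{-1}(H(L^*)))$ by $\Omega(m)$, using monotonicity of $U_\lambda$ together with the hypothesis $L' < U_\lambda^{-1}(H(L^*))$.

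The main obstacle I foresee is making the min-of-binomials concentration in the $\lambda > 1$ case sufficiently sharp on both sides: one needs $\KL(L_S(h_{\hat{i}}) \Vert L') \approx \tfrac{\log k(m)}{m}$ with high probability, which requires not only the standard upper-tail bound (to guarantee some $h_i$ hits the target) but also a matching lower-tail control preventing $h_{\hat{i}}$ from undershooting $\hat{L}$ in a way that would invalidate the entropy expansion and destroy the $\Omega(m)$ gap. The remaining ingredients---concentration of $L_S(h_0)$ around $L^*$, the $O(\log m)$ Stirling gap between $J_\lambda$ and $\tilde{J}_\lambda$, and the passage from $\Pr[\MDL_\lambda(S) \neq h_0] \to 1$ to $\mathbb{E}[L(\MDL_\lambda(S))] \to L'$---are routine once the two-sided concentration step is secured.
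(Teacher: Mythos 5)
Your proposal is correct and follows essentially the same route as the paper's proof: the same single-bit predictor construction with prior $\pi(h_i)=1/(i\log^2 i+10)$, the same choices $k(m)=2\sqrt{m}/(1-L')^m$ for $\lambda\leq 1$ and $k(m)=2^{m\KL(\hat{L}\Vert L')}$ for $\lambda>1$, and the same two-sided min-of-binomials concentration from \autoref{AppendixC} to place $L_S(h_{\hat{i}})$ at $\hat{L}$ and secure the $\Omega(m)$ gap over $\tilde{J}_\lambda(h_0,S)$. The one issue you flag (two-sided control of $\KL(L_S(h_{\hat{i}})\Vert L')$) is exactly what \autoref{thm12} supplies, so there is no gap.
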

Consider the source distribution and prior described in \autoref{sec:finite LB proof}. We prove that with probability one, as $m\rightarrow\infty$, $\MDL_\lambda$ will select one of the ``bad'' predictors, i.e. there exists some $i \geq 1$ such that $J_\lambda(h_i,S) < J_\lambda(h_0,S)$ and $L_S(h_i) < \frac{1}{2}$, i.e.
    \begin{equation}\label{eqn_lb_defn}
        \lambda |h_i|_{\pi} + \log {m \choose m L_S(h_i)} < \lambda |h_0|_{\pi} + \log {m \choose m L_S(h_0)}.
    \end{equation}
    This is equivalent to analyzing its approximation $\tilde{J}_\lambda(h,S)=\lambda \abs{h}_\pi + m H(L_S(h))$ (see equation \eqref{stirling}), and by rearranging and dividing by $m$ on both sides
    \begin{equation} \label{1}
        \frac{\lambda|h_i|_{\pi}}{m} + (H(L_S(h_i)) - H(L_S(h_0)))\leq \frac{\lambda \log 10 - \log (m+1)}{m}.
    \end{equation}
    Notice that the right hand side of \eqref{1} is deterministic and converges to zero as $m \rightarrow \infty$. Thus, to show \eqref{1}, it suffices to show that there exists $i>0$ such that as $m \rightarrow \infty$, the left hand side of \eqref{1} is negative with probability one. And the proof is different for $\lambda \leq 1$ and $\lambda > 1$, as we will discuss separately below. 

    We first prove that $H(L_S(h_0)) \rightarrow H(L^*)$ almost surely, which will be repeatedly used in the proofs.
    
    \begin{lemma}\label{lemma 10.1}
        $H(L_S(h_0))$ converges to $H(L^*)$ almost surely.
    \end{lemma}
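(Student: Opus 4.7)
The plan is very short: this lemma is essentially an application of the strong law of large numbers composed with continuity of $H$. First I would recall the construction from \autoref{sec:finite LB proof}: under the source distribution $D$, we have $x[0] = y \oplus \text{Ber}(L^*)$ with the noise independent of $y$, and $h_0(x) = x[0]$, so the indicators $\mathbbm{1}\{h_0(x_i) \neq y_i\}$ are i.i.d.\ Bernoulli$(L^*)$ random variables across the $m$ samples in $S$.

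Next, by the strong law of large numbers applied to these i.i.d.\ bounded random variables,
\[
L_S(h_0) = \frac{1}{m}\sum_{i=1}^m \mathbbm{1}\{h_0(x_i) \neq y_i\} \xrightarrow{a.s.} L^*
\]
as $m \to \infty$. Finally, since the binary entropy function $H: [0,1] \to [0,1]$ is continuous, the continuous mapping theorem yields $H(L_S(h_0)) \xrightarrow{a.s.} H(L^*)$, which is precisely the claim. There is no real obstacle here; the only mild care is noting that continuity of $H$ holds throughout $[0,1]$ (with the convention $0 \log 0 = 0$), so the composition is well-defined even if empirical samples ever push $L_S(h_0)$ to the boundary.
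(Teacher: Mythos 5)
Your proof is correct, and it reaches the conclusion by a genuinely shorter route than the paper. You identify the indicators $\mathbbm{1}\{h_0(x_i)\neq y_i\}$ as i.i.d.\ $\text{Ber}(L^*)$, invoke the strong law of large numbers to get $L_S(h_0)\xrightarrow{a.s.}L^*$, and then compose with continuity of $H$ — exactly the right structure. The paper instead proves the almost sure convergence of $L_S(h_0)$ ``by hand'': it applies a Chernoff bound at deviation scale $m^{-1/4}$, observes that the resulting tail probabilities $2e^{-2\sqrt{m}}$ are summable, and concludes via the first Borel--Cantelli lemma, before composing with continuity of $H$ just as you do. The practical difference is small but worth noting: your SLLN argument implicitly treats the samples as nested prefixes of a single infinite i.i.d.\ sequence (the standard coupling under which ``almost sure convergence as $m\to\infty$'' is interpreted), whereas the Borel--Cantelli route only needs summability of the per-$m$ failure probabilities and so is agnostic to how the samples $S$ for different $m$ are coupled; it also matches the style of the other almost-sure claims in the lower-bound section (e.g.\ the existence of an interpolating ``bad'' predictor among the first $k(m)$), which are likewise established via quantitative tail bounds plus Borel--Cantelli. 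Your closing remark about continuity of $H$ at the boundary with the convention $0\log 0=0$ is exactly the right care to take; under either approach the composition step is identical.
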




    \begin{proof}{\bf of \autoref{lemma 10.1}:}
        For fixed $\epsilon > 0$, there exists an $M > 0$ such that $\left\{\left|L_S(h_0) - L^*\right| > \epsilon\right\} \subseteq \left\{\left|L_S(h_0) - L^*\right| > m^{-\frac{1}{4}}\right\}$, for all $m > M$. This implies that for all $m > M$, $\mathbb{P}\left(\left|L_S(h_0) - L^*\right| > \epsilon \right) \leq \mathbb{P}\left(\left|L_S(h_0) - L^*\right| >  m^{-\frac{1}{4}}\right) \leq 2e^{-2\sqrt{m}}$, where the second inequality is by Chernoff bound. Therefore, $\sum_{m = 1}^\infty \mathbb{P}\left(\left|L_S(h_0) - L^*\right| > \epsilon \right) \leq \sum_{m = 1}^M 1 + \sum_{m > M}2e^{-2\sqrt{m}} < \infty$. By Borel-Cantelli Lemma, this proves $\mathbb{P}\left(L_S(h_0) \rightarrow L^* \text{, as } m \rightarrow \infty\right)= 1$, which implies that $\mathbb{P}\left(H(L_S(h_0)) \rightarrow H(L^*) \text{, as } m \rightarrow \infty\right)= 1$ since $H$ is continuous.
    \end{proof}
    Now we give a proof of inequality \eqref{1} based on $\lambda$ values: $0 < \lambda \leq 1$ and $\lambda > 1$. 
\subsubsection{$0 < \lambda \leq 1$}\label{section 6.1.2}
\begin{proof}
    We first prove the following claim:\\
    \textbf{Claim:} for some function $k(m) = \frac{2\sqrt{m}}{(1-L')^m}$, with probability one, there exists some `bad' classifier $h_{\hat{i}}$ with $0< \hat{i} \leq k(m)$ such that $L_S(h_{\hat{i}}) = 0$ for all but finitely many $m$.
    
    \begin{proof} \textbf{of the claim:}
        Let $k$ be a positive integer and $\mathcal{H}_k = \{h_j \in \mathcal{H}: 1 \leq j \leq k\}$. Then we have $\mathbb{P}(\forall h \in \mathcal{H}_k, L_S(h) > 0) = (1-(1-L')^m)^k \leq e^{-k(1-L')^m}$ , which the first equality follows from independence and the inequality by $\forall x \in [0,1], k > 0: (1-x)^k \leq e^{-kx}$. Now we set $k = k(m) = \frac{2\sqrt{m}}{(1-L')^m}$. Plugging in, we get $\mathbb{P}(\forall h \in \mathcal{H}_k, L_S(h) > 0) \leq e^{-2 \sqrt{m}}$. So $\sum_{m = 1}^\infty \mathbb{P}(\forall h \in \mathcal{H}_k, L_S(h) > 0) < \infty$. Consequently, by Borel-Cantelli, $\mathbb{P}(\exists h_{\hat{i}} \text{ with } 0< \hat{i} \leq k(m) \text{ s.t. }L_S(h_{\hat{i}}) = 0 \text{ for all but finitely many }m) = 1$.
    \end{proof}
    
   By the definition of $\pi$ and that $h_{\hat{i}} \in \mathcal{H}_{k(m)}$, we have $|h_{\hat{i}}|_{\pi} \leq m\log \frac{1}{1-L'} + C\log m$, for some constant $C>0$.
   
    By \autoref{lemma 10.1} and the claim, with probability one, the limit of the left hand side of \eqref{1} satisfies
    \begin{equation}
    \begin{split}
         \lim_{m \rightarrow \infty} \lambda\frac{|h_{\hat{i}}|_{\pi}}{m} + H(L_S(h_{\hat{i}})) - H(L_S(h_0)) 
         &\leq \lim_{m \rightarrow \infty} \lambda \log\frac{1}{1-L'}+ H(L_S(h_{\hat{i}})) - H(L_S(h_0))+ C\lambda\frac{\log m}{m} \\
         &= \lambda \log\frac{1}{1-L'} - H(L^*)
    \end{split}
    \end{equation}
    Thus, as long as $L' < 1 - 2^{-H(L^*)/{\lambda}}$, as $m \rightarrow \infty$, the limit of the left hand side of \eqref{1} is negative with probability one. This does not mean $\MDL_{\lambda}$ necessarily outputs $h_{\hat{i}}$, but this implies that $\MDL_{\lambda}$ will output some $h_{i}$ with $i > 0$, and hence $L_D(\MDL_{\lambda}(S)) = L'$ with probability one, which implies the bound for the expected risk:
    as $m \rightarrow \infty$, $\mathbb{E} L_D(\MDL_{\lambda}(S)) \rightarrow L'$.  This completes the proof for $ 0< \lambda \leq 1$.
\end{proof}  
\subsubsection{$1 < \lambda < \infty$}
\begin{proof}
    We first prove the following claim:\\
    \textbf{Claim:} for some function $k(m) = 2^{m\KL(\hat{L}\Vert L')}$ where $\hat{L} = \frac{1}{1+ (\frac{1-L'}{L'})^\frac{\lambda}{\lambda - 1}}$, let $h_{\hat{i}}$ be the predictor that achieves the smallest empirical error among $\mathcal{H}_k = \{h_j \in \mathcal{H}: 1 \leq j \leq k(m)\}$, i.e. $L_S(h_{\hat{i}}) = \min_{1 \leq i \leq k(m)} L_S(h_i)$. Then we have $H(L_S(h_{\hat{i}}))$ converges to $H(\hat{L})$ almost surely.

    \begin{proof}\textbf{of the claim:}
        Note that $L_S(h_{\hat{i}})$ is a minimum of i.i.d Binomial random variables. Denote $\Delta \coloneqq 2\log \sqrt{2}m + 4\log (m+1) + \left[ \log \frac{L'}{1-L'} \right]_+$. There exists an $M_1 > 0$ such that for all $m > M_1$, we have $\frac{\Delta}{m} < \KL(\hat{L}\Vert L')$. Then by the KL bound of the minimum of i.i.d Binomials ( \autoref{thm12} in  \autoref{AppendixC}), for all $m > M_1$, we have with probability $1-\frac{1}{m^2}$, 
        \begin{equation}
        \begin{split}\label{KL bd}
            \KL(L_S(h_{\hat{i}})\Vert L') &= \frac{\log k(m) \pm \Delta}{m}
            = \KL(\hat{L}\Vert L') \pm \frac{\Delta}{m}
        \end{split}
        \end{equation}
        \begin{equation}\label{Z<p}
            \textrm{and}\quad\quad L_S(h_{\hat{i}}) < L'.
        \end{equation}

        We first show that the KL bound \eqref{KL bd} implies that $\KL(L_S(h_{\hat{i}})\Vert L')$ converges to $ \KL(\hat{L}\Vert L')$ almost surely, i.e., $\KL(L_S(h_{\hat{i}})\Vert L') \rightarrow \KL(\hat{L}\Vert L')$ as $m \rightarrow \infty$, with probability one. This is equivalent to showing $\mathbb{P}\left(\left|\KL(L_S(h_{\hat{i}})\Vert L') - \KL(\hat{L}\Vert L')\right| > \epsilon \text{ i.o.} \right) = 0$, for any fixed $\epsilon > 0$, where `i.o.' stands for infinitely often. By the Borel-Cantelli Lemma, it suffices to show that $\sum_{m = 1}^\infty \mathbb{P}\left(\left|\KL(L_S(h_{\hat{i}})\Vert L') - \KL(\hat{L}\Vert L')\right| > \epsilon \right) < \infty$. 

        Note that for fixed $\epsilon > 0$, there exists an $M_2 > 0$ such that $\left\{\left|\KL(L_S(h_{\hat{i}})\Vert L') - \KL(\hat{L}\Vert L')\right| > \epsilon\right\} \subseteq \left\{\left|\KL(L_S(h_{\hat{i}})\Vert L') - \KL(\hat{L}\Vert L')\right| > \frac{\Delta}{m}\right\}$, for all $m > M_2$. This implies that for all $m > M \coloneqq \max(M_1, M_2)$, $\mathbb{P}\left(\left|\KL(L_S(h_{\hat{i}})\Vert L') - \KL(\hat{L}\Vert L')\right| > \epsilon \right) \leq \mathbb{P}\left(\left|\KL(L_S(h_{\hat{i}})\Vert L') - \KL(\hat{L}\Vert L')\right| > \frac{\Delta}{m}\right) \leq \frac{1}{m^2}$, where the second inequality follows from \eqref{KL bd}.
        Therefore, $\sum_{m = 1}^\infty \mathbb{P}\left(\left|\KL(L_S(h_{\hat{i}})\Vert L') - \KL(\hat{L}\Vert L')\right| > \epsilon \right) \leq \sum_{m = 1}^M 1 + \sum_{m > M}\frac{1}{m^2} < \infty$. By the Borel-Cantelli Lemma, this proves
        \begin{equation}\label{KLa.s.}
            \mathbb{P}\left( \KL(L_S(h_{\hat{i}})\Vert L') \rightarrow \KL(\hat{L}\Vert L') \text{ as } m \rightarrow \infty\right)= 1 
        \end{equation}
        By the same argument, since $\sum_{m = 1}^\infty \mathbb{P}\left( L_S(h_{\hat{i}}) \geq L' \right) \leq \sum_{m = 1}^{M_1}1 + \sum_{m = M_1 + 1}^\infty \frac{1}{m^2} < \infty$ given by \eqref{Z<p}, by Borel-Cantelli, we have $\mathbb{P}(L_S(h_{\hat{i}}) \geq L' \text{ i.o.}) = 0$. This shows that  
        \begin{equation}\label{Z<pa.s}
            \mathbb{P} \left(L_S(h_{\hat{i}}) < L' \text{ for all but finitely many } m \right) = 1
        \end{equation}
        Then by the continuity of KL and the fact that for any $p,q<r$, $\KL(p||r)=\KL(q||r)$ if and only if $p=q$, equation \eqref{KLa.s.} and \eqref{Z<pa.s} implies that $L_S(h_{\hat{i}}) \rightarrow \hat{L}$ almost surely, which implies that $H(L_S(h_{\hat{i}})) \rightarrow H(\hat{L})$ almost surely since $H$ is continuous. This proves the claim.
    \end{proof}   
    By the definition of $\pi$ and that $h_{\hat{i}} \in \mathcal{H}_{k(m)}$, we have $|h_{\hat{i}}|_{\pi} \leq m\KL(\hat{L}\Vert L') + C'\log m$, for some $C' > 0$.
    
   By \autoref{lemma 10.1} and the claim, with probability one, the limit of the left hand side of \eqref{1} satisfies
    \begin{equation}\label{20}
    \begin{split}
         \lim_{m \rightarrow \infty} \lambda\frac{|h_{\hat{i}}|_{\pi}}{m} + H(L_S(h_{\hat{i}})) - H(L_S(h_0)) 
         &\leq  \lim_{m \rightarrow \infty}\lambda\KL(\hat{L}\Vert L') +  H(L_S(h_{\hat{i}})) - H(L_S(h_0)) + C'\lambda\frac{\log m}{m} \\
         &= \lambda\KL(\hat{L}\Vert L') + H(\hat{L}) - H(L^*) 
         = U_{\lambda}(L') - H(L^*),
    \end{split}
    \end{equation}
    where $U_{\lambda}(L') = \lambda \KL(\hat{L} \Vert L') + H(\hat{L})$.

    Hence, as long as  $L' < U_{\lambda}^{-1} ( H(L^*) )$, as $m \rightarrow \infty$, the left hand side of \eqref{1} is negative with probability one. It is important to note that in the definition of $\MDL_{\lambda}$, we also require the selected hypothesis $h$ to satisfy $L_S(h) \leq
    \frac{1}{2}$ (as in equation \eqref{defn:MDL}). And we just showed that with probability one $L_S(h_{\hat{i}}) \rightarrow \hat{L} < L' < U_{\lambda}^{-1} ( H(L^*) ) < \frac{1}{2}$, so $h_{\hat{i}}$ satisfies the condition and has a lower $\MDL$ objective than $h_0$.
    
    This implies that $\MDL_{\lambda}$ will output some $h_{i}$ with $i >0$, and hence $L_D(\MDL_{\lambda}(S)) = L'$ with probability one, which implies the bound for the expected risk:
    as $m \rightarrow \infty$, $\mathbb{E} L_D(\MDL_{\lambda}(S)) \rightarrow L'$.  This completes the proof for $ 1< \lambda < \infty$.
\end{proof}
This completes the proof for \autoref{MDL LB}.
\subsection{Proof of \autoref{lambda0}}\label{AppendixB.2}
\restate{Theorem}{lambda0} 
\begin{restatethm}
For any $\lambda_m \rightarrow 0$ or $\lambda = 0$, any $L^* \in (0,0.5)$, and $L^* \leq L' < 1$, there exists a prior $\pi$, a hypothesis $h^*$ with $\pi(h^*) \geq 0.1$ and source distribution $D$ with $L_D(h^*) = L^*$ such that $\mathbb{E}_S \left[L_D(\MDL_{\lambda_m}(S))\right] \rightarrow L'$ as sample size $m \rightarrow \infty$.
\end{restatethm}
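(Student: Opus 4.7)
The construction reuses the one from the proof of \autoref{MDL LB} in \autoref{sec:finite LB proof}: take the source distribution $D$ with $y \sim \text{Ber}(1/2)$, $x[0] = y \oplus \text{Ber}(L^*)$ and $x[i] = y \oplus \text{Ber}(L')$ for $i \geq 1$, together with the prior $\pi(h_i) = 1/(i \log^2 i + 10)$ on the predictors $h_i(x) = x[i]$. Then $h^* = h_0$ satisfies $\pi(h_0) = 0.1$ and $L_D(h_0) = L^*$, while every ``bad'' predictor $h_i$ with $i \geq 1$ has $L_D(h_i) = L'$. The plan is to show that, with probability approaching one, $\MDL_{\lambda_m}(S) \neq h_0$, so that $L_D(\MDL_{\lambda_m}(S)) = L'$ on this event; combined with the trivial bound $L_D \leq 1$ on the complement, this yields $\mathbb{E}[L_D(\MDL_{\lambda_m}(S))] \to L'$. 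The edge case $L' = L^*$ is vacuous since then every predictor has loss $L^*$.

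First I would dispose of the easier case $\lambda = 0$, where $\MDL_0$ simply minimizes $L_S(h)$ subject to $L_S(h) \leq 1/2$. The Borel--Cantelli argument already used in \autoref{section 6.1.2} with $k(m) = 2\sqrt{m}/(1-L')^m$ shows that with probability one some $h_{\hat i}$ with $\hat i \geq 1$ achieves $L_S(h_{\hat i}) = 0$ for all but finitely many $m$, while \autoref{lemma 10.1} gives $L_S(h_0) \to L^* > 0$ almost surely. Hence $\MDL_0(S) \neq h_0$ eventually almost surely, and $\mathbb{E}[L_D(\MDL_0(S))] \to L'$.

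For the case $\lambda_m \to 0$ with $\lambda_m > 0$, the plan is to compare the approximate objective $\tilde J_{\lambda_m}$ at $h_0$ and at this same zero-training-error predictor $h_{\hat i}$, exploiting the fact that $\lambda_m$ downweights the description-length penalty. Using the bound $\hat i \leq k(m)$ from the claim in \autoref{section 6.1.2}, the description length satisfies $|h_{\hat i}|_\pi \leq \log k(m) + O(\log \log k(m)) = m \log \frac{1}{1-L'} + O(\log m)$, so
\begin{equation}
\tilde J_{\lambda_m}(h_{\hat i}, S) = \lambda_m |h_{\hat i}|_\pi + m H(0) \leq \lambda_m \bigl( m \log \tfrac{1}{1-L'} + O(\log m) \bigr) = o(m),
\end{equation}
where the final equality uses $\lambda_m \to 0$. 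On the other hand, by \autoref{lemma 10.1}, $\tilde J_{\lambda_m}(h_0, S) = \lambda_m \log 10 + m H(L_S(h_0)) = m H(L^*) + o(m) = \Omega(m)$ almost surely, since $L^* > 0$ forces $H(L^*) > 0$. Therefore, on an event of probability approaching one, $\tilde J_{\lambda_m}(h_{\hat i}, S) \leq \tilde J_{\lambda_m}(h_0, S) - \Omega(m)$, and the $O(\log m)$ Stirling gap between $J_{\lambda_m}$ and $\tilde J_{\lambda_m}$ is negligible compared to this $\Omega(m)$ margin, so $J_{\lambda_m}(h_{\hat i}, S) < J_{\lambda_m}(h_0, S)$ as well. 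Since $L_S(h_{\hat i}) = 0 \leq 1/2$, the predictor $h_{\hat i}$ is feasible, so $\MDL_{\lambda_m}$ selects some $h_i$ with $i \geq 1$; all such predictors satisfy $L_D(h_i) = L'$, and bounded convergence yields $\mathbb{E}[L_D(\MDL_{\lambda_m}(S))] \to L'$.

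The main technical obstacle is the delicate cancellation in the second case: the shortest-description bad predictor with zero empirical error has $|h_{\hat i}|_\pi$ of order $m$ (since $k(m)$ grows like $(1-L')^{-m}$), so $\lambda_m |h_{\hat i}|_\pi = \Theta(\lambda_m m)$ genuinely diverges for a fixed $\lambda_m < 1$, and only becomes $o(m)$ because $\lambda_m \to 0$. This is precisely the quantitative reason a fixed $\lambda < 1$ pushes the limiting error only up to $\ell_\lambda(L^*) < 1$ (\autoref{MDL LB}), whereas $\lambda_m \to 0$ pushes it all the way to an arbitrary $L' < 1$, giving the advertised catastrophic behavior.
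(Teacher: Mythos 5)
Your proposal is correct and follows essentially the same route as the paper: the same construction from \autoref{sec:finite LB proof}, the interpolating bad predictor $h_{\hat i}$ with $\hat i \leq k(m)$ so that $\lambda_m|h_{\hat i}|_\pi = o(m)$ while $\tilde J_{\lambda_m}(h_0,S) = mH(L^*)+o(m)$, and the observation that the $O(\log m)$ Stirling gap is negligible against the $\Omega(m)$ margin. The only cosmetic difference is that for $\lambda=0$ you reuse the Borel--Cantelli claim with $k(m)$, whereas the paper argues existence of an interpolating bad predictor directly from the infinite supply of $h_i$'s; both are fine.
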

\begin{proof}
Consider the same source distribution described in \autoref{sec:finite LB proof}, and with the same prior.  It is easy to see that the probability $h_0$ interpolates the data $S = \{(x_1, y_1), \cdots, (x_m, y_m)\}$ goes to 0 as the sample size $m \rightarrow \infty$. Indeed, $\mathbb{P}(h_0(x_t) = y_t, \forall t \in \{1,\cdots, m\}) = \mathbb{P}(x_t[0] = y_t, \forall t \in \{1,\cdots, m\}) = (1 - L^*)^m \rightarrow 0$ as $m \rightarrow \infty$, by independence of data. On the other hand, with probability one, there exists some $i > 0$ such that $h_i$ interpolates the data. To see this, the probability of its complement event $\mathbb{P}(\forall i > 0, x_t[i] \neq y_t \textnormal{ for some } t \in \{1,\cdots, m\}) \leq \sum_{t = 1}^m \mathbb{P}(\forall i > 0, x_t[i] \neq y_t) = m \cdot (L')^\infty = 0$, as long as $L' < 1$. Hence, $\mathbb{P}(\exists i > 0 \textnormal{ such that } L_S(h_i) = 0) = 1$. 

(1) $\lambda = 0$: $\MDL_{\lambda}$ simply minimizes $L_S(h)$ and will then always output some interpolating predictor. Therefore, as long as $L' < 1$, with probability one, $\MDL_{\lambda}$ returns some interpolating predictor $h_i$ with $i \geq 1$ as $m \rightarrow \infty$, implying that $L(\MDL_{\lambda} (S)) = L'$. Thus, $\mathbb{E}L(\MDL_{\lambda} (S)) \rightarrow L'$, and this completes the proof for $\lambda = 0$. 

(2) $\lambda_m \rightarrow 0$ as $m \rightarrow \infty$: let $\hat{i}$ denote the smallest index $\hat{i}\geq 1$ such that $L_S(h_{\hat{i}}) = 0$. We show with probability one, $J_{\lambda_m}(h_{\hat{i}},S) < J_{\lambda_m}(h_0,S)$ as sample size increases, i.e,
    \begin{equation}\label{eqn_lambda_0}
        \lambda_m |h_{\hat{i}}|_{\pi} + \log {m \choose m L_S(h_{\hat{i}})} < \lambda_m |h_0|_{\pi} + \log {m \choose m L_S(h_0)}.
    \end{equation}
    We already saw that this is equivalent to
    \begin{equation} \label{eqn_36}
        \frac{\lambda_m |h_{\hat{i}}|_{\pi}}{m} + (H(L_S(h_{\hat{i}})) - H(L_S(h_0)))\leq \frac{\lambda_m \log 10 - \log (m+1)}{m}.
    \end{equation}
    Since $\lambda_m \rightarrow 0$ as $m \rightarrow \infty$, the right hand side converges to 0. So it suffices to show that as $m \rightarrow \infty$, the limit of the left hand side of \eqref{eqn_36} is negative with probability one. 
    

We already saw that $\hat{i} \leq\frac{m+1}{(1-L')^m}$ with probability approaching one. Hence, by the definition of $\pi$, $|h_{\hat{i}}|_{\pi} \leq m\log \frac{1}{1-L'} + C\log m$, for some constant $C>0$.

Therefore, by \autoref{lemma 10.1} and Borel-Cantelli, with probability one, the limit of the left hand side of \eqref{eqn_36} satisfies
    \begin{align}
         \lim_{m \rightarrow \infty} \lambda_m \frac{|h_{\hat{i}}|_{\pi}}{m} + H(L_S(h_{\hat{i}})) - H(L_S(h_0)) 
         \leq  \lim_{m \rightarrow \infty}\lambda_m \log \frac{1}{1-L'}- H(L_S(h_0)) + C\lambda_m\frac{\log m}{m} 
         = - H(L^*),\notag
    \end{align}
since $\lambda_m \rightarrow 0$ as $m \rightarrow \infty$.

So as long as $L' < 1$, the limit of the left hand side of \eqref{eqn_36} is negative with probability one, which implies that $\mathbb{E} L_D(\MDL_{\lambda_m}(S)) \rightarrow L'$ as $m \rightarrow \infty$.  This completes the proof for $\lambda_m \rightarrow 0$.

This completes the proof for \autoref{lambda0}.
\end{proof}
\subsection{Proof of \autoref{lambda>>m}}\label{AppendixB.3}
\restate{Theorem}{lambda>>m} 
\begin{restatethm}
For any $\lambda_m = \Omega(m)$ with $\liminf \frac{\lambda_m}{m}> 10$, any $0 \leq L^* <0.5$, and any $L^* \leq L' < 0.5$, there exists a prior $\pi$, a hypothesis $h^*$ with $\pi(h^*) \geq 0.1$ and source distribution $D$ with $L_D(h^*) = L^*$ such that $\mathbb{E}_S \left[L_D(\MDL_{\lambda_m}(S))\right] \rightarrow L'$ as sample size $m \rightarrow \infty$.
\end{restatethm}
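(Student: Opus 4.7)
The plan is to follow the two-hypothesis construction sketched in Section 6.3 and make it rigorous. I would take $\mathcal{H}=\{h_0,h_1\}$ with prior $\pi(h_0)=0.1$ (so $|h_0|_\pi=\log 10$) and $\pi(h_1)=0.9$ (so $|h_1|_\pi=\log(10/9)$). The source distribution is as in \autoref{sec:finite LB proof}: $y\sim\mathrm{Ber}(1/2)$, $x[0]=y\oplus\mathrm{Ber}(L^*)$ and $x[1]=y\oplus\mathrm{Ber}(L')$, both conditionally independent given $y$. Setting $h^*=h_0$ satisfies $L_D(h_0)=L^*$, $\pi(h_0)=0.1$, and $L_D(h_1)=L'$. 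The goal is to show that, with probability approaching one, $\MDL_{\lambda_m}(S)=h_1$, which by bounded convergence yields $\mathbb{E}[L_D(\MDL_{\lambda_m}(S))]\to L'$.

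The key step is to compare the two MDL objectives. By a Chernoff bound and Borel--Cantelli (exactly as in \autoref{lemma 10.1}) I would establish that $L_S(h_0)\to L^*$ and $L_S(h_1)\to L'$ almost surely, and hence $H(L_S(h_0))\to H(L^*)$ and $H(L_S(h_1))\to H(L')$ almost surely; in particular, for any fixed $\varepsilon>0$, with probability approaching one, $|H(L_S(h_i))-H(L(h_i))|<\varepsilon$ for $i=0,1$, and also $L_S(h_1)<1/2$ (since $L'<1/2$), so the constraint $L_S(h)\le 1/2$ in the definition of $\MDL_{\lambda_m}$ is not binding for $h_1$. Using the approximation $\tilde J_{\lambda_m}$ from \eqref{stirling}, I compute
\begin{align*}
\tilde J_{\lambda_m}(h_0,S)-\tilde J_{\lambda_m}(h_1,S)
&=\lambda_m(\log 10-\log\tfrac{10}{9})+m\bigl(H(L_S(h_0))-H(L_S(h_1))\bigr)\\
&=\lambda_m\log 9 + m\bigl(H(L^*)-H(L')\bigr)+o(m)
\end{align*}
almost surely. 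Since $L'<1/2$ we have $H(L')<1$, so $H(L^*)-H(L')>-1$, and combined with $\liminf \lambda_m/m>10$ I get the lower bound $\lambda_m\log 9 - m + o(m) \ge m(10\log 9 - 1)-o(m)=\Omega(m)>0$.

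Finally, I would translate this gap from the approximation $\tilde J$ to the exact objective $J_{\lambda_m}$. Because $|J_{\lambda_m}(h,S)-\tilde J_{\lambda_m}(h,S)|\le \log(m+1)=O(\log m)$ uniformly in $h$, the $\Omega(m)$ gap in $\tilde J$ implies an $\Omega(m)$ gap in $J_{\lambda_m}$ as well, so with probability approaching one $J_{\lambda_m}(h_1,S)<J_{\lambda_m}(h_0,S)$ and $\MDL_{\lambda_m}(S)=h_1$, giving $L_D(\MDL_{\lambda_m}(S))=L'$ on this event. Since the $0/1$ loss is bounded, this probability-one selection of $h_1$ yields $\mathbb{E}[L_D(\MDL_{\lambda_m}(S))]\to L'$. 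I do not foresee a hard obstacle: the only delicate point is checking that the $\Omega(m)$ gap is strictly larger than $m(H(L')-H(L^*))$, and the hypothesis $\liminf \lambda_m/m>10$ is precisely calibrated for this, since $10\log 9>1$ covers the worst case $H(L')-H(L^*)$ approaching (but not reaching) $1$ as $L'\uparrow 1/2$ and $L^*\downarrow 0$.
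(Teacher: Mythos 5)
Your proposal is correct and follows essentially the same route as the paper's proof: the same two-hypothesis construction with $\pi(h_0)=0.1$, $\pi(h_1)=0.9$, almost-sure convergence of the empirical entropies via Chernoff and Borel--Cantelli (the paper's \autoref{lemma 10.1}), the observation that $\liminf \lambda_m/m>10$ makes the prior gap $\lambda_m\log 9$ dominate the at-most-$m$ entropy gap, the $O(\log m)$ Stirling correction between $J_{\lambda_m}$ and $\tilde J_{\lambda_m}$, and the check that $L_S(h_1)\to L'<1/2$ so the constraint in \eqref{defn:MDL} is not binding. The only quibble is cosmetic: the threshold $10$ is not ``precisely calibrated'' but merely sufficient, since any $\liminf \lambda_m/m>1/\log 9$ would already beat the worst-case entropy gap of $1$.
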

\begin{proof}


    Consider the same source distribution described in \autoref{sec:finite LB proof} but with only two predictors $\{h_0, h_1\}$ with the prior $\pi(h_0)=0.1$ and $\pi(h_1)= 0.9$. 
    
     We prove that with probability one, $J_{\lambda_m}(h_1,S) < J_{\lambda_m}(h_0,S)$ as sample size increases, i.e.
    \begin{equation}\label{eqn_lambda_omega_m}
        \lambda_m |h_1|_{\pi} + \log {m \choose m L_S(h_1)} < \lambda_m |h_0|_{\pi} + \log {m \choose m L_S(h_0)}.
    \end{equation}
    We already saw that this is equivalent to
    \begin{equation} \label{eqn39}
        -\frac{\lambda_m \log 9}{m} + H(L_S(h_1)) - H(L_S(h_0))< \frac{-\log (m+1)}{m}.
    \end{equation}

    Since the right hand side of \eqref{eqn39} is deterministic and converges to zero as $m \rightarrow \infty$, it suffices to show that as $m \rightarrow \infty$, the limit of the left hand side of \eqref{eqn39} is negative with probability one. 
    
    By \autoref{lemma 10.1}, with probability one, the limit of the left hand side of \eqref{eqn_36} satisfies
    \begin{equation}
         \lim_{m \rightarrow \infty} -\frac{\lambda_m \log 9}{m} + H(L_S(h_1)) - H(L_S(h_0)) \leq -10\log 9 + H(L') - H(L^*) < 0,
    \end{equation}
    where we used $\liminf \frac{\lambda_m}{m}> 10$, and $H(L_S(h_1)) \rightarrow H(L')$ a.s. (by the same proof as \autoref{lemma 10.1}). Hence, with probability one, $J_{\lambda_m}(h_1,S) < J_{\lambda_m}(h_0,S)$ as sample size increases.

    It is important to note that in the definition of $\MDL_{\lambda}$, we also require the selected hypothesis $h$ to satisfy $L_S(h) \leq
    \frac{1}{2}$ (as in equation \eqref{defn:MDL}). And we just showed that with probability one $L_S(h_1) \rightarrow L' < \frac{1}{2}$, so $\MDL_{\lambda_m}$ will select $h_1$ as $m \rightarrow \infty$. This implies that $\mathbb{E} L_D(\MDL_{\lambda_m}(S)) \rightarrow L'$ as $m \rightarrow \infty$.  This completes the proof for \autoref{lambda>>m}.

    
\end{proof}
\section{Tight Bounds on the Binomial CDF, and the Minimum of i.i.d Binomials, in terms of KL-Divergence}\label{AppendixC}
In both our upper and lower bounds, we rely on a tight bound on the minimum of i.i.d (scaled) Binomials.  These follow standard union bound arguments applied to a tight version of Sanov's Theorem, as presented in an accompanying note \cite{binomials}, and  reproduce here for completeness. 

We first provide a tight upper {\em and} lower bound on the binomial tail:
\begin{lemma}[Binomial tail]
\label{lemma}
    Let $X \sim \frac{1}{n}\textnormal{Bin}(n, p)$ be a scaled Binomial random variable. Then for $a \leq p$,
    \[
    \log \mathbb{P}(X \leq a) \in -n \KL(a\Vert p) \pm \left(4\log(n+1) + \left[\log \frac{p}{1-p}\right]_+\right),
    \]   where $\KL(\alpha\Vert \beta)$ denotes $\KL(Ber(\alpha)\Vert Ber(\beta)) = \alpha\log\frac{\alpha}{\beta} + (1-\alpha)\log\frac{1-\alpha}{1-\beta}$.
\end{lemma}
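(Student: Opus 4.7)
The plan is to establish the two directions of the bound separately, by rather different techniques. For the upper bound I would invoke the standard Chernoff / Cramér bound: for $a \leq p$, optimizing the exponential Markov inequality $\mathbb{P}(nX \leq an) \leq \inf_{s \geq 0} e^{-san}(pe^{-s} + 1 - p)^n$ over $s$ yields the classical estimate $\mathbb{P}(X \leq a) \leq 2^{-n\KL(a \Vert p)}$. Taking logarithms gives $\log \mathbb{P}(X \leq a) \leq -n\KL(a \Vert p)$, which is in fact stronger than the stated upper endpoint and requires no part of the error term.

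For the lower bound, the strategy will be to bound $\mathbb{P}(X \leq a)$ below by a single well-chosen term in the defining sum, and then to control the resulting discretization error in $\KL$. Setting $k = \lfloor an \rfloor$, we have $\mathbb{P}(X \leq a) \geq \binom{n}{k} p^k (1-p)^{n-k}$; the standard ``method of types'' estimate $\binom{n}{k} \geq 2^{n H(k/n)}/(n+1)$ combined with the identity $n H(q) + nq \log p + n(1-q)\log(1-p) = -n \KL(q \Vert p)$ gives
\begin{equation*}
\log \mathbb{P}(X \leq a) \;\geq\; -\log(n+1) \;-\; n \KL(k/n \Vert p).
\end{equation*}
It will then remain to upper bound $n[\KL(k/n \Vert p) - \KL(a \Vert p)]$ by $\log n + [\log\tfrac{p}{1-p}]_+ + O(1)$, which I would do by splitting on the size of $a$.

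When $a \geq 1/n$ we have $k \geq 1$ and $|k/n - a| \leq 1/n$, so a mean-value argument together with the elementary bound $\big|\tfrac{d}{dq}\KL(q \Vert p)\big| = \big|\log\tfrac{q(1-p)}{p(1-q)}\big| \leq \log(1/q) + \log(1/(1-p))$ (valid for $q \leq p$) will suffice, since $\log(1/q) \leq \log n$ on $[1/n, p]$ and $\log(1/(1-p)) \leq [\log\tfrac{p}{1-p}]_+ + 1$ in all cases. The only subtle case is $a < 1/n$, where $k = 0$, $\mathbb{P}(X \leq a) = (1-p)^n = 2^{-n \KL(0 \Vert p)}$ exactly, and the derivative of $\KL(q\Vert p)$ is unbounded at $q=0$; here I would instead expand
\begin{equation*}
\KL(0 \Vert p) - \KL(a \Vert p) \;=\; a\log\tfrac{1}{a} \;+\; a\log\tfrac{p}{1-p} \;+\; (1-a)\log\tfrac{1}{1-a},
\end{equation*}
and use $na \leq 1$ to bound the first term by $\log n + O(1)$ (via $u\log(1/u) = O(1)$ for $u \in [0,1]$), the third by $O(1)$, and the second by $[\log\tfrac{p}{1-p}]_+$ (since it is non-positive whenever $p \leq 1/2$). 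The main obstacle throughout will be the bookkeeping needed to verify that only the one-sided quantity $[\log\tfrac{p}{1-p}]_+$, and not $|\log\tfrac{p}{1-p}|$ (which blows up as $p \to 0$), appears in the final error; this works precisely because $\{X \leq a\}$ is a one-sided deviation, so only a one-sided bound on $\tfrac{d}{dq}\KL(q \Vert p)$ is needed.
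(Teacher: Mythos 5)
Your proposal is correct, and its overall architecture matches the paper's: an exponential upper bound on the tail plus a lower bound obtained from the single lattice point $k=\lfloor an\rfloor$ via the type-class estimate $\binom{n}{k}\geq 2^{nH(k/n)}/(n+1)$, followed by control of the discretization error $n[\KL(k/n\Vert p)-\KL(a\Vert p)]$. The differences are in the tools used at two spots. For the upper bound you invoke the Chernoff--Hoeffding bound $\mathbb{P}(X\leq a)\leq 2^{-n\KL(a\Vert p)}$, which is in fact sharper than the paper's route through Sanov's theorem (the paper pays an extra $2\log(n+1)$ there); either suffices for the stated two-sided interval. For the discretization error the paper uses the exact identity $\KL(\tilde a\Vert p)-\KL(a\Vert p)=(a-\tilde a)\log\tfrac{p}{1-p}+H(a)-H(\tilde a)$ with $a-\tilde a<1/n$, bounding the first term by $\tfrac1n\left[\log\tfrac{p}{1-p}\right]_+$ and the entropy difference by $\tfrac2n\log n$; this handles $k=0$ with no case split. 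Your mean-value argument needs the separate $a<1/n$ case precisely because $\tfrac{d}{dq}\KL(q\Vert p)$ blows up at $q=0$, but your explicit expansion of $\KL(0\Vert p)-\KL(a\Vert p)$ there is correct, and your accounting does yield only the one-sided factor $\left[\log\tfrac{p}{1-p}\right]_+$ with constants comfortably absorbed into $4\log(n+1)$, so the argument closes. In short: same proof skeleton, a slightly tighter upper bound on your side, and a slightly cleaner discretization step on the paper's side.
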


\begin{proof}
    We write $X = \frac{1}{n}\sum_{i = 1}^n X_i$, where $X_i \overset{\mathrm{iid}}{\sim} \textnormal{Ber}(p)$, and so $X_1, X_2, \cdots, X_n$ is a sequence of $n$ symbols from the alphabet $\mathcal{X} = \{0,1\}$ with type $(1-X,X)$. Denote the true distribution $Q=\textnormal{Ber}(p)$. 
    
    The upper bound follows directly from Sanov's theorem \citep{TC}:
    \begin{equation}\label{eqn1}
        \log \mathbb{P}(X \leq a) \leq -n \KL(a\Vert p) + 2\log(n+1).
    \end{equation}
    To get a finite sample lower bound, we round $a$ to a multiple of $1/n$.  That is, let $k=\lfloor a n \rfloor$ and $\tilde{a}=k/n$, so that $a-1/n < \tilde{a} \leq a$.
   
    Let $\mathcal{P}_n = \{(P(0), P(1)): (\frac{0}{n}, \frac{n}{n}),(\frac{1}{n}, \frac{n-1}{n}), \cdots, (\frac{n}{n}, \frac{0}{n}) \}$ be the set of types with denominator $n$, and $E = \{P: P(1) \leq a\}$. Then the type $P_{\tilde{a}} = (1-\tilde{a}, \tilde{a})$ lies in the intersection $E \cap \mathcal{P}_n$.

   Given the type $P \in \mathcal{P}_n$, let $T(P) = \{x \in \mathcal{X}^n: P_x = P\}$ denote the type class of $P$, which is the set of sequences of length $n$ and and type $P$. Then, by adapting equations (11.104) to (11.106) in the lower bound proof of \cite{TC}, we have:
    
    \begin{equation*}
         \begin{split}
        \mathbb{P}(X \leq a) = Q^n(E) &= \sum_{P \in E \cap \mathcal{P}_n} Q^n\left( T \left( P \right) \right)\\
        &\geq Q^n \left(T\left(P_{\tilde{a}}\right)\right)\\
        &\geq \frac{1}{(n+1)^2} 2^{-n\KL(\tilde{a}\Vert p)}.
    \end{split}
    \end{equation*}

    Taking the logarithm on both sides yields:
    \begin{equation}
     \log \mathbb{P}(X \leq a) \geq -2\log(n+1) - n\KL(\tilde{a}\Vert p). \tag{*} \label{*}
    \end{equation}

        Since $a - \tilde{a} < 1/n$, $H(a) - H(\tilde{a}) < H(\frac{1}{n}) < \frac{2}{n} \log n$. This implies that $\KL(\tilde a\Vert p) - \KL(a\Vert p)= (a-\tilde{a})\log \frac{p}{1-p} + H(a) - H(\tilde{a}) \leq \frac{1}{n} \left[ \log \frac{p}{1-p} \right]_+ + \frac{2}{n} \log n$. Plugging this in the inequality \eqref{*} yields
    \begin{align}
        \log \mathbb{P}(X \leq a) &\geq -2\log(n+1) - n\KL(\tilde{a}\Vert p)\notag\\
        &\geq -2\log(n+1) - \left(n\KL(a\Vert p) + 2\log n + \left[\log \frac{p}{1-p}\right]_+ \right)\notag\\
        &\geq -n \KL(a\Vert p) - 4\log(n+1) - \left[\log \frac{p}{1-p}\right]_+.\label{eq:cdflower}
    \end{align}
    
    The upper bound \eqref{eqn1} and lower bound \eqref{eq:cdflower} together yield the desired result.
\end{proof}
Next, we use the finite sample bound on the Binomial CDF to prove the following concentration bounds of the minimum of i.i.d Binomials in terms of KL divergence.
\begin{theorem}[minimum of i.i.d Binomial]\label{thm12}
    Let $\{X_i\}_{i = 1}^r \overset{\mathrm{iid}}{\sim} \frac{1}{m}\textnormal{Bin}(m, p)$, $Z = \min_{i = 1, \cdots, r} X_i$. Given fixed confidence parameter $\delta \in (0,1)$, let $\Delta(\delta, p, m) = \log \frac{1}{\delta/2} + 4\log (m+1) + \left[ \log \frac{p}{1-p} \right]_+ $. If $\Delta(\delta, p, m) < \log r$, then with probability $1-\delta$, we have
    \[
    Z < p \text{, and } \KL(Z\Vert p) \in \frac{\log r \pm \Delta(\delta, p, m)}{m},
    \]
    except that if $\KL(0 \Vert p) < \frac{\log r -\Delta(\delta, p, m)}{m}$, then with probability $1-\delta$, $Z = 0$.
\end{theorem}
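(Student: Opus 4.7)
The plan is to reduce the claim to the finite-sample Binomial tail bound (the preceding Lemma) by exploiting the independence of the $X_i$'s. Using independence, $\mathbb{P}(Z > a) = (1 - \mathbb{P}(X_1 \leq a))^r$, while a union bound gives $\mathbb{P}(Z < a) \leq r \cdot \mathbb{P}(X_1 < a)$. The Binomial tail bound then lets us translate these two bounds into statements about $\KL(a\Vert p)$.

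First I would identify two thresholds $a_{\min} \leq a_{\max} < p$ defined implicitly by $m\KL(a_{\max}\Vert p) = \log r - \Delta$ and $m\KL(a_{\min}\Vert p) = \log r + \Delta$, which exist whenever $\Delta < \log r$ and the edge case $\KL(0 \Vert p) < (\log r - \Delta)/m$ does not hold. I would then show $Z \leq a_{\max}$ with probability at least $1 - \delta/2$: the lower half of the Binomial tail bound gives
\[
\mathbb{P}(X_1 \leq a_{\max}) \;\geq\; 2^{-m\KL(a_{\max}\Vert p) - 4\log(m+1) - [\log \tfrac{p}{1-p}]_+} \;=\; \frac{2/\delta}{r},
\]
so $\mathbb{P}(Z > a_{\max}) \leq (1 - (2/\delta)/r)^r \leq e^{-2/\delta} \leq \delta/2$. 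Symmetrically, the upper half of the Binomial tail bound gives $\mathbb{P}(X_1 < a_{\min}) \leq 2^{-m\KL(a_{\min}\Vert p) + 2\log(m+1)} \leq \delta/(2r)$, and a union bound over the $r$ trials yields $\mathbb{P}(Z < a_{\min}) \leq \delta/2$.

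Combining these two events by a union bound gives $a_{\min} \leq Z \leq a_{\max}$ with probability $\geq 1 - \delta$, and since $\KL(\cdot \Vert p)$ is continuous and strictly decreasing on $[0, p]$, this sandwich on $Z$ translates directly into $\KL(Z \Vert p) \in \frac{\log r \pm \Delta}{m}$ together with $Z < p$, as claimed.

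For the edge case where $\KL(0 \Vert p) < (\log r - \Delta)/m$ so that no valid positive $a_{\max}$ exists, I would argue $Z = 0$ directly: the exact identity $\mathbb{P}(X_1 = 0) = (1-p)^m = 2^{-m\KL(0\Vert p)} > 2^{\Delta}/r$ gives $\mathbb{P}(Z > 0) \leq (1 - 2^{\Delta}/r)^r \leq e^{-2^{\Delta}} \leq e^{-2/\delta} \leq \delta$ via $\Delta \geq \log(2/\delta)$. The only real obstacle is bookkeeping — verifying that the arithmetic of the specific choice of $\Delta$ cancels cleanly against the $4\log(m+1) + [\log\tfrac{p}{1-p}]_+$ slack in the Binomial tail bound, so that the two failure probabilities come out to exactly $\delta/2$ each — but this is a routine computation rather than a conceptual difficulty.
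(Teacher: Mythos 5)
Your proposal is correct and follows essentially the same route as the paper's proof: a union bound plus the upper binomial-tail estimate controls $\Pr(Z < a_{\min})$, independence plus $(1-x)^r \le e^{-rx}$ and the lower tail estimate control $\Pr(Z > a_{\max})$, and the $Z=0$ edge case is handled separately (your exact identity $\Pr(X_1=0)=2^{-m\KL(0\Vert p)}$ and your symmetric choice of thresholds at $(\log r \pm \Delta)/m$ are minor simplifications of the paper's asymmetric choice followed by a widening step). The only imprecision is your existence claim for $a_{\min}$: it requires $\KL(0\Vert p)\ge(\log r+\Delta)/m$, not merely the failure of the stated edge case, but in the intermediate regime $(\log r-\Delta)/m\le\KL(0\Vert p)<(\log r+\Delta)/m$ the upper bound $\KL(Z\Vert p)\le\KL(0\Vert p)<(\log r+\Delta)/m$ holds vacuously for every $Z<p$ (the paper covers this by taking $a=0$), so the argument goes through unchanged.
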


\begin{proof}
    Consider any interval $[a,b]$, such that $a\leq b< p$. Define the following events:
    \begin{eqnarray*}
        U &=& \{ \KL(Z\Vert p) \leq \KL(a\Vert p) \}, \\
        L &=& \{ \KL(Z\Vert p) \geq \KL(b\Vert p) \}, \\
        A &=& \{ Z \geq a \},\textrm{ and} \\
        B &=& \{ Z \leq b \}.
    \end{eqnarray*}
    \begin{center}
       \begin{tikzpicture}
            \draw[black, thick] (0,0) -- (10,0);
            \draw[thick] (2,0.2) -- (2,-0.2) node[below] {$a$};
            \draw[thick] (5,0.2) -- (5,-0.2) node[below] {$b$};
            \draw[thick] (7,0.2) -- (7,-0.2) node[below] {$p$};
            \draw[magenta, thick] (0,-1) -- (5,-1) node[midway, above] {$B$};
            \draw[cyan, thick] (2,-1.5) -- (10,-1.5) node[midway, above] {$A$};
            \draw[blue, thick] (2,-2) -- (9,-2) node[midway, above] {$U$};
            \draw[red, thick] (0,-2.5) -- (5,-2.5);
            \draw[red, thick] (8,-2.5) -- (10,-2.5);
            \node[above] at (4,-2.5) {\textcolor{red}{$L$}};
        \end{tikzpicture}    
    \end{center}
    By the monotonicity of the KL divergence, we have that $B \subseteq L$ and $A \cap B \subseteq U$ (but note that we generally \emph{don't} have $A \subseteq U$). This means that $A \cap B \subseteq U \cap L$, and consequently:
    \[
        \mathbb{P}(U \cap L) \geq  \mathbb{P}(A \cap B)  = 1 - \mathbb{P}(A^\comp) - \mathbb{P}(B^\comp).
    \]
    The theorem will follow from choices of $a$ and $b$ that help bound $\mathbb{P}(A^\comp)$ and $\mathbb{P}(B^\comp)$.
    
    Using the fact that $a<p$, along with the union bound and \autoref{lemma}, we have 
    \[
    \mathbb{P}(A^\comp) =\mathbb{P}(Z < a) \leq \mathbb{P}(Z \leq a) \leq r \cdot\mathbb{P}(X_1 \leq a) \leq r \cdot 2^{-m\KL(a\Vert p) + 4\log (m+1) + \left[ \log \frac{p}{1-p} \right]_+}.
    \]
    Suppose $\KL(0\Vert p) \geq \frac{\log r + \Delta(\delta, p, m)}{m}$. Since $\KL(p\Vert p) = 0$, and KL is continuous by its first argument, by intermediate value theorem, we can choose $0 \leq a <p$ such that
    \begin{equation} \label{eq:KLap} 
    \begin{split}
    \KL(a\Vert p)  &= \frac{\log r + \Delta(\delta, p, m)}{m}\\
    &= \frac{\log r + \log \frac{1}{\delta/2} + 4\log (m+1) + \left[ \log \frac{p}{1-p} \right]_+}{m},
    \end{split}
    \end{equation}
     which gives $r \cdot 2^{-m\KL(a\Vert p) + 4\log (m+1) + \left[ \log \frac{p}{1-p} \right]_+}=\delta/2$.
    Thus, by choosing $0 \leq a < p$ according to \eqref{eq:KLap}, we get $\mathbb{P}(A^\comp) \leq \frac{\delta}{2}$.
    
    If $\KL(0\Vert p) < \frac{\log r + \Delta(\delta, p, m)}{m}$, in other words, there is no $0 \leq a < p$ satisfying \eqref{eq:KLap}, then take $a = 0$. And in this case, the upper bound of the theorem trivially holds for any $Z < p$ because
    \begin{align*}
        \mathbb{P}\left(\KL(b \Vert p) \leq \KL(Z\Vert p) \leq \KL(0 \Vert p) < \frac{\log r + \Delta(\delta, p, m)}{m}\right) &\geq \mathbb{P}(0 \leq Z \leq b)
        = 1 - \mathbb{P}(Z > b).
    \end{align*}

    On the other hand, by the independence of data points, we have:
    \begin{equation}\label{eq:indep}
    \mathbb{P}(B^\comp) = \mathbb{P}(Z > b) = (1-\mathbb{P}(X_1 \leq b))^r.
    \end{equation}
    Using the inequality $\forall x \in [0,1], k > 0: (1-x)^k \leq e^{-kx}$ and \autoref{lemma}, we have
    \begin{equation}\label{eq:expbd}
    (1-\mathbb{P}(X_1 \leq b))^r \leq \exp\left(-r \cdot \mathbb{P}(X_1 \leq b)\right) \leq \exp \left(-r \cdot 2^{-m\KL(b\Vert p) - 4\log (m+1) - \left[ \log \frac{p}{1-p} \right]_+}\right).
    \end{equation}
    Suppose $\KL(0\Vert p) \geq \frac{\log r - \log \ln \frac{1}{\delta/2} - 4\log (m+1) - \left[ \log \frac{p}{1-p} \right]_+}{m}$, again by the intermediate value theorem, we can choose $0 \leq b < p$ such that 
    \begin{equation} \label{eq:KLbp}
    \KL(b\Vert p) = \frac{\log r - \log \ln \frac{1}{\delta/2} - 4\log (m+1) - \left[ \log \frac{p}{1-p} \right]_+}{m}, 
    \end{equation}
    which gives $\exp \left(-r \cdot 2^{-m\KL(b\Vert p) - 4\log (m+1) - \left[ \log \frac{p}{1-p} \right]_+}\right) = \delta/2$.
    Thus, by choosing $0 \leq b <p$ according to \eqref{eq:KLbp}, we get $\mathbb{P}(B^\comp) \leq \frac{\delta}{2}$.

    If $\KL(0\Vert p) < \frac{\log r - \log \ln \frac{1}{\delta/2} - 4\log (m+1) - \left[ \log \frac{p}{1-p} \right]_+}{m}$, in other words, there is no $0\leq b < p$ satisfying \eqref{eq:KLbp}, then by combining \eqref{eq:indep} and \eqref{eq:expbd},
    \[
        \mathbb{P}(Z > 0)\leq \exp \left(-r \cdot 2^{-m\KL(0\Vert p) - 4\log (m+1) - \left[ \log \frac{p}{1-p} \right]_+}\right)
        \leq \frac{\delta}{2}.
    \]
    So in this case, we have with probability $\geq \frac{\delta}{2} > 1 - \delta$, $Z = 0$.

    Therefore, by choosing $a$ and $b$ as above, we get
    \begin{equation*}
        \mathbb{P}\Big(\KL(Z\Vert p) \in \big(\KL(b\Vert p), \KL(a\Vert p)\big)\Big) = \mathbb{P}(U\cap L) \geq 1 - \mathbb{P}(A^\comp) - \mathbb{P}(B^\comp) \geq 1-\delta,    
    \end{equation*}
    with $\KL(a\Vert p)$ and $\KL(b\Vert p)$ as in \eqref{eq:KLap} and \eqref{eq:KLbp} respectively. Except that if $\KL(0\Vert p) < \frac{\log r - \Delta(\delta, p, m)}{m}$, then with probability $> 1 - \delta$, $Z = 0$.
 
    The theorem follows by widening this interval, to get a symmetric expression.
\end{proof}

\end{document}